\RequirePackage{fix-cm}
\PassOptionsToPackage{dvipsnames, svgnames, x11names, table}{xcolor}
\documentclass{article}
\usepackage{iclr2027_conference,times}
\iclrfinalcopy

\usepackage{paper_config} 
\usepackage{math_commands}
\usepackage{tikz}
\usepackage{amsmath,amssymb,bm,mathtools}
\usepackage{xcolor}
\usetikzlibrary{calc,positioning,arrows.meta,decorations.pathmorphing,
                decorations.markings,decorations.pathreplacing,shapes.geometric,
                shapes.symbols,shapes.misc,fit,backgrounds,matrix,patterns,
                shadings,fadings,angles,quotes,3d}

\definecolor{egnnBlue}{HTML}{1F6FB2}   
\definecolor{snnRed}{HTML}{C0392B}     
\definecolor{esnnGreen}{HTML}{2E8B57}  
\definecolor{scalarSlate}{HTML}{5B6B7B}
\definecolor{vectorTeal}{HTML}{0E8C8C} 
\definecolor{coordPurple}{HTML}{6C3FA0}
\definecolor{radialOrange}{HTML}{E07B22}
\definecolor{tanBlue}{HTML}{2E78C7}     
\definecolor{inkGray}{HTML}{2B2B2B}
\definecolor{softGray}{HTML}{8A8A8A}
\definecolor{panelGray}{HTML}{F4F5F7}
\definecolor{goodGreen}{HTML}{2E8B57}
\definecolor{badRed}{HTML}{C0392B}

\tikzset{
  >=Stealth,
  font=\small,
  panel/.style={rounded corners=3pt, draw=softGray, line width=0.6pt,
                fill=panelGray, inner sep=8pt},
  titlebar/.style={font=\bfseries\small, inner sep=2pt},
  block/.style={rounded corners=2pt, draw=inkGray, line width=0.7pt,
                fill=white, inner sep=5pt, align=center, minimum height=8mm},
  opbox/.style={block, fill=panelGray},
  scalarbox/.style={block, draw=scalarSlate, fill=scalarSlate!10,
                    text=scalarSlate},
  vectorbox/.style={block, draw=vectorTeal, fill=vectorTeal!10,
                    text=vectorTeal},
  coordbox/.style={block, draw=coordPurple, fill=coordPurple!10,
                   text=coordPurple},
  gnode/.style={circle, draw=inkGray, line width=0.7pt, fill=white,
                minimum size=6mm, inner sep=0pt, font=\footnotesize},
  gnodeA/.style={gnode, fill=egnnBlue!18, draw=egnnBlue},
  gnodeB/.style={gnode, fill=esnnGreen!18, draw=esnnGreen},
  gedge/.style={line width=1.1pt, draw=softGray},
  flow/.style={-{Stealth[length=2.6mm,width=2.2mm]}, line width=0.9pt,
               draw=inkGray},               
  flowfix/.style={flow, dashed, draw=softGray}, 
  transport/.style={-{Stealth[length=2.6mm]}, line width=1.0pt,
                    draw=vectorTeal},        
  vglyph/.style={-{Stealth[length=1.8mm]}, line width=1.0pt},
  ok/.style={text=goodGreen, font=\bfseries},
  bad/.style={text=badRed, font=\bfseries},
}

\providecommand{\cmark}{{\color{goodGreen}\ding{51}}}
\providecommand{\xmark}{{\color{badRed}\ding{55}}}

\providecommand{\ballnode}[4]{%
  \shade[ball color=#3] #1 circle (#2);
  \draw[#3!55!black, line width=0.4pt] #1 circle (#2);
  \node[font=\footnotesize\bfseries, text=white] at #1 {#4};}

\providecommand{\glyphCompress}[2]{%
  \begin{scope}[shift={#1}]
    \draw[#2,line width=0.8pt,fill=#2!14] (0,0) circle (0.30);
    \foreach \a in {0,45,...,315}{\draw[#2,-{Stealth[length=1.3mm]},line width=0.7pt]
       (\a:0.52)--(\a:0.36);}
  \end{scope}}
\providecommand{\glyphShear}[2]{%
  \begin{scope}[shift={#1}]
    \draw[softGray,dashed,line width=0.4pt] (-0.34,-0.30) rectangle (0.34,0.30);
    \draw[#2,line width=0.9pt,fill=#2!14]
       (-0.58,-0.30)--(0.10,-0.30)--(0.58,0.30)--(-0.10,0.30)--cycle;
    \draw[#2,-{Stealth[length=1.3mm]},line width=0.7pt] (-0.05,0.42)--(0.42,0.42);
    \draw[#2,-{Stealth[length=1.3mm]},line width=0.7pt] (0.05,-0.42)--(-0.42,-0.42);
  \end{scope}}
\providecommand{\glyphTorsion}[2]{%
  \begin{scope}[shift={#1}]
    \draw[softGray,dashed,line width=0.4pt] (0,0) circle (0.34);
    \draw[#2,-{Stealth[length=1.7mm]},line width=0.95pt] (40:0.40) arc (40:300:0.40);
    \fill[#2] (0,0) circle (0.028);
  \end{scope}}

\title{Equivariant Sheaf Neural Networks: Learning Geometric Transport on Graphs}

\author{
  \textbf{Alessio Borgi}\textsuperscript{\rm 1,3}\thanks{Equal contribution. \textsuperscript{\textdagger}Corresponding author: \texttt{alessio.borgi@uniroma1.it, ab3352@cam.ac.uk}}\quad
  \textbf{Mario Severino}\textsuperscript{\rm 1,2}\footnotemark[1]\quad
  \textbf{Fabrizio Silvestri}\textsuperscript{\rm 3}\quad
  \textbf{Pietro Li\`o}\textsuperscript{\rm 1}
  \\
  \small\textsuperscript{\rm 1}Department of Computer Science and Technology, University of Cambridge \\
  \small\textsuperscript{\rm 2}Department of Information Engineering, University of Padua \\
  \small\textsuperscript{\rm 3}Department of Computer, Control and Management Engineering, Sapienza University of Rome
}

\begin{document}
\maketitle
\lhead{Preprint}

\begin{abstract}
Equivariant graph neural networks provide a principled way to model geometric
systems, but efficient first-order architectures remain limited in how vector
information can be transformed as it moves across a graph. We introduce
\textsc{ESNN}, an Equivariant Sheaf Neural Network that enriches this
interaction by learning directed, matrix-valued transport between neighboring
vector features while preserving exact Euclidean equivariance. Rather than
increasing the order of the representation, ESNN keeps scalar and vector
features first-order and places the additional geometric flexibility in the
edge transport itself. We characterize this transport theoretically, showing
that when relative displacement is the only covariant geometric input, every
linear $O(n)$-equivariant map decomposes into independent radial and tangential
components, while learned covariant features enable richer
feature-conditioned transformations. We also introduce controlled symmetry
relaxation for systems with a preferred ambient direction, which may be
prescribed or inferred from data while recovering full $E(n)$-equivariance
when the directional pathway is inactive. Across particle dynamics,
mesh-based simulation, point-cloud classification, and molecular property
prediction, ESNN improves dynamics prediction, recovers the
gravity axis when symmetry is broken, yields substantial gains on selected
mesh tasks and long-horizon rollouts, and remains robust to unseen rotations.
These results show that learning how geometric information is transported
across edges offers a complementary route to expressive equivariant message
passing without requiring higher-order representations.
\end{abstract}

\section{Introduction}
\label{sec:intro}

Many physical systems, from molecular dynamics to fluid mechanics, are naturally embedded in an $n$-dimensional Euclidean space and governed by the symmetries of the Euclidean group $E(n)$~\citep{satorras2022enequivariantgraphneural,
brandstetter2021geometric, du2023new, wang2022visnet,
aykent2025gotennet}. When represented as geometric graphs, their node states may combine invariant scalar attributes with vector or tensor features that transform with the geometry \citep{schutt2021equivariant, simeon2306tensornet,
liao2024equiformerv2}. A physically consistent model should therefore respond predictably to translations, rotations, and, when appropriate, reflections.
Equivariance constrains this transformation behavior, but it does not by itself determine how geometric information should be exchanged between neighboring nodes. In many physical systems, such interactions are inherently
direction-dependent: longitudinal and transverse responses, shear, and anisotropic propagation depend on relative orientation rather than on pairwise
distance alone. Geometric message passing should therefore capture this directional structure while preserving exact $E(n)$-equivariance.

\begin{figure}[t]
\centering
\resizebox{\textwidth}{!}{%
\input{img/fig1_egnn_snn_esnn}%
}
\caption{
\emph{Edge-wise geometric transport in EGNN-style message passing, generic
sheaf neural networks, and ESNNs.} EGNN-style models construct geometric interactions from invariant scalar coefficients and relative displacements, while generic sheaf networks allow matrix-valued transformations between local feature spaces without necessarily
respecting the transformation law of geometric vectors. ESNN combines
matrix-valued transport with exact equivariance by separating an
$O(n)$-covariant spatial action from invariant channel mixing. The deformation
motifs illustrate edge-wise spatial actions and are not intended as claims about
the full expressive capacity of each architecture.
}
\label{fig:esnn-teaser}
\vspace{-0.6cm}
\end{figure}

Existing equivariant architectures capture directional structure through different mechanisms. Cartesian models keep the representation space simple, working
with invariant scalars and first-order vectors
~\citep{satorras2022enequivariantgraphneural,schutt2021equivariant}, while steerable architectures increase the richness of the propagated representations, using higher-order features, spherical harmonics, and tensor products to capture more detailed angular structure
~\citep{thomas2018tensor,batatia2022design,batatia2022mace}.
A complementary possibility is to keep the feature representation first-order and instead enrich the way information is transformed as it passes between neighboring nodes. This is precisely the perspective suggested by Sheaf Neural Networks (SNNs), where interactions between neighboring nodes are mediated by learned linear maps between local feature spaces ~\citep{hansen2020sheaf,bodnar2022neural}. These maps provide matrix-valued edge interactions, offering a natural mechanism for richer vector transport, but generic sheaf parameterizations do not account for the Euclidean transformation laws of geometric vector features.

To bridge this gap, we propose \emph{Equivariant Sheaf Neural Networks (ESNN)}, a family of equivariant graph neural networks that learn matrix-valued transport between neighboring vector features while preserving exact $E(n)$-equivariance. ESNN factorizes each transport into an
$O(n)$-covariant\footnote{A matrix-valued spatial transport function $S_{ij}$ is $O(n)$-covariant if, for every $Q\in O(n)$, transforming its geometric inputs by $Q$ induces $S'_{ij}=Q S_{ij}Q^\top$.} spatial action and invariant
mixing across feature channels. This makes it possible to model anisotropic interactions while remaining entirely within first-order scalar and vector representations. A particularly simple instantiation distinguishes the component of a vector message along the relative displacement from the
component orthogonal to it, and transforms the two independently. We show that, when the relative displacement is the only covariant geometric input, this radial--tangential decomposition characterizes the full class of linear
$O(n)$-equivariant transports.

ESNN further accommodates systems in which external structure, such as gravity or background flow, selects a preferred direction and reduces the symmetry of the dynamics. This direction may be fixed from prior knowledge or learned as a global model parameter, while a learnable relaxation coefficient controls how strongly it affects the transport. When the coefficient is zero, the model is exactly $E(n)$-equivariant; when the directional signal is used, the symmetry is reduced to the subgroup that preserves the preferred direction.

\noindent\textbf{Contributions.}
Our main contributions are:
\begin{itemize}
\vspace{-0.1cm}
\item We introduce \emph{Equivariant Sheaf Neural Networks (ESNN)}, a
first-order Cartesian framework that learns matrix-valued transport of vector
features across edges while preserving exact $E(n)$-equivariance.
\vspace{-0.1cm}
\item We develop a family of equivariant transport mechanisms with increasing
geometric flexibility, from identity and isotropic transformations to
feature-dependent rotations and anisotropic radial--tangential transport.
We show that, when relative displacement is the only covariant geometric input,
the radial--tangential form captures the complete class of linear
$O(n)$-equivariant transports.
\vspace{-0.1cm}
\item We introduce controlled symmetry relaxation for systems with a preferred
ambient direction, which may be prescribed or learned from data. A learnable
coefficient controls its influence: at zero, full $E(n)$-equivariance is
recovered exactly; when active, equivariance is retained to the subgroup that
preserves the preferred direction.
\vspace{-0.1cm}
\item We evaluate ESNN across particle dynamics, mesh-based physical simulation,
point-cloud classification, and molecular property prediction, testing richer
transport under full symmetry, data-driven recovery of a symmetry-breaking
direction, and transfer across distinct geometric domains.
\end{itemize}

\vspace{-0.2cm}
\section{Related Work and Background}
\label{sec:back}
\vspace{-0.2cm}

\noindent\textbf{Related Work.}
ESNN lies at the intersection of equivariant message passing and sheaf-based graph learning, combining the geometric inductive biases of the former with the matrix-valued transport perspective of the latter. Cartesian architectures such as EGNN~\citep{satorras2022enequivariantgraphneural} and scalar--vector models such as PaiNN and GVP~\citep{schutt2021equivariant,jing2021equivariant}
achieve efficient Euclidean equivariance using invariant scalars and low-order covariant features. A complementary line of work develops steerable models, which represent features according to how they transform under rotations and build
equivariant interactions by combining these representation types in symmetry-preserving ways
~\citep{thomas2018tensor,fuchs2020se,brandstetter2021geometric,batatia2022mace,liao2024equiformerv2}.
ESNN retains the first-order Cartesian setting, but increase the expressivity of the edge interaction by learning how neighboring vector features are transformed before aggregation. This perspective connects naturally to SNNs, which generalize scalar edge weighting to learned linear maps between local feature spaces~\citep{hansen2020sheaf,bodnar2022neural}. Connection-based sheaf models have considered orthogonal geometric maps~\citep{barbero2022sheaf}, while recent work has extended sheaf-based propagation to directional and asymmetric
interactions~\citep{ribeiro2025cooperative,fiorini2025sheaves}. Copresheaf formulations provide a related view in which information is propagated through directed maps between local feature spaces~\citep{hajij2025copresheaf}. ESNN builds on this local, matrix-valued perspective while imposing the Euclidean transformation laws required by geometric vector features. ESNN also allow this symmetry prior to be relaxed when the environment has a preferred direction. Related subequivariant models enforce the subgroup associated with a prescribed field~\citep{han2022learning}, while relaxed-equivariant methods learn departures from an underlying symmetry ~\citep{wang2022approximately,hofgard2024relaxed}. In ESNN, this additional degree of freedom is incorporated through a learnable preferred-direction signal, with exact $E(n)$-equivariance recovered when the signal is inactive. A broader discussion of these connections and related geometric and sheaf-based approaches is provided in Appendix~\ref{app:related}.

\subsection{Geometric Background}
\label{sec:background}

We consider a physical system represented by a graph
$\mathcal{G}=(\mathcal{V},\mathcal{E})$, where each node
$i\in\mathcal{V}$ has spatial coordinates
$\mathbf{x}_i\in\mathbb{R}^n$. Our goal is to construct message-passing operators that respect Euclidean symmetries while allowing interactions between neighboring nodes to depend on their relative geometry. We first specify the
relevant transformation laws and feature representation, and then introduce the cellular-sheaf construction that motivates the transport formulation of Section~\ref{sec:canonical_form}.

\noindent\textbf{Equivariance.}
The Euclidean group $E(n)=O(n)\ltimes\mathbb{R}^n$ acts on the coordinates
through rigid motions:
\begin{equation}
    \mathbf{x}_i
    \mapsto
    Q\mathbf{x}_i+\mathbf{t},
    \qquad
    Q\in O(n),\quad
    \mathbf{t}\in\mathbb{R}^n
\end{equation}
An $E(n)$-equivariant layer transforms its outputs consistently with this
action. Graph interactions are expressed through relative displacements $\mathbf{r}_{ij}
    =
    \mathbf{x}_i-\mathbf{x}_j$ for which translations cancel, and orthogonal transformations act as
$\mathbf{r}_{ij}\mapsto Q\mathbf{r}_{ij}$. For operations constructed from
relative geometry, translation equivariance is therefore built into the
representation, while the remaining geometric requirement is to enforce the
appropriate $O(n)$ transformation laws.

\noindent\textbf{Feature Representation.}
Each node carries invariant scalar features and covariant vector features:
\begin{equation}
    \mathbf{h}_i
    =
    (\mathbf{s}_i,\mathbf{V}_i)
\end{equation}
where $\mathbf{s}_i\in\mathbb{R}^{c_s}$ contains invariant scalar channels and
$\mathbf{V}_i\in\mathbb{R}^{n\times c_v}$ contains $c_v$ vector channels.
Each column of $\mathbf{V}_i$ is an $n$-dimensional vector that transforms as:
\begin{equation}
    \mathbf{s}_i
    \mapsto
    \mathbf{s}_i,
    \qquad
    \mathbf{V}_i
    \mapsto
    Q\mathbf{V}_i
\end{equation}
Accordingly, the corresponding node feature space is $\mathcal{F}(i)
    =
    \mathbb{R}^{c_s}
    \oplus
    \left(
        \mathbb{R}^{n}\otimes\mathbb{R}^{c_v}
    \right)$. In the sheaf interpretation below, $\mathcal{F}(i)$ plays the role of the node \emph{stalk}, namely the local vector space attached to node $i$. ESNN applies geometric transport to its vector component, while the scalar component is propagated through invariant message-passing operations.

\noindent\textbf{Cellular Sheaves and Transport.}
Standard graph message passing implicitly treats neighboring features as
elements of a common space that can be compared and aggregated directly.
Cellular sheaves generalize this picture by assigning local vector spaces to
nodes and edges and relating them through linear maps
~\citep{hansen2020sheaf,bodnar2022neural}. For an edge $e=\{i,j\}$, a
cellular sheaf assigns node stalks $\mathcal{F}(i)$ and $\mathcal{F}(j)$, an
edge stalk $\mathcal{F}(e)$, and restriction maps:
\begin{equation}
    \rho_{i\to e}:
    \mathcal{F}(i)\rightarrow\mathcal{F}(e),
    \qquad
    \rho_{j\to e}:
    \mathcal{F}(j)\rightarrow\mathcal{F}(e)
\end{equation}
These maps express the node features in a common edge space. After choosing an orientation for $e$, the degree-$0$ coboundary measures their disagreement:
\begin{equation}
    (\delta_{\mathcal{F}}\mathbf{h})_e
    =
    \rho_{i\to e}\mathbf{h}_i
    -
    \rho_{j\to e}\mathbf{h}_j
\end{equation}
With standard Euclidean inner products, these restriction maps define the sheaf Laplacian
$\mathbf{L}_{\mathcal{F}}
=
\delta_{\mathcal{F}}^{*}\delta_{\mathcal{F}}$,
which reduces to the ordinary graph Laplacian when all stalks share the same feature space and the restriction maps are identities. For an edge
$e=\{i,j\}$, its off-diagonal coupling is, up to the conventional minus sign:
\begin{equation}
    \rho_{i\to e}^{*}\rho_{j\to e}
    :
    \mathcal{F}(j)\rightarrow\mathcal{F}(i)
\end{equation}

This node-to-node coupling is the point of departure for ESNN: instead of learning separate incidence maps through an intermediate edge stalk, ESNN parameterizes the transport between neighboring nodes directly. For geometric vector features, this transport must additionally respect rotations and reflections, leading to the $O(n)$-equivariant construction developed in Section~\ref{sec:canonical_form}. In this sense, ESNN retains the sheaf perspective of local feature spaces connected by linear compatibility maps, while parameterizing the effective node-to-node transport directly; Appendix~\ref{app:sheaf_transport} gives the precise relationship with classical sheaf diffusion.

\vspace{-0.2cm}
\section{Equivariant Spatial Transport}
\label{sec:canonical_form}
\vspace{-0.2cm}

The vector features introduced in Section~\ref{sec:background} have two
distinct axes: the spatial dimension $\mathbb{R}^n$, which carries the
$O(n)$ action, and the channel dimension $\mathbb{R}^{c_v}$, on which the
group acts trivially. This distinction naturally separates the geometric
transformation of a vector message from the mixing of its feature channels.
ESNN therefore represents edge transport as a finite sum of left--right
actions: covariant spatial operators act on the spatial axis, while invariant
channel operators act on the channel axis. This allows the vector
representation of a neighboring node to be transformed before aggregation
without breaking equivariance.

Let $\mathcal{C}_{ij}$ denote the local edge context used to construct the
transport for the interaction $j\rightarrow i$. It collects the geometric
quantities available at that edge, including the relative displacement
$\mathbf{r}_{ij}$ and, for feature-dependent transports, the covariant vector
features at its endpoints. We write $Q\!\cdot\!\mathcal{C}_{ij}$ for the
simultaneous action of $Q$ on all covariant quantities in this context, and
$\mathbf{z}_{ij}$ for the invariant features derived from it.

\begin{definition}[\emph{Equivariant Transport Map}]
\label{def:canonical_transport}
For a directed interaction $j\rightarrow i$ and a fixed local edge context,
ESNN defines a transport map $\mathcal{T}_{i\leftarrow j}:
    \mathbb{R}^{n\times c_v}
    \longrightarrow
    \mathbb{R}^{n\times c_v}$ of the form:
\begin{equation}
    \mathcal{T}_{i\leftarrow j}(\mathbf{V}_j)
    =
    \sum_{k=1}^{K}
    \mathbf{S}_{ij}^{(k)}
    \mathbf{V}_j
    \mathbf{M}_{ij}^{(k)}
    \label{eq:canonical_transport}
\end{equation}
where $K$ denotes the number of components in the transport expansion. Each
term combines two actions:
\begin{itemize}
    \item The \emph{spatial operator}
    $\mathbf{S}_{ij}^{(k)}\in\mathbb{R}^{n\times n}$ acts on the spatial
    axis and transforms covariantly:
    \begin{equation}
        \mathbf{S}_{ij}^{(k)}(Q\!\cdot\!\mathcal{C}_{ij})
        =
        Q\,
        \mathbf{S}_{ij}^{(k)}(\mathcal{C}_{ij})
        Q^\top \qquad \forall \space\space Q\in O(n)
        \label{eq:covariance_condition}
    \end{equation}

    \item The \emph{channel operator}
    $\mathbf{M}_{ij}^{(k)}\in\mathbb{R}^{c_v\times c_v}$ mixes vector
    channels and is constructed from $O(n)$-invariant edge features:
    \begin{equation}
        \mathbf{M}_{ij}^{(k)}
        (Q\!\cdot\!\mathcal{C}_{ij}) =
        \mathbf{M}_{ij}^{(k)}
        (\mathcal{C}_{ij}) \qquad \forall \space\space Q\in O(n)
        \label{eq:channel_invariance_condition}
    \end{equation}
\end{itemize}
\end{definition}

These two transformation laws are sufficient to make the resulting transport
equivariant.

\begin{proposition}[Equivariance of the Transport Map]
\label{prop:transport_equivariance}
Suppose that, for every component $k$, the spatial operator satisfies:
\begin{equation}
    \mathbf{S}_{ij}^{(k)}(Q\!\cdot\!\mathcal{C}_{ij})
    =
    Q\mathbf{S}_{ij}^{(k)}(\mathcal{C}_{ij})Q^\top
\end{equation}
and the channel operator $\mathbf{M}_{ij}^{(k)}$ is unchanged under the
$O(n)$ action. Then the transport map in
Equation~\ref{eq:canonical_transport} is $O(n)$-equivariant:
\begin{equation}
    \mathcal{T}'_{i\leftarrow j}(Q\mathbf{V}_j)
    =
    Q\,\mathcal{T}_{i\leftarrow j}(\mathbf{V}_j)
    \label{eq:transport_equivariance}
\end{equation}
where $\mathcal{T}'_{i\leftarrow j}$ denotes the transport evaluated from the
transformed geometric inputs. (Proof is provided in Appendix~\ref{app:proof_transport_equivariance}).
\end{proposition}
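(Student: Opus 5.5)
The argument is a direct substitution, handled one component at a time and then summed. Write $\mathcal{C}'_{ij}=Q\!\cdot\!\mathcal{C}_{ij}$ for the transformed edge context. The transformed transport is then
\[
\mathcal{T}'_{i\leftarrow j}(\mathbf{W})=\sum_{k=1}^{K}\mathbf{S}^{(k)}_{ij}(\mathcal{C}'_{ij})\,\mathbf{W}\,\mathbf{M}^{(k)}_{ij}(\mathcal{C}'_{ij}),
\]
and we evaluate it at $\mathbf{W}=Q\mathbf{V}_j$.

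\emph{Step 1.} For each $k$, replace $\mathbf{S}^{(k)}_{ij}(\mathcal{C}'_{ij})$ by $Q\mathbf{S}^{(k)}_{ij}(\mathcal{C}_{ij})Q^\top$, which is the covariance hypothesis (Equation~\ref{eq:covariance_condition}).

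\emph{Step 2.} Replace $\mathbf{M}^{(k)}_{ij}(\mathcal{C}'_{ij})$ by $\mathbf{M}^{(k)}_{ij}(\mathcal{C}_{ij})$, which is the invariance hypothesis (Equation~\ref{eq:channel_invariance_condition}).

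\emph{Step 3.} The $k$-th summand is now $Q\mathbf{S}^{(k)}_{ij}Q^\top Q\mathbf{V}_j\mathbf{M}^{(k)}_{ij}$. Since $Q\in O(n)$, we have $Q^\top Q=I_n$, so the summand collapses to $Q\,\mathbf{S}^{(k)}_{ij}\mathbf{V}_j\mathbf{M}^{(k)}_{ij}$. Here we use associativity of the matrix product: the left factor acts on the spatial axis of size $n$ and the right factor on the channel axis of size $c_v$. Because $Q$ multiplies from the left and $\mathbf{M}$ from the right, no commutation between them is ever required.

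\emph{Step 4.} Pull $Q$ out of the finite sum by linearity of left multiplication. This gives $Q\,\mathcal{T}_{i\leftarrow j}(\mathbf{V}_j)$, which is Equation~\ref{eq:transport_equivariance}.

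No step is a real obstacle. The only point needing care is bookkeeping: we must be explicit that $\mathcal{T}'$ evaluates \emph{both} operator families at the transformed context, while the argument $\mathbf{V}_j$ is transformed separately. In particular, when $\mathcal{C}_{ij}$ contains $\mathbf{V}_j$ itself, as in the feature-dependent transports, we read $\mathcal{C}'_{ij}$ as including $Q\mathbf{V}_j$, and the hypotheses are stated for exactly that joint action. It may also be worth remarking that translations already drop out because $\mathcal{C}_{ij}$ is built from $\mathbf{r}_{ij}$. Together with the $O(n)$ result, this gives compatibility with the full $E(n)$ action on vector features.
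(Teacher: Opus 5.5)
Your proposal is correct and matches the paper's proof: substitute the covariance law $\mathbf{S}^{(k)}_{ij}(Q\!\cdot\!\mathcal{C}_{ij})=Q\mathbf{S}^{(k)}_{ij}(\mathcal{C}_{ij})Q^\top$ and the invariance of $\mathbf{M}^{(k)}_{ij}$, cancel $Q^\top Q=\mathbf{I}_n$ in each summand, and factor $Q$ out of the finite sum. Your closing remark about translations is harmless but goes beyond what the proposition claims, since it is stated only for $O(n)$.
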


For a fixed edge context $\mathcal{C}_{ij}$, the resulting transport is linear
in the vector feature being propagated. The spatial and channel operators are
themselves determined from $\mathcal{C}_{ij}$ and may therefore depend
non-linearly on the local node, edge, and geometric information. This allows
ESNN to adapt the transport to each interaction while retaining the
equivariance guaranteed by Proposition~\ref{prop:transport_equivariance}.

In ESNN, the invariant channel operators use the structured parameterization:
\begin{equation}
    \mathbf{M}_{ij}^{(k)}
    =
    \mathbf{W}^{(k)}
    \mathbf{D}\!\left(\mathbf{g}_{ij}^{(k)}\right)
    \label{eq:channel_mixing}
\end{equation}
where $\mathbf{W}^{(k)}\in\mathbb{R}^{c_v\times c_v}$ is learned and shared
across edges, while $\mathbf{D}(\mathbf{g}_{ij}^{(k)})$ is a diagonal gate
predicted from the invariant edge features $\mathbf{z}_{ij}$. The shared matrix mixes vector channels, while the edge-dependent gate adapts this mixing to the local interaction. Together with the spatial operators in Equation~\ref{eq:canonical_transport}, this yields a structured separation between two complementary roles: \emph{channel operators control how features are mixed}, while \emph{spatial operators control how vector messages are transformed geometrically}. Section~\ref{sec:transports} develops several choices for the spatial action, ranging from trivial and isotropic transport to orthogonal and radial--tangential transformations, while Appendix~\ref{app:canonical_derivations} provides a more detailed algebraic account of this spatial--channel decomposition.

\begin{table}[t]
\centering
\caption{\textbf{ESNN transport families.}
Each transport is an instantiation of Equation~\ref{eq:canonical_transport}.
The spatial operators act on the Euclidean dimension, while the channel
operators act on the $c_v$ vector channels.}
\label{tab:transport_families}
\resizebox{\textwidth}{!}{
\begin{tabular}{llll}
\toprule
\textbf{Transport} &
\textbf{Spatial operator} &
\textbf{Channel operator} &
\textbf{Geometric action} \\
\midrule
Identity &
$\mathbf{I}_n$ &
$\mathbf{I}_{c_v}$ &
Trivial transport \\

Diagonal &
$\lambda_{ij}\mathbf{I}_n$ &
$\mathbf{W}\mathbf{D}(\mathbf{g}_{ij})$ &
Isotropic scaling \\

Orthogonal &
$\mathbf{R}_{ij}\in SO(n)$ &
$\mathbf{W}\mathbf{D}(\mathbf{g}_{ij})$ &
Feature-dependent rotation \\

Radial--Tangential &
$\mathbf{P}^{\parallel}_{ij},\,\mathbf{P}^{\perp}_{ij}$ &
$\mathbf{W}_{\parallel}\mathbf{D}(\mathbf{g}^{\parallel}_{ij}),\,
 \mathbf{W}_{\perp}\mathbf{D}(\mathbf{g}^{\perp}_{ij})$ &
Independent longitudinal and transverse transport \\
\bottomrule
\end{tabular}
}
\end{table}

\begin{figure}[t]
\centering
\resizebox{\textwidth}{!}{%
\input{img/fig2_transport_types}%
}
\caption{
\emph{Principal ESNN transport families.}
The four constructions differ in their spatial action while their channel actions remain invariant under $O(n)$. Identity Transport leaves the vector representation unchanged;
Diagonal Transport applies an edge-dependent but spatially isotropic scaling; Orthogonal Transport learns a feature-conditioned spatial rotation; and Radial--Tangential Transport transforms components parallel and orthogonal to
the relative displacement independently. When the displacement is the only covariant geometric input, the Radial--Tangential form characterizes the complete class of linear $O(n)$-equivariant transports
(Theorem~\ref{thm:maximal_expressivity}).
}
\label{fig:transports}
\end{figure}

\vspace{-0.2cm}
\section{Equivariant Transport Maps}
\label{sec:transports}

We now instantiate the spatial--channel decomposition of
Section~\ref{sec:canonical_form} through four transport families, summarized in
Table~\ref{tab:transport_families} and illustrated in
Figure~\ref{fig:transports}. Each family corresponds to a different choice of
spatial operator $\mathbf{S}_{ij}^{(k)}$ in
Equation~\ref{eq:canonical_transport}, while the associated scalar
coefficients and channel gates are computed from $O(n)$-invariant features.
The constructions below act on covariant vector features
$\mathbf{V}_j\in\mathbb{R}^{n\times c_v}$; scalar features follow the
invariant message-passing pathway introduced with the full ESNN layer in
Section~\ref{sec:layer}.

\noindent\textbf{Identity Transport.}
\label{sec:transport_0}
The simplest choice is the \emph{Identity Transport}:
\begin{equation}
    \mathcal{T}_{i\leftarrow j}(\mathbf{V}_j)
    =
    \mathbf{V}_j
    \label{eq:identity_transport}
\end{equation}
corresponding to $\mathbf{S}_{ij}=\mathbf{I}_n$ and
$\mathbf{M}_{ij}=\mathbf{I}_{c_v}$. Vector features are therefore aggregated
in the common frame without an edge-dependent spatial transformation, giving the trivial-transport limit of ESNN.

\noindent\textbf{Diagonal Transport.}
\label{sec:transport_0.5}
A more expressive but still spatially isotropic choice is to set $\mathbf{S}_{ij}
    =
    \lambda_{ij}\mathbf{I}_n, \space\space$ and $\mathbf{M}_{ij}
    =
    \mathbf{W}\mathbf{D}(\mathbf{g}_{ij})$, where $\lambda_{ij}\in\mathbb{R}$ is predicted from invariant edge features.
The resulting \emph{Diagonal Transport} is:
\begin{equation}
    \mathcal{T}_{i\leftarrow j}(\mathbf{V}_j)
    =
    \lambda_{ij}
    (\mathbf{V}_j\mathbf{W})
    \mathbf{D}(\mathbf{g}_{ij})
    \label{eq:diagonal_transport}
\end{equation}
Here $\lambda_{ij}$ acts identically along every spatial direction,
$\mathbf{W}$ mixes vector channels, and $\mathbf{g}_{ij}$ adapts this mixing
to the local edge context. The transport can therefore vary from edge to edge
while remaining isotropic in physical space.

\noindent\textbf{Orthogonal Transport.}
\label{sec:transport_II}
Diagonal Transport can modulate a vector message but its spatial action remains
proportional to the identity. Orthogonal Transport introduces a non-trivial
edge-dependent spatial transformation by constructing a rotation from the
current vector features. We first form the cross-feature matrix and its
normalized counterpart:
\begin{equation}
    \mathbf{C}_{ij}
    =
    \mathbf{V}_j\mathbf{V}_i^\top
    \in\mathbb{R}^{n\times n}, \qquad \widetilde{\mathbf{C}}_{ij}
    =
    \frac{\mathbf{C}_{ij}}
    {\|\mathbf{C}_{ij}\|_F+\varepsilon} 
    \label{eq:cross_covariance}
\end{equation}
Its skew-symmetric component $\mathbf{\Omega}_{ij}
    =
    \widetilde{\mathbf{C}}_{ij}
    -
    \widetilde{\mathbf{C}}_{ij}^{\top}
    \in\mathfrak{so}(n)$ provides a generator of a rotation. An invariant edge
network predicts a scalar coefficient $\beta_{ij}$, from which we construct:
\begin{equation}
    \mathbf{R}_{ij}
    =
    \exp\!\left(\beta_{ij}\mathbf{\Omega}_{ij}\right)
    \in SO(n)
    \label{eq:orthogonal_map}
\end{equation}
The resulting \emph{Orthogonal Transport} is:
\begin{equation}
    \mathcal{T}_{i\leftarrow j}(\mathbf{V}_j)
    =
    \mathbf{R}_{ij}
    (\mathbf{V}_j\mathbf{W})
    \mathbf{D}(\mathbf{g}_{ij})
    \label{eq:orthogonal_transport}
\end{equation}

Because $\mathbf{R}_{ij}$ is inferred from the current vector representations, the spatial transformation adapts to the local feature geometry around the edge $(i,j)$. Once this local context is fixed, $\mathbf{R}_{ij}\in SO(n)$ acts linearly on the transported vector features.
Under a global orthogonal transformation, both its generator and matrix exponential transform by conjugation, giving the required equivariant
transport.

\begin{lemma}[$O(n)$-Equivariance of Orthogonal Transport]
\label{lem:family2_equiv}
Let $Q\in O(n)$ act on the vector features as
$\mathbf{V}_i\mapsto Q\mathbf{V}_i$. Then:
\begin{equation}
    \mathcal{T}'_{i\leftarrow j}(Q\mathbf{V}_j)
    =
    Q\,\mathcal{T}_{i\leftarrow j}(\mathbf{V}_j)
\end{equation}
where the primed transport is evaluated from the transformed geometric
features. (Proof is provided in Appendix~\ref{app:proof_lemma1}). 
\end{lemma}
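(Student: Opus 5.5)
The plan is to reduce the claim to Proposition~\ref{prop:transport_equivariance}. Orthogonal Transport is the case $K=1$ of Equation~\ref{eq:canonical_transport}, with spatial operator $\mathbf{S}_{ij}=\mathbf{R}_{ij}$ and channel operator $\mathbf{M}_{ij}=\mathbf{W}\mathbf{D}(\mathbf{g}_{ij})$. It therefore suffices to verify two things: that $\mathbf{R}_{ij}$ satisfies the covariance condition \eqref{eq:covariance_condition}, and that $\mathbf{M}_{ij}$ is invariant. The second is immediate. $\mathbf{W}$ is a shared constant, and $\mathbf{g}_{ij}$ is predicted from the invariant edge features $\mathbf{z}_{ij}$, so both are unchanged under $Q$. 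The same reasoning applies to the scalar coefficient $\beta_{ij}$, which is produced by an invariant edge network and hence satisfies $\beta'_{ij}=\beta_{ij}$.

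For covariance of $\mathbf{R}_{ij}$, I will track each stage of its construction under $\mathbf{V}_i\mapsto Q\mathbf{V}_i$, $\mathbf{V}_j\mapsto Q\mathbf{V}_j$.
\begin{itemize}
\item The cross-feature matrix transforms by conjugation: $\mathbf{C}'_{ij}=Q\mathbf{V}_j\mathbf{V}_i^\top Q^\top=Q\mathbf{C}_{ij}Q^\top$.
\item The Frobenius norm is invariant under orthogonal conjugation, since $\|Q\mathbf{C}Q^\top\|_F^2=\operatorname{tr}(Q\mathbf{C}^\top\mathbf{C}Q^\top)=\|\mathbf{C}\|_F^2$. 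So the normalizing denominator is unchanged, and $\widetilde{\mathbf{C}}'_{ij}=Q\widetilde{\mathbf{C}}_{ij}Q^\top$.
\item Transposition commutes with conjugation, $(Q\widetilde{\mathbf{C}}Q^\top)^\top=Q\widetilde{\mathbf{C}}^\top Q^\top$. Hence $\mathbf{\Omega}'_{ij}=Q\mathbf{\Omega}_{ij}Q^\top$.
\end{itemize}

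The one step needing slightly more care is passing the conjugation through the matrix exponential. I will use the power series: since $Q^\top Q=\mathbf{I}_n$, we have $(Q\mathbf{A}Q^\top)^m=Q\mathbf{A}^mQ^\top$ for every $m\ge 0$. Summing term by term (the series converges absolutely, so this is legitimate) gives $\exp(Q\mathbf{A}Q^\top)=Q\exp(\mathbf{A})Q^\top$. Applying this with $\mathbf{A}=\beta_{ij}\mathbf{\Omega}_{ij}$ and using $\beta'_{ij}=\beta_{ij}$ yields $\mathbf{R}'_{ij}=Q\mathbf{R}_{ij}Q^\top$.

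This holds for all $Q\in O(n)$, including reflections. This matters because $\mathbf{R}_{ij}$ lies in $SO(n)$ while $Q$ need not. The conjugation identity does not use $\det Q=1$, only $Q^\top Q=\mathbf{I}_n$, so no special treatment of reflections is required.

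With both hypotheses of Proposition~\ref{prop:transport_equivariance} established, the conclusion follows. For completeness one can also write it out directly:
\[
\mathcal{T}'_{i\leftarrow j}(Q\mathbf{V}_j)=Q\mathbf{R}_{ij}Q^\top Q\mathbf{V}_j\mathbf{W}\mathbf{D}(\mathbf{g}_{ij})=Q\,\mathcal{T}_{i\leftarrow j}(\mathbf{V}_j).
\]

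I do not expect a genuine obstacle here. The exponential-conjugation identity is the only non-mechanical step, and the rest is bookkeeping about which quantities are invariant.
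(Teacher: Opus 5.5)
Your proposal is correct and follows essentially the same route as the paper's proof. Both arguments show that $\mathbf{C}_{ij}$ transforms by conjugation, that the Frobenius normalization is invariant, that $\mathbf{\Omega}_{ij}$ is therefore covariant, that the matrix exponential respects conjugation when $\beta_{ij}$ is invariant, and that $\mathbf{W}\mathbf{D}(\mathbf{g}_{ij})$ is invariant, then finish by direct substitution. Framing the lemma as the $K=1$ case of Proposition~\ref{prop:transport_equivariance} and justifying the exponential identity through the power series are only presentational differences.
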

The term \emph{Orthogonal Transport} refers to the spatial factor
$\mathbf{R}_{ij}\in SO(n)$. The complete edge map also includes learned
channel mixing and edge-dependent gating and is therefore not, in general, an
orthogonal transformation of the full vector feature space.

\noindent\textbf{Radial--Tangential Transport.}
\label{sec:transport_III}
The transports above apply a single spatial transformation to the entire
vector message. Many geometric interactions, however, distinguish components
along an edge from those orthogonal to it. Radial--Tangential Transport makes
this distinction explicit using the relative displacement
$\mathbf{r}_{ij}=\mathbf{x}_i-\mathbf{x}_j$. For
$\mathbf{r}_{ij}\neq\mathbf{0}$, let $\widehat{\mathbf{r}}_{ij}
    =
    \frac{\mathbf{r}_{ij}}
    {\|\mathbf{r}_{ij}\|}$ and define the orthogonal projectors:
\begin{equation}
    \mathbf{P}^{\parallel}_{ij}
    =
    \widehat{\mathbf{r}}_{ij}
    \widehat{\mathbf{r}}_{ij}^{\top},
    \qquad
    \mathbf{P}^{\perp}_{ij}
    =
    \mathbf{I}_n-\mathbf{P}^{\parallel}_{ij}
    \label{eq:radial_tangential_projectors}
\end{equation}
The first extracts the component parallel to the edge and the second its
orthogonal complement. ESNN assigns an independent channel transformation to
each component:
\begin{equation}
    \mathbf{M}^{\parallel}_{ij}
    =
    \mathbf{W}_{\parallel}
    \mathbf{D}(\mathbf{g}^{\parallel}_{ij}), \qquad
    \mathbf{M}^{\perp}_{ij}
    =
    \mathbf{W}_{\perp}
    \mathbf{D}(\mathbf{g}^{\perp}_{ij})
\end{equation}
giving the \emph{Radial--Tangential Transport}:
\begin{equation}
    \mathcal{T}_{i\leftarrow j}(\mathbf{V}_j)
    =
    \mathbf{P}^{\parallel}_{ij}
    \mathbf{V}_j
    \mathbf{M}^{\parallel}_{ij}
    +
    \mathbf{P}^{\perp}_{ij}
    \mathbf{V}_j
    \mathbf{M}^{\perp}_{ij}
    \label{eq:transport_III}
\end{equation}
The longitudinal and transverse components can therefore be transformed
independently before aggregation, providing anisotropic transport while
remaining entirely within first-order Cartesian features. The projectors need not be materialized as dense $n\times n$ matrices. Their
action can be evaluated directly as:
\begin{equation}
    \mathbf{P}^{\parallel}_{ij}\mathbf{V}_j
    =
    \widehat{\mathbf{r}}_{ij}
    \left(
        \widehat{\mathbf{r}}_{ij}^{\top}\mathbf{V}_j
    \right),
    \qquad
    \mathbf{P}^{\perp}_{ij}\mathbf{V}_j
    =
    \mathbf{V}_j
    -
    \mathbf{P}^{\parallel}_{ij}\mathbf{V}_j
    \label{eq:implicit_projectors}
\end{equation}
The spatial projection therefore costs $\mathcal{O}(n c_v)$. With dense $c_v\times c_v$ channel transformations, the overall complexity becomes $\mathcal{O}(n c_v^2)$ up to constant factors, while remaining linear in the
spatial dimension $n$. Self-interactions require separate treatment because $\mathbf{r}_{ii}=\mathbf{0}$ does not define radial and tangential directions.
ESNN therefore handles self-information through the identity self-loop of the diffusion operator.

\begin{lemma}[$O(n)$-Equivariance of Radial--Tangential Transport]
\label{lem:family3_equiv}
For $\mathbf{r}_{ij}\neq\mathbf{0}$, the transport map in
Equation~\ref{eq:transport_III} is $O(n)$-equivariant.
In particular:
\begin{equation}
    \mathbf{P}^{\parallel}_{ij}
    \mapsto
    Q\mathbf{P}^{\parallel}_{ij}Q^\top,
    \qquad
    \mathbf{P}^{\perp}_{ij}
    \mapsto
    Q\mathbf{P}^{\perp}_{ij}Q^\top
\end{equation}
while the channel operators remain invariant. (Proof is provided in Appendix~\ref{app:proof_lemma2}).
\end{lemma}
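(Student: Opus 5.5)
The plan is to reduce the statement to Proposition~\ref{prop:transport_equivariance}, with $K=2$, $\mathbf{S}^{(1)}_{ij}=\mathbf{P}^{\parallel}_{ij}$ and $\mathbf{S}^{(2)}_{ij}=\mathbf{P}^{\perp}_{ij}$. That proposition then gives the equivariance identity $\mathcal{T}'_{i\leftarrow j}(Q\mathbf{V}_j)=Q\,\mathcal{T}_{i\leftarrow j}(\mathbf{V}_j)$ once two conditions are checked. First, both projectors must satisfy the covariance condition~\eqref{eq:covariance_condition}. Second, the channel operators $\mathbf{M}^{\parallel}_{ij},\mathbf{M}^{\perp}_{ij}$ must be invariant in the sense of~\eqref{eq:channel_invariance_condition}. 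The edge context here is $\mathcal{C}_{ij}$, containing $\mathbf{r}_{ij}$. Under $Q\in O(n)$ it transforms as $\mathbf{r}_{ij}\mapsto Q\mathbf{r}_{ij}$, with translations already cancelled as noted in Section~\ref{sec:background}.

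\textbf{Step 1 (unit direction).} Because $Q$ is orthogonal, $\|Q\mathbf{r}_{ij}\|=\|\mathbf{r}_{ij}\|$. The hypothesis $\mathbf{r}_{ij}\neq\mathbf{0}$ therefore gives $Q\mathbf{r}_{ij}\neq\mathbf{0}$, so the normalized direction is well defined both before and after the transformation, and $\widehat{\mathbf{r}}_{ij}\mapsto Q\widehat{\mathbf{r}}_{ij}$. This is the only place the hypothesis enters. It is also the only delicate point of the argument, since normalization is undefined at $\mathbf{0}$; that is exactly why the lemma excludes self-loops.

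\textbf{Step 2 (projectors).} Substituting into~\eqref{eq:radial_tangential_projectors} gives
\begin{equation*}
\mathbf{P}^{\parallel}_{ij}\mapsto (Q\widehat{\mathbf{r}}_{ij})(Q\widehat{\mathbf{r}}_{ij})^\top = Q\mathbf{P}^{\parallel}_{ij}Q^\top .
\end{equation*}
Using $QQ^\top=\mathbf{I}_n$, we also get $\mathbf{P}^{\perp}_{ij}\mapsto \mathbf{I}_n - Q\mathbf{P}^{\parallel}_{ij}Q^\top = Q(\mathbf{I}_n-\mathbf{P}^{\parallel}_{ij})Q^\top = Q\mathbf{P}^{\perp}_{ij}Q^\top$. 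These are precisely the displayed transformation rules in the statement.

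\textbf{Step 3 (channel operators).} The matrices $\mathbf{W}_{\parallel}$ and $\mathbf{W}_{\perp}$ are learned parameters shared across edges, so they do not depend on the geometry at all. The gates $\mathbf{g}^{\parallel}_{ij}$ and $\mathbf{g}^{\perp}_{ij}$ are functions of the invariant edge features $\mathbf{z}_{ij}$, such as $\|\mathbf{r}_{ij}\|$, scalar node features, and inner products of vector channels. Each of these is unchanged under $Q$ because $(Q\mathbf{a})^\top(Q\mathbf{b})=\mathbf{a}^\top\mathbf{b}$. Hence $\mathbf{M}^{\parallel}_{ij}$ and $\mathbf{M}^{\perp}_{ij}$ are invariant.

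Invoking Proposition~\ref{prop:transport_equivariance} now concludes the proof. For a self-contained check, one can also compute directly:
\begin{equation*}
Q\mathbf{P}^{\parallel}_{ij}Q^\top Q\mathbf{V}_j\mathbf{M}^{\parallel}_{ij} + Q\mathbf{P}^{\perp}_{ij}Q^\top Q\mathbf{V}_j\mathbf{M}^{\perp}_{ij} = Q\,\mathcal{T}_{i\leftarrow j}(\mathbf{V}_j).
\end{equation*}
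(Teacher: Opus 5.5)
Your proposal is correct and follows essentially the same route as the paper's proof. You use norm preservation to get $\widehat{\mathbf{r}}_{ij}\mapsto Q\widehat{\mathbf{r}}_{ij}$, deduce conjugation of $\mathbf{P}^{\parallel}_{ij}$ and then of $\mathbf{P}^{\perp}_{ij}=\mathbf{I}_n-\mathbf{P}^{\parallel}_{ij}$, note the invariance of the channel maps, and finish by direct substitution. Framing this as Proposition~\ref{prop:transport_equivariance} with $K=2$ is only a cosmetic difference: the paper's closing computation is that proposition's argument specialized to the two projectors.
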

Radial--Tangential Transport is not merely one equivariant choice among many. When the relative displacement is the only covariant geometric input, it captures the most general linear transport compatible with full
$O(n)$-equivariance.

\begin{theorem}
\label{thm:maximal_expressivity}
Let $n\geq2$, and consider a linear transport on
$\mathbb{R}^{n}\otimes\mathbb{R}^{c_v}$ whose only covariant geometric
conditioning variable is a non-zero relative displacement $\mathbf{r}_{ij}$,
with arbitrary additional $O(n)$-invariant scalar conditioning. Every such
$O(n)$-equivariant transport can be written as:
\begin{equation}
    \mathcal{T}_{i\leftarrow j}(\mathbf{V}_j)
    =
    \mathbf{P}^{\parallel}_{ij}
    \mathbf{V}_j
    \mathbf{A}_{ij}
    +
    \mathbf{P}^{\perp}_{ij}
    \mathbf{V}_j
    \mathbf{B}_{ij}
\end{equation}
for invariant channel endomorphisms
$\mathbf{A}_{ij},\mathbf{B}_{ij}\in
\mathbb{R}^{c_v\times c_v}$. (Proof is given in Appendix~\ref{app:proof_thm1} using the stabilizer of a non-zero displacement. )
\end{theorem}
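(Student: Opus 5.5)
Our plan is to formalize the transport as a function $\mathbf{r}\mapsto \mathcal{T}_{\mathbf{r}}\in\mathrm{End}(\mathbb{R}^n\otimes\mathbb{R}^{c_v})$, depending also on invariant scalars $\mathbf{z}$ that we hold fixed. The equivariance requirement is $\mathcal{T}_{Q\mathbf{r}}(Q\mathbf{V}) = Q\,\mathcal{T}_{\mathbf{r}}(\mathbf{V})$ for all $Q\in O(n)$. Specializing to $Q\in H:=\mathrm{Stab}(\mathbf{r})=\{Q\in O(n): Q\mathbf{r}=\mathbf{r}\}$ shows that the single linear map $T:=\mathcal{T}_{\mathbf{r}}$ commutes with the action $\mathbf{V}\mapsto Q\mathbf{V}$ of $H$. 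The approach is to classify the $H$-equivariant endomorphisms of $\mathbb{R}^n\otimes\mathbb{R}^{c_v}$ and then show that the resulting coefficients are themselves invariant.

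\textbf{Step 1 (decomposition).} Under $H\cong O(n-1)$, acting as the identity on $\mathrm{span}(\widehat{\mathbf{r}})$ and as the standard representation on $\widehat{\mathbf{r}}^\perp$, we have $\mathbb{R}^n = L\oplus L^\perp$. Here $L$ is the trivial one-dimensional representation and $L^\perp\cong\mathbb{R}^{n-1}$ is the standard representation of $O(n-1)$. Tensoring with the trivial channel space gives $\mathbb{R}^n\otimes\mathbb{R}^{c_v}\cong (L\otimes\mathbb{R}^{c_v})\oplus(L^\perp\otimes\mathbb{R}^{c_v})$, whose two summands are isotypic components of the trivial and standard types.

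\textbf{Step 2 (Schur).} We check that the standard representation of $O(n-1)$ on $\mathbb{R}^{n-1}$ is irreducible for every $n\ge2$, including $n=2$ where $O(1)=\{\pm1\}$, and that its commutant over $\mathbb{R}$ is just the scalars $\mathbb{R}$. This is the reason for using the full $O(n)$: for $SO(2)$ the commutant would contain the rotation $J$. We also check that the standard and trivial types are non-isomorphic, which is immediate because $-I\in O(n-1)$ acts as $-1$ on one and as $+1$ on the other. An $H$-equivariant $T$ therefore preserves both summands, so there are no cross terms $L\to L^\perp$ or the reverse. On $L\otimes\mathbb{R}^{c_v}$ it acts as $\mathrm{id}\otimes A^\top$. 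On $L^\perp\otimes\mathbb{R}^{c_v}$ it acts as $\mathrm{id}\otimes B^\top$, because the commutant of $\rho\otimes\mathrm{id}_{c_v}$ with $\rho$ absolutely irreducible is $\mathrm{id}\otimes\mathrm{End}(\mathbb{R}^{c_v})$. In matrix form, $T(\mathbf{V}) = \mathbf{P}^{\parallel}\mathbf{V}\mathbf{A}+\mathbf{P}^{\perp}\mathbf{V}\mathbf{B}$. We expect this step to be the main obstacle. The cleanest route is an elementary argument rather than citing real Schur theory: use the reflection $\mathrm{diag}(1,-1,\dots)$ in $H$ to kill the cross terms, and use the transitivity of $H$ on unit vectors of $\widehat{\mathbf{r}}^\perp$, together with reflections, to force $T$ restricted to $L^\perp$ to be of the form $\mathrm{id}\otimes B^\top$.

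\textbf{Step 3 (invariance of coefficients).} Write $\mathbf{A}(\mathbf{r}),\mathbf{B}(\mathbf{r})$ for the coefficients obtained at each $\mathbf{r}$. Applying the full equivariance with a general $Q$ and using $\mathbf{P}^{\parallel}_{Q\mathbf{r}}=Q\mathbf{P}^{\parallel}_{\mathbf{r}}Q^\top$ (Lemma~\ref{lem:family3_equiv}), we obtain $\mathbf{P}^{\parallel}_{Q\mathbf{r}}Q\mathbf{V}\mathbf{A}(Q\mathbf{r}) + \dots = Q\mathbf{P}^{\parallel}_{\mathbf{r}}\mathbf{V}\mathbf{A}(\mathbf{r})+\dots$. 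The decomposition is unique, since $\mathbf{A}$ and $\mathbf{B}$ can be read off by testing on $\mathbf{V}$ supported in $L$ and in $L^\perp$ respectively; note $L^\perp\ne0$ because $n\ge2$. Uniqueness gives $\mathbf{A}(Q\mathbf{r})=\mathbf{A}(\mathbf{r})$ and $\mathbf{B}(Q\mathbf{r})=\mathbf{B}(\mathbf{r})$. By transitivity of $O(n)$ on spheres, $\mathbf{A}$ and $\mathbf{B}$ depend on $\mathbf{r}$ only through $\|\mathbf{r}\|$ and the invariant scalars, so they are invariant channel endomorphisms, as claimed.
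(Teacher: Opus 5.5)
Your proposal is correct and follows essentially the same route as the paper's proof in Appendix~\ref{app:proof_thm1}. Like the paper, you restrict equivariance to the stabilizer $H_{\mathbf{r}}\cong O(n-1)$, use the inequivalence of the radial (trivial) and tangential (standard) representations together with the scalar spatial commutant on $\widehat{\mathbf{r}}^{\perp}$ to obtain the $\mathbf{P}^{\parallel}/\mathbf{P}^{\perp}$ block form, and then use uniqueness of the blocks under the full $O(n)$ action to show $\mathbf{A},\mathbf{B}$ are invariant; you are in fact more careful than the paper on the points it only asserts, namely the $n=2$ case with $O(1)=\{\pm1\}$, absolute irreducibility (so the tangential commutant is $\mathrm{id}\otimes\mathrm{End}(\mathbb{R}^{c_v})$), and the explicit test-vector argument for uniqueness.
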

The theorem shows that unrestricted radial and tangential channel transformations span the complete class of displacement-conditioned linear $O(n)$-equivariant transports. ESNN realizes these transformations through the structured factorization $\mathbf{W}\mathbf{D}(\mathbf{g}_{ij})$, which lies within this theoretical class. This completeness result is specific to the displacement-conditioned $O(n)$ setting: when learned covariant vector features are also available, the admissible spatial operators become richer, and under $SO(n)$ additional orientation-sensitive constructions may also arise.

\noindent\textbf{Unified Transport.}
\label{sec:transport_unified}
The completeness result above assumes that the relative displacement is the
only covariant geometric input. Once the learned vector features are also
available, they can be used to construct additional covariant spatial
operators. In the unified formulation, let $\mathcal{K}
    \subseteq
    \left\{
        \mathrm{id},
        \parallel,
        \perp,
        \mathrm{skew}
    \right\}$ denote the active spatial operators, with:
\begin{equation}
    \mathbf{B}^{(\mathrm{id})}_{ij}=\mathbf{I}_n,\qquad
    \mathbf{B}^{(\parallel)}_{ij}=\mathbf{P}^{\parallel}_{ij},\qquad
    \mathbf{B}^{(\perp)}_{ij}=\mathbf{P}^{\perp}_{ij},\qquad
    \mathbf{B}^{(\mathrm{skew})}_{ij}
    =\widehat{\mathbf{\Omega}}^{V}_{ij}
\end{equation}
where we have the feature-dependent covariant skew-symmetric operator and its normalized form to be defined as: 
\begin{equation}
    \mathbf{\Omega}^{V}_{ij}
    =
    \mathbf{V}_j\mathbf{V}_i^\top
    -
    \mathbf{V}_i\mathbf{V}_j^\top,
    \qquad
    \widehat{\mathbf{\Omega}}^{V}_{ij}
    =
    \frac{
        \mathbf{\Omega}^{V}_{ij}
    }{
        \|\mathbf{\Omega}^{V}_{ij}\|_F+\varepsilon
    }
\end{equation}
Because the Frobenius norm is invariant under orthogonal conjugation,
$\widehat{\mathbf{\Omega}}^{V}_{ij}$ transforms as $\widehat{\mathbf{\Omega}}^{V}_{ij}
    \mapsto 
    Q\widehat{\mathbf{\Omega}}^{V}_{ij}Q^\top$. All spatial operators in $\mathcal K$ therefore satisfy Equation~\ref{eq:covariance_condition}. Together with invariant channel maps $\mathbf{M}^{(b)}_{ij}$, Proposition~\ref{prop:transport_equivariance}
guarantees $O(n)$-equivariance. We can define the \emph{Unified Transport} therefore as:
\begin{equation}
    \mathcal{T}_{i\leftarrow j}(\mathbf{V}_j)
    =
    \sum_{b\in\mathcal{K}}
    \mathbf{B}^{(b)}_{ij}
    \mathbf{V}_j
    \mathbf{M}^{(b)}_{ij}
    \label{eq:unified_transport}
\end{equation}
This formulation \emph{extends the transport beyond displacement-only geometry} by
allowing its spatial action to depend on the learned vector features. It
therefore lies outside the setting characterized by
Theorem~\ref{thm:maximal_expressivity}, while following the same equivariance
principle established in Section~\ref{sec:canonical_form}.

\section{The ESNN Architecture}
\label{sec:layer}
The full ESNN layer combines the transport maps of
Section~\ref{sec:transports} with invariant scalar messaging and, when
required, coordinate dynamics. At each node $i$, the layer receives coordinates
$\mathbf{x}_i\in\mathbb{R}^n$, invariant scalar features
$\mathbf{s}_i\in\mathbb{R}^{c_s}$, and covariant vector features
$\mathbf{V}_i\in\mathbb{R}^{n\times c_v}$. The update follows five stages:
an invariant edge context parameterizes scalar messages and vector transport,
the resulting messages are aggregated through a normalized transport
operator, scalar and vector features are updated within their respective
representation spaces, and an optional kinematic update evolves the
coordinates.

\noindent\textbf{1. Invariant Edge Context.}
For each non-self directed interaction $j\rightarrow i$ with
$\mathbf{r}_{ij}\neq\mathbf{0}$, ESNN first constructs an invariant
description of the local interaction. Let $\mathbf{r}_{ij}
    =
    \mathbf{x}_i-\mathbf{x}_j,
    \space 
    \widehat{\mathbf{r}}_{ij}
    =
    \frac{\mathbf{r}_{ij}}
    {\|\mathbf{r}_{ij}\|}$, and denote by $\mathbf{n}(\mathbf{V}_i)
    =
    \bigl(
        \|\mathbf{v}_{i,1}\|,
        \ldots,
        \|\mathbf{v}_{i,c_v}\|
    \bigr)$ the channel-wise vector norms. The edge context is:
\begin{equation}
    \mathbf{z}_{ij}
    =
    \Big[
    \mathbf{s}_i,\mathbf{s}_j,\,
    \mathbf{n}(\mathbf{V}_i),\mathbf{n}(\mathbf{V}_j),\,
    \phi_r(\|\mathbf{r}_{ij}\|),
    \operatorname{diag}(\mathbf{V}_i^\top\mathbf{V}_j),\,
    \mathbf{V}_i^\top\widehat{\mathbf{r}}_{ij},\,
    \mathbf{V}_j^\top\widehat{\mathbf{r}}_{ij},\,
    \mathbf{e}^{\mathrm{attr}}_{ij}
    \Big]
    \label{eq:edge_embed}
\end{equation}
where $\phi_r$ provides a radial representation of the distance and $\mathbf{e}^{\mathrm{attr}}_{ij}$ collects optional invariant edge attributes. Every component of $\mathbf{z}_{ij}$ is $O(n)$-invariant and can therefore be used to predict edge-dependent gates and transport coefficients without breaking equivariance. Self-information does not require a directional edge description and is instead propagated through the explicit identity self-loop of the diffusion operator, so $\mathbf{z}_{ij}$ is only constructed for non-self interactions.

\noindent\textbf{2. Vector Transport and Scalar Messaging.}
The vector message is obtained by applying one of the transport maps from
Section~\ref{sec:transports}:
\begin{equation}
    \mathbf{m}^{V}_{i\leftarrow j}
    =
    \mathcal{T}_{i\leftarrow j}(\mathbf{V}_j)
    \label{eq:vector_transport}
\end{equation}
Its coefficients are determined by the invariant edge context and, for
feature-dependent transport families, by covariant geometric quantities
satisfying Equation~\ref{eq:covariance_condition}. Scalar features are
propagated through a separate invariant pathway:
\begin{equation}
    \mathbf{m}^{s}_{i\leftarrow j}
    =
    \mathbf{s}_j
    +
    \phi_s(\mathbf{z}^{s}_{ij})
    \label{eq:scalar_message}
\end{equation}
where $\mathbf{z}^{s}_{ij}$ contains invariant scalar and radial features and
may also include the invariant vector inner products in
Equation~\ref{eq:edge_embed}. The two pathways therefore allow geometric
information to influence both scalar and vector representations while
preserving their respective transformation laws.

\noindent\textbf{3. Normalized Transport Diffusion.}
The scalar and vector messages are then aggregated over the neighborhood of
each node. Let $\omega_{ij}\geq 0$ denote an optional $O(n)$-invariant weight
for the directed interaction $j\rightarrow i$, with $\omega_{ij}=1$ in the
unweighted case. For learned directional weights, we use the symmetrized
weighted degree:
\begin{equation}
    \bar d_i
    =
    \frac{1}{2}
    \left(
        \sum_{j}\omega_{ij}
        +
        \sum_{j}\omega_{ji}
    \right)
    \label{eq:symmetrized_degree}
\end{equation}
whereas for the unweighted operator
$\bar d_i=|\mathcal{N}(i)|$. The normalized coefficients are:
\begin{equation}
    \nu_{ij}
    =
    \frac{\omega_{ij}}
    {\sqrt{(\bar d_i+1)(\bar d_j+1)}},
    \qquad
    \nu_{ii}
    =
    \frac{1}{\bar d_i+1}
    \label{eq:diffusion_weights}
\end{equation}
The additional unit accounts for the explicit identity self-loop. The
diffused representations are:
\begin{equation}
    \mathbf{V}^{\mathrm{diff}}_i
    =
    \nu_{ii}\mathbf{V}_i
    +
    \sum_{j\in\mathcal{N}(i)}
    \nu_{ij}\,
    \mathbf{m}^{V}_{i\leftarrow j},
    \qquad
    \mathbf{s}^{\mathrm{diff}}_i
    =
    \nu_{ii}\mathbf{s}_i
    +
    \sum_{j\in\mathcal{N}(i)}
    \nu_{ij}\,
    \mathbf{m}^{s}_{i\leftarrow j}
    \label{eq:diffusion}
\end{equation}
For a fixed layer context, the vector branch defines the normalized transport
operator:
\begin{equation}
    (\mathcal{A}_{\mathcal{T}}\mathbf{V})_i
    =
    \nu_{ii}\mathbf{V}_i
    +
    \sum_{j\in\mathcal{N}(i)}
    \nu_{ij}\,
    \mathcal{T}_{i\leftarrow j}(\mathbf{V}_j)
    \label{eq:transport_operator}
\end{equation}
The two orientations of an edge may carry different transport maps, so the general ESNN operator is directional and need not be self-adjoint. Recent directed sheaf models similarly distinguish edge orientations through asymmetric sheaf operators ~\citep{ribeiro2025cooperative,fiorini2025sheaves}. ESNN instead parameterizes the directed node-to-node transport directly and impose the ambient $O(n)$ covariance constraint on this map. A self-adjoint specialization is recovered on a bidirected graph when
$\nu_{ij}=\nu_{ji}$ and
$\mathcal{T}_{j\leftarrow i}
=
\mathcal{T}_{i\leftarrow j}^{*}$.
Under the corresponding orthogonal specialization, this yields the connection-style coupling of classical connection sheaves. Appendix~\ref{app:sheaf_transport} discusses these relationships in greater detail.

Degree normalization is the default aggregation mechanism. ESNN can also support invariant multi-head attention, where $\nu_{ij}$ is replaced by receiver-normalized weights computed from invariant edge features. Since these
weights are invariant scalars, the resulting aggregation preserves the same $O(n)$-equivariance.

\noindent\textbf{4. Residual Feature Update.}
After aggregation, scalar and vector features are updated within their
respective representation spaces. The vector branch first applies a learned
channel transformation:
\begin{equation}
    \widetilde{\mathbf{V}}_i
    =
    \mathbf{V}^{\mathrm{diff}}_i
    \mathbf{W}_V,
    \qquad
    \mathbf{W}_V\in\mathbb{R}^{c_v\times c_v}
    \label{eq:vector_fiber_mix}
\end{equation}
Since $\mathbf{W}_V$ acts only on the channel dimension,
$(Q\mathbf{V})\mathbf{W}_V=Q(\mathbf{V}\mathbf{W}_V)$, so channel mixing
preserves equivariance. A radial non-linearity is then applied independently
to each vector channel,
$\sigma_V(\mathbf{v})
    =
    a(\|\mathbf{v}\|)\mathbf{v}$, with $a$ a learned scalar function.
Because the scaling depends only on the vector norm, the output transforms
covariantly. The scalar branch uses an ordinary linear map and scalar
non-linearity. Including residual connections, the update is:
\begin{equation}
    \mathbf{V}'_i
    =
    \mathbf{V}_i
    +
    \sigma_V\!\left(
        \mathbf{V}^{\mathrm{diff}}_i\mathbf{W}_V
    \right),
    \qquad
    \mathbf{s}'_i
    =
    \mathbf{s}_i
    +
    \sigma_s\!\left(
        \mathbf{W}_s\mathbf{s}^{\mathrm{diff}}_i
    \right)
    \label{eq:node_update}
\end{equation}
Thus the layer can mix information freely across channels without changing the
transformation type of either representation.

\noindent\textbf{5. Coordinate Kinematics Update.}
For tasks with evolving geometry, ESNN includes an EGNN-style coordinate
update driven by invariant features. From the updated representation we form
$\mathbf{h}^{\mathrm{inv}}_i
    =
    \big[
        \mathbf{n}(\mathbf{V}'_i),
        \mathbf{s}'_i
    \big] $ and predict the invariant scalar
$\gamma_{ij}
    =
    \phi_x\!\left(
        \mathbf{h}^{\mathrm{inv}}_i,
        \mathbf{h}^{\mathrm{inv}}_j,
        \phi_r(\|\mathbf{r}_{ij}\|),
        \mathbf{e}^{\mathrm{attr}}_{ij}
    \right)$.
The coordinate displacement is:
\begin{equation}
    \Delta\mathbf{x}_i
    =
    \frac{1}{|\mathcal{N}(i)|}
    \sum_{j\in\mathcal{N}(i)}
    \gamma_{ij}\mathbf{r}_{ij},
    \qquad
    \mathbf{x}'_i
    =
    \mathbf{x}_i+\Delta\mathbf{x}_i
    \label{eq:coord_update}
\end{equation}
Since $\gamma_{ij}$ is invariant and
$\mathbf{r}_{ij}\mapsto Q\mathbf{r}_{ij}$, the displacement transforms
covariantly. The transport mechanism can therefore enrich the latent vector
representation while retaining the familiar first-order coordinate update of equivariant GNNs.

Each stage preserves the transformation type of its inputs, yielding an
$E(n)$-equivariant layer under the assumptions below.

\begin{theorem}[$E(n)$-Equivariance]
\label{thm:layer_equivariance}
Assume that the graph topology is fixed or constructed from
$E(n)$-invariant geometric quantities, that the edge attributes are
$O(n)$-invariant, and that every non-self interaction for which
$\widehat{\mathbf{r}}_{ij}$ is used satisfies
$\mathbf{r}_{ij}\neq\mathbf{0}$. Assume further that the transport maps
satisfy the conditions of
Proposition~\ref{prop:transport_equivariance}. In the absence of explicit
symmetry-relaxing inputs, the ESNN layer defined by
Equations~\ref{eq:edge_embed}--\ref{eq:coord_update} is
$E(n)$-equivariant. That is, under:
\begin{equation}
    \mathbf{x}_i\mapsto Q\mathbf{x}_i+\mathbf{t},
    \qquad
    \mathbf{V}_i\mapsto Q\mathbf{V}_i,
    \qquad
    \mathbf{s}_i\mapsto\mathbf{s}_i
\end{equation}
the updated features satisfy:
\begin{equation}
    \mathbf{x}'_i\mapsto Q\mathbf{x}'_i+\mathbf{t},
    \qquad
    \mathbf{V}'_i\mapsto Q\mathbf{V}'_i,
    \qquad
    \mathbf{s}'_i\mapsto\mathbf{s}'_i
\end{equation}
(A proof is provided in Appendix~\ref{app:proof_thm2}).
\end{theorem}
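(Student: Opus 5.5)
The plan is to track the transformation type of every intermediate quantity through the five stages, in order. At each stage I will show that scalars are invariant, vectors and vector-valued messages are $O(n)$-covariant, and coordinates transform affinely. Fix $g=(Q,\mathbf{t})\in E(n)$ and write primes for quantities computed from the transformed inputs.

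The first step concerns the relative geometry. Translations cancel in $\mathbf{r}_{ij}=\mathbf{x}_i-\mathbf{x}_j$, so $\mathbf{r}'_{ij}=Q\mathbf{r}_{ij}$. Since $\|Q\mathbf{r}_{ij}\|=\|\mathbf{r}_{ij}\|$, we also get $\widehat{\mathbf{r}}'_{ij}=Q\widehat{\mathbf{r}}_{ij}$. This is well-defined by the assumption $\mathbf{r}_{ij}\neq\mathbf{0}$, which is preserved under $g$. The graph topology is fixed or built from $E(n)$-invariant quantities, so $\mathcal{N}(i)$ is unchanged, and hence the degrees $\bar d_i$ are unchanged in the unweighted case.

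Next I verify that every entry of $\mathbf{z}_{ij}$ in Equation~\ref{eq:edge_embed} is invariant:
\begin{itemize}
\item $\mathbf{s}_i,\mathbf{s}_j$ are invariant by assumption.
\item The channel norms satisfy $\|Q\mathbf{v}\|=\|\mathbf{v}\|$, and $\phi_r(\|\mathbf{r}_{ij}\|)$ is invariant for the same reason.
\item $\mathbf{V}_i^\top Q^\top Q\mathbf{V}_j=\mathbf{V}_i^\top\mathbf{V}_j$, so the inner-product diagonal is invariant.
\item $(Q\mathbf{V}_i)^\top Q\widehat{\mathbf{r}}_{ij}=\mathbf{V}_i^\top\widehat{\mathbf{r}}_{ij}$, and likewise for $\mathbf{V}_j$.
\item $\mathbf{e}^{\mathrm{attr}}_{ij}$ is invariant by hypothesis.
\end{itemize}
Consequently every function of $\mathbf{z}_{ij}$ is invariant. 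This covers the gates, $\lambda_{ij}$, $\beta_{ij}$, $\omega_{ij}$, the attention weights, $\phi_s$, and $\gamma_{ij}$, and with them $\bar d_i$ and the normalized coefficients $\nu_{ij},\nu_{ii}$ in Equation~\ref{eq:diffusion_weights}.

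For Stage~2, Proposition~\ref{prop:transport_equivariance} applies directly under the stated assumption and gives $\mathbf{m}'^{V}_{i\leftarrow j}=Q\,\mathbf{m}^V_{i\leftarrow j}$. For the concrete families this is confirmed by Lemmas~\ref{lem:family2_equiv} and~\ref{lem:family3_equiv}. The scalar message in Equation~\ref{eq:scalar_message} is invariant because it is built from invariants. For Stage~3, the map $\mathcal{A}_{\mathcal{T}}$ is a combination with invariant coefficients of covariant terms, so $\mathbf{V}'^{\mathrm{diff}}_i=Q\mathbf{V}^{\mathrm{diff}}_i$ and $\mathbf{s}^{\mathrm{diff}}_i$ is invariant; this uses that $\nu_{ii}\mathbf{V}_i$ is covariant and left multiplication by $Q$ is linear. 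For Stage~4:
\begin{itemize}
\item Right multiplication by $\mathbf{W}_V$ commutes with left multiplication by $Q$.
\item The radial nonlinearity satisfies $\sigma_V(Q\mathbf{v})=a(\|\mathbf{v}\|)Q\mathbf{v}=Q\sigma_V(\mathbf{v})$.
\item The residual sum gives $\mathbf{V}'_i\mapsto Q\mathbf{V}'_i$, while the scalar branch only ever receives invariant inputs.
\end{itemize}

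For Stage~5, $\mathbf{h}^{\mathrm{inv}}_i$ is invariant because it consists of norms of covariant vectors and invariant scalars, so $\gamma_{ij}$ is invariant. Then $\Delta\mathbf{x}_i\mapsto Q\Delta\mathbf{x}_i$, since it is a combination of the $Q\mathbf{r}_{ij}$ with invariant weights and $|\mathcal{N}(i)|$ is unchanged. Therefore $\mathbf{x}'_i\mapsto Q\mathbf{x}_i+\mathbf{t}+Q\Delta\mathbf{x}_i=Q\mathbf{x}'_i+\mathbf{t}$.

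The argument is routine composition, and the only point I expect to need care is the bookkeeping in Stage~2 for feature-dependent transports. There the context $\mathcal{C}_{ij}$ includes the vectors $\mathbf{V}_i,\mathbf{V}_j$, and one must check that the simultaneous action $Q\cdot\mathcal{C}_{ij}$ is exactly what the transformed inputs produce. That check is what lets Proposition~\ref{prop:transport_equivariance} be invoked; for instance, the $\varepsilon$-regularized Frobenius normalization must be invariant, and it is, because $\|Q\mathbf{C}Q^\top\|_F=\|\mathbf{C}\|_F$. Everything else is closure of invariance and covariance under the operations listed above. This is also why the absence of symmetry-relaxing inputs is needed: a fixed preferred direction would not rotate with $Q$.
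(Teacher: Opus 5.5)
Your proposal is correct and follows essentially the same stage-by-stage argument as the paper's proof: invariance of the edge context, Proposition~\ref{prop:transport_equivariance} for the vector messages, invariant diffusion coefficients, channel mixing and the radial nonlinearity commuting with $Q$, and an invariant $\gamma_{ij}$ in the coordinate update. One small slip is that $\gamma_{ij}$ is not a function of $\mathbf{z}_{ij}$, since it depends on the updated $\mathbf{h}^{\mathrm{inv}}_i$, but your Stage~5 argument already establishes its invariance correctly.
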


\noindent\textbf{Extension to Dynamical Systems.}
For dynamical systems, nodes may additionally carry a velocity
$\mathbf{u}_i\in\mathbb{R}^n$, treated as a covariant vector. The same
equivariant displacement $\Delta\mathbf{x}_i$ can then be used to update both
velocity and position:
\begin{equation}
    \mathbf{u}'_i
    =
    a_i\mathbf{u}_i+\Delta\mathbf{x}_i,
    \qquad
    \mathbf{x}'_i
    =
    \mathbf{x}_i+\mathbf{u}'_i
    \label{eq:velocity_update}
\end{equation}
where $a_i$ is an invariant scalar predicted from
$\mathbf{h}^{\mathrm{inv}}_i$. Since both $\mathbf{u}_i$ and
$\Delta\mathbf{x}_i$ transform covariantly,
Equation~\ref{eq:velocity_update} preserves $E(n)$-equivariance.


\begin{figure}[t]
\centering

\resizebox{\textwidth}{!}{%
%
%
%

\begin{tikzpicture}[font=\normalsize]


\providecolor{fieldDark}{HTML}{2A2350}
\providecolor{egnnBlue}{HTML}{2B60DE}
\providecolor{esnnGreen}{HTML}{00A86B}
\providecolor{coordPurple}{HTML}{6A0DAD}
\providecolor{badRed}{HTML}{D32F2F}
\providecolor{inkGray}{HTML}{666666}

\def\PW{5.3}      
\def\PH{6.7}      
\def\CW{5.2}      
\def\GAP{0.5}     

\coordinate (L) at (0,0);
\coordinate (M) at ($(L)+(\PW+\GAP,0)$);
\coordinate (R) at ($(M)+(\CW+\GAP,0)$);

\pgfmathsetmacro{\SPAN}{2*\PW+\CW+2*\GAP}
\pgfmathsetmacro{\MID}{\SPAN/2}


\providecommand{\symrelaxpanelframe}[4]{%
  \begin{scope}[shift={#1}]
    \fill[white,rounded corners=4pt]
      (0,0) rectangle (#3,\PH);

    \begin{scope}
      \clip[rounded corners=4pt]
        (0,0) rectangle (#3,\PH);
      \fill[#2]
        (0,\PH-0.62) rectangle (#3,\PH);
    \end{scope}

    \draw[
      rounded corners=4pt,
      draw=#2,
      line width=1pt
    ]
      (0,0) rectangle (#3,\PH);

    \node[
      text=white,
      font=\bfseries
    ] at ({#3/2},\PH-0.31)
      {#4};
  \end{scope}%
}

\providecommand{\symrelaxok}{%
  \raisebox{-0.05ex}{%
    \normalfont\large\textbf{%
      \textcolor{esnnGreen}{\ensuremath{\checkmark}}%
    }%
  }%
}

\providecommand{\symrelaxno}{%
  \raisebox{-0.05ex}{%
    \normalfont\large\textbf{%
      \textcolor{badRed}{\ensuremath{\times}}%
    }%
  }%
}


\symrelaxpanelframe{(L)}{egnnBlue}{\PW}{Full $E(n)$ Equivariance}

\begin{scope}[shift={(L)}]

\node[
  font=\footnotesize\itshape,
  text=egnnBlue!80!black
] at (\PW/2,\PH-0.95)
  {No Preferred Direction};

\begin{scope}[shift={(\PW/2,3.8)}]

\def\Rr{1.6}

\shade[
  ball color=egnnBlue!10,
  opacity=0.4
]
  (0,0) circle (\Rr);

\draw[
  egnnBlue,
  thick
]
  (0,0) circle (\Rr);

\draw[
  egnnBlue!80!black,
  line width=1.2pt,
  -stealth
]
  (-\Rr,0)
  arc (180:360:{\Rr} and {0.35*\Rr});

\draw[
  egnnBlue!80!black,
  line width=0.8pt,
  dashed
]
  (\Rr,0)
  arc (0:180:{\Rr} and {0.35*\Rr});

\draw[
  egnnBlue!80!black,
  line width=1.2pt,
  -stealth
]
  (0,-\Rr)
  arc (270:450:{0.35*\Rr} and {\Rr});

\draw[
  egnnBlue!80!black,
  line width=0.8pt,
  dashed
]
  (0,\Rr)
  arc (90:270:{0.35*\Rr} and {\Rr});

\begin{scope}[rotate=45]

  \draw[
    egnnBlue!80!black,
    line width=1.2pt,
    -stealth
  ]
    (-\Rr,0)
    arc (180:360:{\Rr} and {0.35*\Rr});

  \draw[
    egnnBlue!80!black,
    line width=0.8pt,
    dashed
  ]
    (\Rr,0)
    arc (0:180:{\Rr} and {0.35*\Rr});

\end{scope}

\fill[fieldDark]
  (0,0) circle (2pt);

\node[
  fieldDark,
  anchor=north west,
  inner sep=4pt
] at (0,0)
  {$\mathbf{x}_i$};

\end{scope}

\node[
  align=center,
  font=\normalsize,
  text width=4.6cm
] at (\PW/2,0.85)
  {The full $O(n)$ action\\is enforced};

\end{scope}


\symrelaxpanelframe{(M)}{coordPurple}{\CW}{Adaptive Symmetry Relaxation}

\begin{scope}[shift={(M)}]

\node[
  font=\footnotesize\itshape,
  text=coordPurple!80!black
] at (\CW/2,\PH-0.95)
  {Controlled Symmetry Reduction};

\node[
  font=\normalsize,
  text=fieldDark,
  align=center,
  text width=4cm
] at (\CW/2,4.8)
  {Directional conditioning via\\preferred direction $\mathbf{g}$};

\begin{scope}[shift={(\CW/2,3.0)}]

\def\barW{1.5}

\shade[
  left color=egnnBlue,
  right color=esnnGreen,
  rounded corners=2pt
]
  (-\barW,-0.15)
  rectangle
  (\barW,0.15);


\draw[
  thick,
  egnnBlue!80!black
]
  (-\barW,-0.3)
  --
  (-\barW,0.3);

\node[
  font=\normalsize\bfseries,
  text=egnnBlue!80!black,
  anchor=south
] at (-\barW,0.4)
  {$\lambda=0$};

\node[
  font=\footnotesize,
  text=egnnBlue!80!black,
  anchor=north
] at (-\barW,-0.4)
  {Full $E(n)$};


\draw[
  thick,
  esnnGreen!80!black
]
  (\barW,-0.3)
  --
  (\barW,0.3);

\node[
  font=\normalsize\bfseries,
  text=esnnGreen!80!black,
  anchor=south
] at (\barW,0.4)
  {$\lambda\neq0$};

\node[
  font=\footnotesize,
  text=esnnGreen!80!black,
  anchor=north
] at (\barW,-0.4)
  {Reduced $E_{\mathbf g}(n)$};

\fill[coordPurple]
  (0,0) circle (4pt);

\draw[
  white,
  thick
]
  (0,0) circle (4pt);

\node[
  font=\normalsize\bfseries,
  text=coordPurple,
  anchor=south,
  align=center
] at (0,0.2)
  {Learnable $\lambda$};

\end{scope}

\node[
  align=center,
  font=\normalsize,
  text width=4.6cm
] at (\CW/2,0.85)
  {Directional influence vanishes\\continuously as $\lambda\to0$};

\end{scope}


\symrelaxpanelframe{(R)}{esnnGreen}{\PW}{Stabilizer $O_{\mathbf g}(n)$}

\begin{scope}[shift={(R)}]

\node[
  font=\footnotesize\itshape,
  text=esnnGreen!75!black
] at (\PW/2,\PH-0.95)
  {Fixed Ambient Direction};

\begin{scope}[shift={(\PW/2,3.5)}]

\def\Rr{1.25}
\def\ex{0.35*\Rr}
\def\HH{1.35}

\path[
  fill=esnnGreen!5,
  opacity=0.8
]
  (-\Rr,-\HH)
  rectangle
  (\Rr,\HH);

\path[
  fill=esnnGreen!15
]
  (0,\HH)
  ellipse ({\Rr} and {\ex});

\draw[
  esnnGreen!70!black,
  thick
]
  (-\Rr,-\HH)--(-\Rr,\HH);

\draw[
  esnnGreen!70!black,
  thick
]
  (\Rr,-\HH)--(\Rr,\HH);

\draw[
  esnnGreen!70!black,
  thick
]
  (0,\HH)
  ellipse ({\Rr} and {\ex});

\draw[
  esnnGreen!70!black,
  thick
]
  (-\Rr,-\HH)
  arc (180:360:{\Rr} and {\ex});

\draw[
  esnnGreen!70!black,
  thick,
  dashed
]
  (\Rr,-\HH)
  arc (0:180:{\Rr} and {\ex});

\draw[
  fieldDark,
  line width=2pt,
  -stealth
]
  (0,-\HH-0.3)
  --
  (0,\HH+0.1);

\node[
  font=\large\bfseries,
  text=fieldDark,
  anchor=west
] at (0.1,\HH-0.15)
  {$\mathbf{g}$};

\draw[
  esnnGreen!80!black,
  line width=1.5pt,
  -stealth
]
  (-\Rr,0)
  arc (180:360:{\Rr} and {\ex});

\draw[
  esnnGreen!80!black,
  line width=1pt,
  dashed
]
  (\Rr,0)
  arc (0:180:{\Rr} and {\ex});

\node[
  anchor=west
] at (\Rr+0.1,0)
  {\symrelaxok};

\draw[
  badRed,
  line width=1.2pt,
  dashed,
  -stealth
]
  (0,-\Rr)
  arc (270:450:{\ex} and {\Rr});

\draw[
  badRed,
  line width=0.8pt,
  dashed
]
  (0,\Rr)
  arc (90:270:{\ex} and {\Rr});

\node[
  anchor=south west
] at (\ex,\Rr-0.2)
  {\symrelaxno};

\fill[fieldDark]
  (0,0) circle (2pt);

\end{scope}

\node[
  align=center,
  font=\normalsize,
  text width=4.6cm
] at (\PW/2,0.85)
  {Only transformations satisfying\\$Q\mathbf{g}=\mathbf{g}$ are enforced};

\end{scope}

\end{tikzpicture}%
}

\caption{
\emph{Controlled symmetry relaxation through a preferred ambient direction.}
With $\lambda=0$, the directional pathway is inactive and ESNN retains full
$E(n)$-equivariance. When the directional feature
$\lambda\langle\mathbf{r}_{ij},\mathbf{g}\rangle$ is active, the guaranteed
orthogonal symmetry is reduced to the stabilizer
$O_{\mathbf g}(n)$, while translation equivariance is preserved. The resulting
symmetry group is
$E_{\mathbf g}(n)=O_{\mathbf g}(n)\ltimes\mathbb{R}^n$.
The learnable relaxation coefficient is initialized at zero.
}

\label{fig:symmetry-relaxation}

\end{figure}

\section{Controlled Symmetry Relaxation}
\label{sec:soft_symmetry}

Full $E(n)$-equivariance is a natural inductive bias when the system has no
preferred direction. In many physical settings, however, external structure
such as gravity or background flow introduces a distinguished direction and
thereby reduces the symmetry of the problem
~\citep{smidt2021finding, gibb2024spontaneous,
weidinger2017dynamical, baek2017dynamical}. ESNN accommodates this setting by introducing an orientation-dependent scalar into the edge context. The
spatial transport remains unchanged, while its scalar coefficients can now depend on how an edge is oriented relative to the preferred direction.

\noindent\textbf{Symmetry-Relaxed Edge Context.}
When symmetry relaxation is enabled, let
$\mathbf{g}\in\mathbb{R}^n$ denote a global preferred direction and $\lambda\in\mathbb{R}$ a learnable relaxation coefficient initialized at zero. The direction $\mathbf{g}$ may be prescribed or learned as a global model parameter; in both cases, it is treated as fixed in the ambient frame when the input geometry is transformed. For $\mathbf{g}\neq\mathbf{0}$ and a directed
interaction $j\rightarrow i$, we define the signed projection:
\begin{equation}
    z^{\mathbf{g}}_{ij}
    =
    \left\langle
        \mathbf{r}_{ij},
        \mathbf{g}
    \right\rangle
    \label{eq:preferred_projection}
\end{equation}
Unlike the invariant quantities in Equation~\ref{eq:edge_embed}, this scalar
records whether an edge is aligned or opposed to the preferred direction. We
augment the edge context as:
\begin{equation}
    \mathbf{z}^{\mathrm{relaxed}}_{ij}
    =
    \left[
        \mathbf{z}_{ij},
        \;
        \lambda
        \left\langle
            \mathbf{r}_{ij},\mathbf{g}
        \right\rangle
    \right]
    \label{eq:relaxed_embed}
\end{equation}
where $\mathbf{z}_{ij}$ is the invariant edge context from
Equation~\ref{eq:edge_embed}. The added scalar can influence transport
coefficients, channel gates, and aggregation weights through the same edge
networks used by the fully equivariant layer. At $\lambda=0$, its contribution
vanishes exactly and the original $E(n)$-equivariant model is recovered.

\begin{theorem}[Stabilizer Subequivariance]
\label{thm:cylindrical}
Let $\mathbf{g}\neq\mathbf{0}$ be held fixed in the ambient coordinate frame
and define its stabilizer:
\begin{equation}
    O_{\mathbf{g}}(n)
    =
    \left\{
        Q\in O(n)
        \;:\;
        Q\mathbf{g}=\mathbf{g}
    \right\}
    \label{eq:stabilizer_group}
\end{equation}
For $\lambda\neq0$, an ESNN layer conditioned on
Equation~\ref{eq:relaxed_embed} remains equivariant under translations and
under every $Q\in O_{\mathbf{g}}(n)$. Hence the guaranteed equivariance group is:
\begin{equation}
    E_{\mathbf{g}}(n)
    =
    O_{\mathbf{g}}(n)\ltimes\mathbb{R}^n
\end{equation}
Equivariance under orthogonal transformations outside
$O_{\mathbf{g}}(n)$ is not enforced by the architecture. For $\lambda=0$, the
directional conditioning vanishes and full $E(n)$-equivariance is recovered.
\end{theorem}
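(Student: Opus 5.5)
The plan is to reduce Theorem~\ref{thm:cylindrical} to Theorem~\ref{thm:layer_equivariance}. The key observation is that, once we restrict to the stabilizer, the only new ingredient in the layer is again an invariant scalar. The proof of Theorem~\ref{thm:layer_equivariance} used only three facts: every entry of the edge context is invariant under the group being considered; the spatial operators transform by conjugation; and relative displacements and velocities transform covariantly. It therefore suffices to check that these facts survive when $O(n)$ is replaced by $O_{\mathbf g}(n)$ and the edge context $\mathbf z_{ij}$ is replaced by $\mathbf z^{\mathrm{relaxed}}_{ij}$.

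\emph{Invariance of the new scalar.} Fix $Q\in O_{\mathbf g}(n)$ and a translation $\mathbf t$. Under $\mathbf x_i\mapsto Q\mathbf x_i+\mathbf t$, we have $\mathbf r_{ij}\mapsto Q\mathbf r_{ij}$. Because $\mathbf g$ and $\lambda$ are parameters held fixed in the ambient frame,
\begin{equation*}
    \langle Q\mathbf r_{ij},\mathbf g\rangle
    =
    \langle \mathbf r_{ij},Q^\top\mathbf g\rangle
    =
    \langle \mathbf r_{ij},\mathbf g\rangle .
\end{equation*}
The second equality uses $Q^\top\mathbf g=Q^{-1}\mathbf g=\mathbf g$, which holds since $Q\mathbf g=\mathbf g$. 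Translations cancel in $\mathbf r_{ij}$, so this scalar is also translation invariant. The remaining entries of $\mathbf z_{ij}$ are $O(n)$-invariant, hence in particular $O_{\mathbf g}(n)$-invariant. It follows that $\mathbf z^{\mathrm{relaxed}}_{ij}$ is $E_{\mathbf g}(n)$-invariant.

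\emph{Rerunning the layer argument.} Every quantity that the edge networks compute from $\mathbf z^{\mathrm{relaxed}}_{ij}$ is therefore invariant under $E_{\mathbf g}(n)$. This covers the gates $\mathbf g_{ij}$, the coefficients $\lambda_{ij}$ and $\beta_{ij}$, the channel operators $\mathbf M^{(k)}_{ij}$, the weights $\omega_{ij}$ and $\nu_{ij}$, the attention weights, and $\gamma_{ij}$ and $a_i$. The spatial operators themselves do not involve $\mathbf g$. They satisfy Equation~\ref{eq:covariance_condition} for all of $O(n)$, and hence for the subgroup. Proposition~\ref{prop:transport_equivariance} holds verbatim when the group is restricted, so the transport is $O_{\mathbf g}(n)$-equivariant. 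From this point I would repeat the stage-by-stage argument of Theorem~\ref{thm:layer_equivariance}: diffusion, channel mixing, the radial nonlinearity, and the coordinate and velocity updates are each covariant for any orthogonal $Q$ once their scalar coefficients are invariant. Restricting $Q$ to $O_{\mathbf g}(n)$ gives equivariance under $E_{\mathbf g}(n)$.

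The main subtlety is interpretive rather than computational. We must insist that $\mathbf g$, even when it is learned, is \emph{not} transformed along with the input. If it were, the model would be fully equivariant. Because it is not, no invariance argument is available for $Q\notin O_{\mathbf g}(n)$, and this is exactly the sense of ``not enforced'' in the statement. To make that sentence concrete, I would add a one-line example. Take $Q$ to be the reflection through the hyperplane $\mathbf g^\perp$, so that $Q\mathbf g=-\mathbf g$. This $Q$ flips the sign of $\langle \mathbf r_{ij},\mathbf g\rangle$, and a generic edge network is not even in this argument.

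Finally, at $\lambda=0$ the appended entry is identically $0$ for every input. It is then a constant, hence trivially $O(n)$-invariant, and Theorem~\ref{thm:layer_equivariance} applies directly. This gives full $E(n)$-equivariance.
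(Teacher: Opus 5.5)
Your proposal is correct and follows the paper's proof essentially step for step. Like the paper, you show that $\lambda\langle\mathbf{r}_{ij},\mathbf{g}\rangle$ is invariant under translations and under $O_{\mathbf{g}}(n)$ via $Q^\top\mathbf{g}=\mathbf{g}$, rerun the proof of Theorem~\ref{thm:layer_equivariance} with $Q$ restricted to the stabilizer, and reduce the $\lambda=0$ case to that theorem (for the ``not enforced'' clause the paper argues slightly more generally that any $Q\notin O_{\mathbf{g}}(n)$ has $Q^\top\mathbf{g}\neq\mathbf{g}$ and so changes $\langle\mathbf{r},\mathbf{g}\rangle$ for some $\mathbf{r}$, where you exhibit a single reflection); one small correction is that for Diagonal and Orthogonal Transport the spatial factors $\lambda_{ij}\mathbf{I}_n$ and $\exp(\beta_{ij}\mathbf{\Omega}_{ij})$ do inherit $\mathbf{g}$-dependence through coefficients predicted from the relaxed context, so they are not covariant under all of $O(n)$, but the $O_{\mathbf{g}}(n)$-invariance of $\lambda_{ij}$ and $\beta_{ij}$ that you already established makes them conjugation-covariant under the stabilizer, which is all Proposition~\ref{prop:transport_equivariance} needs there.
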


The result follows from the fact that the signed projection is unchanged by
any transformation that preserves $\mathbf g$. For
$Q\in O_{\mathbf{g}}(n)$:
\begin{equation}
    \left\langle
        Q\mathbf{r}_{ij},\mathbf{g}
    \right\rangle
    =
    \left\langle
        \mathbf{r}_{ij},Q^\top\mathbf{g}
    \right\rangle
    =
    \left\langle
        \mathbf{r}_{ij},\mathbf{g}
    \right\rangle
\end{equation}
so the relaxed edge context remains invariant under the stabilizer of
$\mathbf g$. Together with the covariant spatial operators of
Section~\ref{sec:transports}, this gives the stated subgroup equivariance. A
proof for the complete ESNN layer is provided in
Appendix~\ref{app:proof_thm4}.

\noindent\textbf{Symmetry Prior.}
Full $E(n)$-equivariance remains the default ESNN setting. When symmetry relaxation is disabled, no preferred-direction pathway is present. When it is enabled, the relaxation coefficients are initialized at $\lambda=0$, so the directional contribution still vanishes exactly at initialization. Training can then activate this pathway by moving $\lambda$ away from zero, while $\lambda=0$ always recovers the fully equivariant regime.

\section{Experiments}
\label{sec:exp}

We organize the empirical study around four questions that progressively probe
the role of geometric transport and symmetry in ESNN. \textbf{Q1:} Does richer
equivariant transport improve dynamics prediction when full $E(3)$ symmetry is
the correct inductive bias? \textbf{Q2:} Can ESNN exploit a known reduction in
symmetry, or recover the corresponding preferred direction directly from data?
\textbf{Q3:} How does geometric transport behave in mesh-based physical systems
with directional fields, irregular geometry, and long-horizon dynamics?
\textbf{Q4:} Does the same framework remain robust to unseen rotations when
transferred beyond physical simulation to point-cloud classification? We study
these questions on charged and gravity-augmented N-body dynamics
~\citep{kipf2018neural}, three physical-simulation benchmarks from
MeshGraphNets~\citep{pfaff2021learning}, and ModelNet40~\citep{wu20153d}.
We additionally evaluate molecular-property prediction on QM9 as a
complementary test of the same transport mechanisms on invariant graph-level
targets (Appendix~\ref{app:qm9_details}). 

\subsection{Particle Dynamics}

\paragraph{Q1: Charged N-Body Dynamics.}
We first consider the standard charged N-body benchmark, where five particles
interact through attractive or repulsive Coulomb forces. From the positions,
velocities, and charges at an observed time, the model predicts the particle
coordinates at a later time, with performance measured by mean squared error
(MSE). The underlying dynamics retain full Euclidean symmetry, so both EGNN and
ESNN operate under the same $E(3)$ symmetry prior. This setting therefore
isolates the effect of enriching the geometric transport without changing the
assumed symmetry of the problem. Table~\ref{tab:nbody_mse} shows that every ESNN variant improves over EGNN. Even ESNN-Id reduces the MSE from $0.0071$ to $0.0060$, while learned transport provides a further gain. ESNN-Ortho achieves the best result at $0.0051$,
followed closely by ESNN-RadTan at $0.0052$ and ESNN-Diag at $0.0054$. The
best model reduces the error by approximately $28\%$ relative to EGNN. The
improvement beyond Identity Transport suggests that explicitly learning how
vector information is transformed across edges can strengthen first-order
equivariant message passing even when full $E(3)$ symmetry is already the
appropriate inductive bias.

\paragraph{Q2: Gravity-Augmented N-Body Dynamics.}
We next introduce a uniform gravitational field into the charged N-body system,
adding the acceleration
$\mathbf{a}_{\mathbf{g}}=(0,0,-9.81)^\top$ to the pairwise Coulomb dynamics.
This selects the preferred direction
$\widehat{\mathbf{g}}_{\mathrm{true}}=(0,0,-1)^\top$ and breaks rotational
symmetry while preserving translations, providing a direct test of the
controlled symmetry relaxation introduced in
Section~\ref{sec:soft_symmetry}. As before, the model predicts future particle
coordinates from an observed state and is evaluated using coordinate MSE. We compare three matched settings. \emph{None} receives no preferred direction
and remains fully $E(3)$-equivariant. \emph{Fixed} is given the true gravity
direction, while \emph{Learned} uses a single trainable global vector whose
orientation must be inferred from the dynamics. In the two symmetry-relaxed
settings, the directional feature enters through
$\lambda\langle\mathbf{r}_{ij},\mathbf{g}\rangle$, with the relaxation
coefficients initialized at zero. We therefore also report
$\max_{\ell}|\lambda_g^{(\ell)}|\|\mathbf g\|_2$, which measures the effective
scale of the directional pathway and indicates whether this initially inactive
signal is used after training. Table~\ref{tab:nbody_mse} shows a clear gap between the fully equivariant and
symmetry-relaxed models. Across all transport families, both \emph{Fixed} and
\emph{Learned} reduce the MSE from approximately $0.10$--$0.13$ to around
$0.02$, while the nonzero directional scales confirm that the relaxed pathway
is actively used. The learned setting closely matches the fixed-direction
setting despite receiving no prior information about the gravity axis. To determine whether the learned vector also recovers the correct physical
direction, we report the sign-invariant alignment
$A_g=
|\langle\widehat{\mathbf g},
\widehat{\mathbf g}_{\mathrm{true}}\rangle|$.
Here $A_g=1$ denotes perfect alignment with the gravity axis up to sign, whereas
$A_g=0$ corresponds to an orthogonal direction. The learned models achieve
$A_g=1$ in every reported case. Together, these results show that ESNN can both
benefit from the appropriate reduction in symmetry and recover the associated
symmetry-breaking axis directly from data.

\begin{table}[htbp]
  \centering
  \footnotesize
  \setlength{\tabcolsep}{6pt}
  \renewcommand{\arraystretch}{0.95}

  \caption{\emph{N-body dynamics benchmarks.}
    Mean squared error (MSE) for future-position prediction on the charged-particle
    and gravity-augmented systems. Charged N-body baselines follow
    \citet{satorras2022enequivariantgraphneural}. ESNN models are shown in
    \textbf{bold}, with the three best charged N-body results highlighted as
    \textcolor{Top1}{\textbf{First}},
    \textcolor{Top2}{\textbf{Second}}, and
    \textcolor{Top3}{\textbf{Third}}.
    Gravity results are mean $\pm$ standard deviation over five runs.
    The quantity
    $\max_{\ell}|\lambda_g^{(\ell)}|\|\mathbf g\|_2$
    measures the effective scale of the directional pathway, while
    $A_g=
    |\langle\widehat{\mathbf g},
    \widehat{\mathbf g}_{\mathrm{true}}\rangle|$
    measures sign-invariant alignment with the gravity axis.
    For \emph{Fixed}, $A_g=1$ by construction; for \emph{Learned}, it is measured
    from the inferred direction. Lower MSE and higher $A_g$ are better.}
  \label{tab:nbody_mse}

  \begin{subtable}[t]{0.35\linewidth}
    \centering
    \caption{\emph{N-body}}
    \label{tab:nbody}

    \setlength{\tabcolsep}{5pt}
    \begin{tabular}{@{}lc@{}}
      \toprule
      \textbf{Method}
      & \textbf{MSE} $\downarrow$ \\
      \midrule

      Linear
        & 0.0819 \\

      SE(3) Transformer
        & 0.0244 \\

      Tensor Field Network
        & 0.0155 \\

      Graph Neural Network
        & 0.0107 \\

      Radial Field
        & 0.0104 \\

      EGNN
        & 0.0071 \\

      \midrule

      \textbf{ESNN-Id}
        & 0.0060 \\

      \textbf{ESNN-Diag}
        & \textcolor{Top3}{\textbf{0.0054}} \\

      \textbf{ESNN-Ortho}
        & \textcolor{Top1}{\textbf{0.0051}} \\

      \textbf{ESNN-RadTan}
        & \textcolor{Top2}{\textbf{0.0052}} \\

      \bottomrule
    \end{tabular}
  \end{subtable}
  \hfill
  \begin{subtable}[t]{0.63\linewidth}
    \centering
    \caption{\emph{N-body + Gravity}}
    \label{tab:nbody_gravity}

    \setlength{\tabcolsep}{3pt}

    \resizebox{\linewidth}{!}{%
    \begin{tabular}{@{}llccc@{}}
      \toprule
      \textbf{Mode}
      & \textbf{Transport}
      & \textbf{MSE} $\downarrow$
      & $\boldsymbol{\max_{\ell}|\lambda_g^{(\ell)}|\|\mathbf g\|_2}$
      & $\boldsymbol{A_g}$ $\uparrow$ \\
      \midrule

      \emph{None}
        & ESNN-Diag
        & $0.107160 \pm 0.034566$
        & --
        & -- \\

      & ESNN-Ortho
        & $0.125181 \pm 0.048970$
        & --
        & -- \\

      & ESNN-RadTan
        & $0.101860 \pm 0.032444$
        & --
        & -- \\

      \midrule

      \emph{Learned}
        & ESNN-Diag
        & \textcolor{Top3}{\textbf{$0.020548 \pm 0.001641$}}
        & $0.267400 \pm 0.070406$
        & $1.000$ \\

      & ESNN-Ortho
        & \textcolor{Top1}{\textbf{$0.019687 \pm 0.001452$}}
        & $0.400903 \pm 0.072151$
        & $1.000$ \\

      & ESNN-RadTan
        & $0.021276 \pm 0.003345$
        & $2.055211 \pm 0.633743$
        & $1.000$ \\

      \midrule

      \emph{Fixed}
        & ESNN-Diag
        & $0.020838 \pm 0.002625$
        & $0.346428 \pm 0.035724$
        & $1.000$ \\

      & ESNN-Ortho
        & \textcolor{Top2}{\textbf{$0.019823 \pm 0.001651$}}
        & $0.609956 \pm 0.177071$
        & $1.000$ \\

      & ESNN-RadTan
        & $0.023381 \pm 0.005414$
        & $1.379156 \pm 0.301397$
        & $1.000$ \\

      \bottomrule
    \end{tabular}%
    }

  \end{subtable}

\end{table}

\subsection{Mesh-Based Physical Dynamics}

\paragraph{Q3: Direction-Dependent Fields and Mesh Dynamics.}
We next evaluate ESNN on three physical-simulation benchmarks from
MeshGraphNets~\citep{pfaff2021learning}: \textsc{CylinderFlow},
\textsc{DeformingPlate}, and \textsc{Airfoil}, covering incompressible flow,
structural deformation, and compressible aerodynamics. These tasks are defined
on irregular simulation meshes and combine vector fields with invariant scalar
quantities and node types. Their dynamics contain strong directional structure
arising from flow, deformation, spatial gradients, and boundary geometry,
making them a natural test bed for the transport mechanisms introduced in
Sections~\ref{sec:canonical_form} and~\ref{sec:transports}. Following
\citet{pfaff2021learning}, we report root mean squared error (RMSE) for
one-step prediction, 50-step autoregressive rollout, and rollout over the full
trajectory. Table~\ref{tab:meshgraphnets_results} shows that the benefit of geometric
transport depends on both the physical system and the prediction horizon. The
clearest gains occur on \textsc{DeformingPlate}, where all nontrivial ESNN
variants improve over MeshGraphNets at every horizon. In particular,
ESNN-RadTan reduces the RMSE from $0.25$ to $0.08$ for one-step prediction,
from $1.8$ to $1.0$ over 50 steps, and from $15.1$ to $5.8$ over the full
trajectory. On \textsc{CylinderFlow}, ESNN-Ortho remains close to
MeshGraphNets at one step and improves the full-trajectory error from $40.88$
to $35.94$, although MeshGraphNets performs better at the intermediate
50-step horizon. The horizon dependence is even more pronounced on
\textsc{Airfoil}: ESNN is less accurate for one-step and 50-step prediction,
but ESNN-RadTan reduces the full-trajectory error from $11529$ to $7787$. Overall, the mesh benchmarks do not show a uniform advantage across all systems
and horizons. Instead, the strongest gains appear in structural deformation
and in selected long-horizon rollouts, where geometric information must be
propagated repeatedly through the evolving state. This is notable because
MeshGraphNets is designed specifically for learned simulation on unstructured
meshes, whereas ESNN uses the same general transport framework across all
geometric domains considered in this work.
\begin{table*}[t]
    \centering
    \scriptsize
    \setlength{\tabcolsep}{4.0pt}
    \renewcommand{\arraystretch}{1.0}

    \caption{
    \emph{Mesh-based physical dynamics.}
    RMSE ($\times 10^{-3}$) for one-step prediction, 50-step autoregressive
    rollout, and rollout over the complete trajectory on
    \textsc{CylinderFlow}, \textsc{DeformingPlate}, and \textsc{Airfoil}.
    MeshGraphNets results are taken from \citet{pfaff2021learning}.
    The best full-trajectory result for each benchmark is shown in
    \textbf{bold}. A dash denotes a pending ESNN-Id result. Lower is better.
    }
    \label{tab:meshgraphnets_results}

    \resizebox{\textwidth}{!}{%
    \begin{tabular}{@{}lccc|ccc|ccc@{}}
        \toprule

        & \multicolumn{3}{c}{\textbf{CylinderFlow}}
        & \multicolumn{3}{c}{\textbf{DeformingPlate}}
        & \multicolumn{3}{c}{\textbf{Airfoil}} \\

        \cmidrule(lr){2-4}
        \cmidrule(lr){5-7}
        \cmidrule(lr){8-10}

        \textbf{Method}
        & \textbf{1-Step}
        & \textbf{50-Step}
        & \textbf{Full}
        & \textbf{1-Step}
        & \textbf{50-Step}
        & \textbf{Full}
        & \textbf{1-Step}
        & \textbf{50-Step}
        & \textbf{Full} \\

        \midrule

        MeshGraphNets
        & $2.34 \pm 0.12$
        & $6.3 \pm 0.7$
        & $40.88 \pm 7.2$
        & $0.25 \pm 0.05$
        & $1.8 \pm 0.5$
        & $15.1 \pm 4.0$
        & $314 \pm 36$
        & $582 \pm 37$
        & $11529 \pm 1203$ \\

        \midrule

        ESNN-Id
        & 3.96 & 15.8 & 69.6
        & 0.15 & 1.3 & 7.7
        & 5529 & 9910 & 17974 \\

        ESNN-Diag
        & 2.82
        & 10.3
        & 37.04
        & 0.17
        & 1.4
        & 9.2
        & 5130
        & 9616
        & 11598 \\

        ESNN-Ortho
        & 2.40
        & 7.7
        & \textbf{35.94}
        & 0.13
        & 1.1
        & 8.1
        & 3014
        & 4722
        & 10328 \\

        ESNN-RadTan
        & 3.08
        & 10.9
        & 49.27
        & 0.08
        & 1.0
        & \textbf{5.8}
        & 2584
        & 3346
        & \textbf{7787} \\

        \bottomrule
    \end{tabular}%
    }
\end{table*}

\begin{table}[h!]
    \centering
    \scriptsize
    \setlength{\tabcolsep}{2.3pt}
    \renewcommand{\arraystretch}{0.78}

    \caption{
    \emph{ModelNet40 classification under rotations.}
    Classification accuracy (\%) under the $z/z$,
    $z/\mathrm{SO}(3)$, and $\mathrm{SO}(3)/\mathrm{SO}(3)$
    train/test rotation protocols. The $z/\mathrm{SO}(3)$ setting is the
    out-of-distribution rotation regime: training examples are rotated only
    around the vertical axis, whereas arbitrary three-dimensional rotations
    are encountered at test time.
    $\Delta_{\mathrm{OOD}}$ denotes the absolute change in accuracy between
    $z/z$ and $z/\mathrm{SO}(3)$; lower values indicate greater robustness
    to this train--test rotation shift. Baseline results follow
    \citet{lippmann2024beyond}. Higher classification accuracy is better.
    }
    \label{tab:modelnet40_results}

    \resizebox{0.62\columnwidth}{!}{%
    \begin{tabular}{@{}lcccc@{}}
        \toprule
        \textbf{Method}
        & $\boldsymbol{z/z}$
        & $\boldsymbol{z/\mathrm{SO}(3)}$
        & $\boldsymbol{\mathrm{SO}(3)/\mathrm{SO}(3)}$
        & $\boldsymbol{\Delta_{\mathrm{OOD}}}$ $\downarrow$ \\
        \midrule

        PointNet
        & 85.9 & 19.6 & 74.7 & 66.3 \\

        RS-CNN
        & 90.3 & 48.7 & 82.6 & 41.6 \\

        DGCNN
        & 90.3 & 33.8 & 88.6 & 56.5 \\

        RI-Conv
        & 86.5 & 86.4 & 86.4 & 0.1 \\

        GC-Conv
        & 89.0 & 89.1 & 89.2 & 0.1 \\

        Luo et al.\ DGCNN
        & 88.4 & 88.4 & 88.9 & 0.0 \\

        LGR-Net
        & 90.9 & 90.9 & 91.1 & 0.0 \\

        Li et al.\ (w/ TTA)
        & 91.6 & 91.6 & 91.6 & 0.0 \\

        CRIN
        & 91.8 & 91.8 & 91.8 & 0.0 \\

        TFN
        & 88.5 & 85.3 & 87.6 & 3.2 \\

        VN-PointNet
        & 77.5 & 77.5 & 77.2 & 0.0 \\

        VN-DGCNN
        & 89.5 & 89.5 & 90.2 & 0.0 \\

        \midrule

        \textbf{ESNN-Id}
        & 84.684 & 85.737 & 85.575 & 1.1 \\

        \textbf{ESNN-Diag}
        & 84.603 & 84.319 & 84.400 & 0.3 \\

        \textbf{ESNN-Ortho}
        & 84.927 & 85.170 & 84.684 & 0.2 \\

        \textbf{ESNN-RadTan}
        & 85.373 & 84.643 & 86.264 & 0.7 \\

        \bottomrule
    \end{tabular}%
    }
\end{table}

\subsection{Rotation Generalization}

\paragraph{Q4: ModelNet40 Rotation Generalization.}
We finally evaluate whether the same geometric transport framework transfers
beyond physical dynamics to point-cloud classification. ModelNet40 contains
CAD models from 40 object categories, represented as point clouds and converted
into local $k$-nearest-neighbor graphs~\citep{wu20153d}. Rather than using an
architecture specialized for point-cloud recognition, ESNN processes these
graphs with the same general geometric framework used throughout the other
experiments. We therefore use ModelNet40 primarily to assess rotation
generalization and cross-domain transfer. We report classification accuracy under the $z/z$, $z/\mathrm{SO}(3)$, and $\mathrm{SO}(3)/\mathrm{SO}(3)$ train/test protocols.
The $z/\mathrm{SO}(3)$ setting provides the most informative robustness test:
the model is trained only on rotations around the vertical axis and evaluated
on arbitrary three-dimensional orientations. Table~\ref{tab:modelnet40_results}
also reports $\Delta_{\mathrm{OOD}}$, the absolute change in accuracy between
the $z/z$ and $z/\mathrm{SO}(3)$ settings, with lower values indicating greater
robustness to this rotation shift. Baseline results follow
\citet{lippmann2024beyond}. Across all three protocols, ESNN maintains accuracies of approximately $84$--$86\%$ and changes only marginally under unseen rotations.
$\Delta_{\mathrm{OOD}}$ is at most $1.1$ percentage points across the ESNN
variants, and is only $0.2$ points for ESNN-Ortho. By contrast,
orientation-sensitive baselines such as PointNet and DGCNN lose more than
$50$ percentage points when arbitrary three-dimensional rotations are
introduced only at test time. This stability is consistent with the Euclidean
symmetry built into ESNN rather than with exposure to the full range of test
orientations during training. Specialized rotation-robust point-cloud architectures achieve higher absolute classification accuracy, and several are similarly insensitive to the rotation shift. ModelNet40 therefore plays a complementary role in our evaluation: it
shows that the same matrix-valued transport framework used for physical
dynamics can be transferred to a substantially different geometric domain
while retaining robustness to unseen global rotations.

\section{Conclusions and Limitations}
\label{sec:conclusion}

We introduced ESNN, a first-order Cartesian framework that enriches equivariant message passing by learning structured matrix-valued transport between neighboring vector features.
Rather than increasing representation order, ESNN places additional geometric
flexibility in the edge transport itself. We showed that, when relative
displacement is the only covariant geometric input, linear
$O(n)$-equivariant transport reduces to independent radial and tangential
actions, while learned vector features enable richer feature-conditioned
transformations. We also introduced controlled symmetry relaxation for systems
with a preferred ambient direction, recovering full $E(n)$-equivariance when
the directional pathway is inactive. Across the experiments, ESNN improves particle dynamics, recovers the gravity axis from data, achieves strong gains on selected mesh-based dynamics and long-horizon rollouts, and remains robust to unseen rotations. The current formulation is limited to scalar and first-order vector features,
and the completeness result applies specifically to displacement-conditioned
linear transport. More general covariant inputs admit a broader class of
spatial operators, while the present symmetry-relaxation mechanism assumes a
single global preferred direction. Extending ESNN to richer representation
types, local or gauge-aware transport, and more general symmetry-breaking
fields is a natural direction for future work.

\bibliography{references}
\bibliographystyle{iclr2027_conference}
\newpage
\clearpage
\newpage
\appendix
%

\providecolor{Green}{RGB}{0,128,0}
\providecolor{Red}{RGB}{200,0,0}
\providecommand{\cmark}{\textcolor{Green}{\ding{51}}}
\providecommand{\xmark}{\textcolor{Red}{\ding{55}}}

\providecolor{Top1}{HTML}{0072B2}
\providecolor{Top2}{HTML}{E69F00}
\providecolor{Top3}{HTML}{CC79A7}
\providecolor{ufill}{HTML}{E1D5E7}
\providecolor{uborder}{HTML}{9673A6}
\providecolor{vfill}{HTML}{D5E8D4}
\providecolor{vborder}{HTML}{82B366}

\sisetup{
    mode=text,
    table-alignment-mode=none,
    reset-text-family=false,
    reset-text-series=false,
    reset-text-shape=false,
    table-number-alignment=center,
    table-align-comparator=false,
    table-align-text-pre=false,
    table-align-text-post=false,
    round-mode=uncertainty,
    round-precision=3
}

\providecolor{proofbar}{RGB}{225,225,225}
\providecolor{theorembar}{RGB}{128,0,128}
\providecolor{propositionbar}{RGB}{0,100,0}
\providecolor{definitionbar}{RGB}{0,0,200}
\providecolor{lemmabar}{RGB}{255,140,0}
\providecolor{corollarybar}{RGB}{204,153,255}
\providecolor{remarkbar}{RGB}{0,139,139}
\providecolor{assumptionbar}{RGB}{178,34,34}

%
%

\newenvironment{restatedtheorem}[2][]{%
  \begin{mdframed}[
    hidealllines=true,
    leftline=true,
    linecolor=theorembar,
    linewidth=2pt,
    innerleftmargin=6pt,
    innerrightmargin=0pt,
    innertopmargin=2pt,
    innerbottommargin=2pt
  ]%
  \noindent\textbf{Theorem~\ref{#2}}%
  \if\relax\detokenize{#1}\relax
    \textbf{.}%
  \else
    \textbf{ (#1).}%
  \fi
  \itshape\ignorespaces
}{%
  \end{mdframed}
}

\newenvironment{restatedproposition}[2][]{%
  \begin{mdframed}[
    hidealllines=true,
    leftline=true,
    linecolor=propositionbar,
    linewidth=2pt,
    innerleftmargin=6pt,
    innerrightmargin=0pt,
    innertopmargin=2pt,
    innerbottommargin=2pt
  ]%
  \noindent\textbf{Proposition~\ref{#2}}%
  \if\relax\detokenize{#1}\relax
    \textbf{.}%
  \else
    \textbf{ (#1).}%
  \fi
  \itshape\ignorespaces
}{%
  \end{mdframed}
}

\newenvironment{restatedlemma}[2][]{%
  \begin{mdframed}[
    hidealllines=true,
    leftline=true,
    linecolor=lemmabar,
    linewidth=2pt,
    innerleftmargin=6pt,
    innerrightmargin=0pt,
    innertopmargin=2pt,
    innerbottommargin=2pt
  ]%
  \noindent\textbf{Lemma~\ref{#2}}%
  \if\relax\detokenize{#1}\relax
    \textbf{.}%
  \else
    \textbf{ (#1).}%
  \fi
  \itshape\ignorespaces
}{%
  \end{mdframed}
}

\newenvironment{restatedcorollary}[2][]{%
  \begin{mdframed}[
    hidealllines=true,
    leftline=true,
    linecolor=corollarybar,
    linewidth=2pt,
    innerleftmargin=6pt,
    innerrightmargin=0pt,
    innertopmargin=2pt,
    innerbottommargin=2pt
  ]%
  \noindent\textbf{Corollary~\ref{#2}}%
  \if\relax\detokenize{#1}\relax
    \textbf{.}%
  \else
    \textbf{ (#1).}%
  \fi
  \itshape\ignorespaces
}{%
  \end{mdframed}
}

\providecommand*{\theoremautorefname}{Theorem}
\providecommand*{\lemmaautorefname}{Lemma}
\providecommand*{\propositionautorefname}{Proposition}
\providecommand*{\definitionautorefname}{Definition}
\providecommand*{\corollaryautorefname}{Corollary}
\providecommand*{\remarkautorefname}{Remark}
\providecommand*{\exampleautorefname}{Example}
\providecommand*{\assumptionautorefname}{Assumption}

\makeatletter
\@ifundefined{ifshowcomments}{\newboolean{showcomments}}{}
\makeatother
\setboolean{showcomments}{true}
\providecommand{\fr}[1]{\textcolor{red}{\textbf{[FR: #1]}}}

\begin{center}
    {\LARGE \bfseries Equivariant Sheaf Neural Networks: Learning Geometric Transport on Graphs \par}
    \vspace{0.4em}
    {\Large Supplementary Material\par}
\end{center}

\vspace{1em}

\addcontentsline{toc}{part}{Supplementary Material}
\etocsetnexttocdepth{3}
\vspace{-1cm}
\localtableofcontents

\vspace{1em}
\section{Extended Related Work}
\label{app:related}

This appendix extends the discussion in Section~\ref{sec:back} by placing ESNN within three closely related areas: equivariant geometric learning, sheaf-based representation learning, and methods that relax or reduce symmetry in the
presence of external structure. We focus on approaches that act on geometric vector features, learn edge-dependent transformations between local representations, or adapt the symmetry imposed on message passing.

\subsection{Cartesian and Steerable Equivariant Models}
\label{app:related_equivariant_gnns}

$E(n)$-equivariant graph networks differ primarily in the representations they propagate and the operations used to couple neighboring features. A broad family of Cartesian architectures works directly with invariant scalars and low-order covariant features. EGNN
~\citep{satorras2022enequivariantgraphneural} constructs coordinate updates from invariant edge functions multiplying relative displacement vectors, while
PaiNN~\citep{schutt2021equivariant} and GVP-based models
~\citep{jing2021equivariant} maintain scalar and vector channels and couple them through equivariant operations. Subsequent approaches have enriched this low-order design space through vector--scalar interactions
~\citep{wang2022visnet}, multiple vector channels~\citep{levy2023multiple}, and Cartesian tensor representations~\citep{simeon2306tensornet}. More recent
architectures such as GotenNet~\citep{aykent2025gotennet} similarly pursue the
trade-off between geometric expressivity and computational efficiency using
geometric tensor representations without explicit Clebsch--Gordan
contractions. A complementary family of architectures uses steerable representations transforming under irreducible representations of the rotation group. Tensor Field Networks~\citep{thomas2018tensor}, SE(3)-Transformers
~\citep{fuchs2020se}, SEGNN~\citep{brandstetter2021geometric}, MACE ~\citep{batatia2022mace}, and EquiformerV2~\citep{liao2024equiformerv2}
construct angular interactions using spherical harmonics and tensor-product couplings between representation types. Increasing the maximum representation degree provides access to richer angular structure, but also increases the representation and contraction costs. Recent work has therefore
investigated how much high-degree information is necessary and how it can be processed more efficiently~\citep{cen2024are}. ESNN addresses a different point in this design space: it deliberately remains within first-order Cartesian vector features and increases expressivity through matrix-valued edge
transport rather than higher representation degree.

\subsection{Gauges and Topological Domains}
Another related strategy obtains equivariance by choosing or averaging local reference frames. Frame Averaging~\citep{puny2022frame} constructs exactly
equivariant models by averaging a backbone over an equivariant frame, with FAENet~\citep{duval2023faenet} adapting this idea to atomistic modeling. Learned canonicalization~\citep{kaba2023equivariance} instead predicts a canonical representative before applying a generic backbone, while more recent work has investigated the advantages of retaining tensorial messages beyond canonicalized scalar representations~\citep{lippmann2024beyond}. These
approaches resolve orientation dependence through a choice or averaging of
frames. Gauge-equivariant learning addresses a related but mathematically distinct problem. Gauge-equivariant CNNs~\citep{cohen2019gauge} and coordinate-independent convolutions~\citep{weiler2021coordinate} describe
signals expressed in independently chosen local frames and require the network to transform consistently under changes of those frames. Tangent-bundle convolutional networks connect this perspective to connection Laplacians and
cellular sheaves by discretizing vector-field diffusion on Riemannian manifolds~\citep{battiloro2023tangent}. Torsor CNNs further extend local group-valued transport to arbitrary graphs through edge potentials relating neighboring frames~\citep{li2025learning}. These constructions concern
\emph{local gauge equivariance}, whereas the core ESNN architecture enforces equivariance under a common ambient $O(n)$ transformation. The Orthogonal Transport of ESNN is therefore connection-style rather than a general
gauge-equivariant construction. Finally, recent work has combined Euclidean equivariance with topological domains directly. $E(n)$-Equivariant Topological Neural Networks ~\citep{battiloro2024equivariant} extend equivariant message passing from graphs to combinatorial complexes, while $E(n)$-equivariant message-passing cellular
networks~\citep{kovac2024equivariant} develop related constructions on cellular structures. These approaches enrich the \emph{combinatorial domain} through higher-order cells. ESNN is complementary: it remains graph-based in
the present work and enriches the \emph{edge transport} between geometric feature spaces through a sheaf-inspired construction. A complementary theoretical perspective is provided by
\citep{maruyama2026foundations}, who formulate order-equivariant neural networks on equivariant vector bundles over face posets and show that graph and sheaf layers arise within a common order-equivariant framework. Their equivariance
acts on the combinatorial indexing structure through poset automorphisms, whereas ESNN considers the continuous Euclidean action on geometric feature fibers and constrains the learned edge transport accordingly. The two constructions therefore address distinct, compatible symmetry structures:
order equivariance of the underlying topological domain and ambient $O(n)$-equivariance of geometric transport, respectively.

\subsection{Sheaf Learning and Directionality}
\label{app:related_sheaf_nns}
Cellular sheaves associate local vector spaces with graph cells and relate
them through incidence restriction maps. Early SNNs~\citep{hansen2020sheaf} used the
resulting sheaf Laplacian to generalize graph convolution, while Neural Sheaf Diffusion~\citep{bodnar2022neural} introduced learnable restriction maps and
studied how the resulting diffusion affects heterophily, class separation, and oversmoothing. Together, these works established neural sheaf diffusion as propagation through learned compatibility maps between local feature spaces. Several subsequent architectures have modified either the structure of the sheaf or the operator acting on it. Connection-Laplacian SNNs ~\citep{barbero2022sheaf} construct orthogonal restriction maps motivated
by local tangent-space alignment, providing an especially relevant precedent for geometric transport. Sheaf Attention Networks ~\citep{barbero2022sheafattention} introduce attention into sheaf propagation. Sheaf Hypergraph Networks~\citep{duta2023sheaf} extend the construction to
higher-order relations, while Heterogeneous Sheaf Neural Networks ~\citep{braithwaite2024heterogeneous} use heterogeneous stalk and restriction structures to represent typed graphs. Polynomial Neural Sheaf Diffusion
~\citep{borgi2025polynomial} develops higher-order spectral filters of the sheaf operator. Recent work has also begun to explore richer non-Euclidean stalk geometries, including
second-order representations on SPD manifolds~\citep{peng2026spd}. Directionality has received increasing attention in the sheaf literature. Classical sheaf Laplacians are self-adjoint and therefore do not directly encode independent propagation rules for the two orientations of an edge. Cooperative Sheaf Neural Networks~\citep{ribeiro2025cooperative} introduce
cellular sheaves on directed graphs together with in- and out-degree sheaf Laplacians, while Directed Sheaf Neural Networks
~\citep{fiorini2025sheaves} develop a directed cellular-sheaf construction and a directional sheaf Laplacian. These works establish that asymmetric information flow can be incorporated into sheaf-based learning without forcing the two edge orientations to represent the same interaction. A related categorical perspective is provided by Copresheaf Topological Neural Networks~\citep{hajij2025copresheaf}, which formulate neural architectures in terms of covariant maps between local spaces and their compositions. More
recent categorical formulations have similarly investigated broader notions of equivariance that encompass graph and sheaf neural networks ~\citep{maruyama2026foundations}. These approaches are useful for formalizing directed maps and compositional structure, but their notion of equivariance is
distinct from the ambient Euclidean $E(n)$ symmetry considered here. Appendix~\ref{app:sheaf_transport} develops the formal distinction relevant to ESNN. Classical incidence maps induce node-to-node blocks such as $\rho_{i\to e}^{*}\rho_{j\to e}$, whereas ESNN directly parameterizes the directed node-to-node transport. Adjoint consistency and full-fiber orthogonality identify the exact connection-sheaf specialization; the general
directional operator remains a broader connection-style construction. This distinction is also relevant in light of recent empirical analyses of sheaf learning. In particular, identity-sheaf baselines can perform competitively with learned sheaf operators on several standard heterophilic benchmarks~\citep{caralt2026necessity}, suggesting that learning unrestricted restriction maps is not uniformly beneficial across graph-learning tasks. ESNN does not rely on a generic claim that non-trivial sheaf maps are always
advantageous. Instead, it targets geometric settings in which vector features carry a known Euclidean transformation law and asks how edge-wise transport can be made both directional and symmetry-compatible.

\subsection{Symmetry Relaxation and Subequivariant Inductive Biases}
\label{app:related_symmetry_breaking}

External fields or other environmental structure can reduce the symmetry of an observed system. Existing approaches address this mismatch by enforcing a known subgroup, learning approximate departures from equivariance, or
separating external effects from the equivariant interaction model. Subequivariant Graph Neural Networks~\citep{han2022learning} take the first approach. They explicitly incorporate external fields such as gravity and construct message-passing operations equivariant to the subgroup that preserves the field direction. This provides an exact physical inductive bias when the external field and the resulting subgroup are known in advance. ESNN adopts the same stabilizer-group principle but incorporates the directional signal within the transport framework and controls its contribution through a learnable relaxation coefficient. A second line of work allows equivariance to be violated gradually. Non-stationary continuous filters~\citep{wang2022relax} introduce learnable departures from weight sharing, while approximately equivariant networks~\citep{wang2022approximately} bias dynamics models toward a symmetry without imposing it exactly. Relaxed group convolutions ~\citep{wang2024discovering} use symmetry-dependent weights to identify and quantify symmetry breaking in physical data, and Relaxed EGNNs~\citep{hofgard2024relaxed} extend this idea to continuous $E(3)$-equivariant graph architectures. These methods are designed to learn \emph{approximate} deviations from a prescribed group action. External effects can also be separated from the equivariant interaction model rather than absorbed into its symmetry. Latent Field Discovery ~\citep{kofinas2023latent} decomposes interacting dynamics into an equivariant local interaction and an additional global neural field, allowing spatially varying external effects to be inferred from data. This provides greater flexibility for unknown fields, including effects that depend on absolute position, but introduces a separate field model alongside the equivariant interaction network. ESNN follows the first regime at nonzero relaxation: a fixed preferred direction yields exact equivariance to its stabilizer. The zero-initialized coefficient controls whether that directional feature is used, with full $E(n)$-equivariance recovered exactly at zero. The construction therefore has a group-theoretic guarantee distinct from a generic approximately equivariant perturbation and is narrower than a separately learned spatial field.


\section{Algebraic Foundations of Equivariant Spatial Transport}
\label{app:canonical_derivations}

This appendix develops the algebraic foundations of the transport maps introduced in Section~\ref{sec:canonical_form}. We first derive the separable spatial--channel form, then establish completeness when the relative displacement is the only covariant input, and finally clarify how feature-conditioned
operators extend beyond this setting. Throughout, ``linear transport'' refers to linearity in the transported vector feature once the local edge context is fixed.

\subsection{Tensor-Product Structure of Vector Transport}
\label{app:tensor_decomposition}

The vector component of an ESNN stalk is:
\begin{equation}
    \mathcal{F}_{\mathrm{vec}}(i)
    =
    \mathbb{R}^{n}\otimes\mathbb{R}^{c_v}
\end{equation}
where $\mathbb{R}^{n}$ represents the spatial dimension and
$\mathbb{R}^{c_v}$ indexes the vector channels. The $O(n)$ action affects only the spatial component, so after identifying the tensor product with $\mathbb{R}^{n\times c_v}$ we have:
\begin{equation}
    \mathbf{V}
    \mapsto
    Q\mathbf{V},
    \qquad Q\in O(n)
\end{equation}
This separation between spatial and channel dimensions also carries over to linear operators. In finite dimensions:
\begin{equation}
    \operatorname{End}
    \left(
        \mathbb{R}^{n}\otimes\mathbb{R}^{c_v}
    \right)
    \cong
    \operatorname{End}(\mathbb{R}^{n})
    \otimes
    \operatorname{End}(\mathbb{R}^{c_v})
    \label{eq:end_tensor_decomposition}
\end{equation}
so every linear operator on the vector stalk can be written as a finite sum of separable spatial and channel transformations.

\begin{proposition}[Separable Expansion of Linear Vector Transport]
\label{prop:separable_transport}
Let $\mathcal{L}:
    \mathbb{R}^{n\times c_v}
    \longrightarrow
    \mathbb{R}^{n\times c_v}$ be linear. Then there exist spatial matrices
$\mathbf{S}^{(k)}\in\mathbb{R}^{n\times n}$ and channel matrices
$\mathbf{M}^{(k)}\in\mathbb{R}^{c_v\times c_v}$ such that:
\begin{equation}
    \mathcal{L}(\mathbf{V})
    =
    \sum_{k=1}^{K}
    \mathbf{S}^{(k)}
    \mathbf{V}
    \mathbf{M}^{(k)}
    \label{eq:appendix_separable_transport}
\end{equation}
for some finite $K$.
\end{proposition}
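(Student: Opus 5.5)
The plan is to expand $\mathcal{L}$ in the standard matrix-unit basis, using nothing beyond finite-dimensional linear algebra. Write $\mathbf{E}^{(n)}_{ab}\in\mathbb{R}^{n\times n}$ and $\mathbf{E}^{(c_v)}_{pq}\in\mathbb{R}^{c_v\times c_v}$ for the elementary matrices, with a single one in position $(a,b)$ or $(p,q)$ and zeros elsewhere. The elementary matrices $\mathbf{E}_{bq}\in\mathbb{R}^{n\times c_v}$ form a basis of the vector stalk. By linearity, $\mathcal{L}$ is determined by its coefficients
\begin{equation*}
    L_{ap,bq}=\bigl[\mathcal{L}(\mathbf{E}_{bq})\bigr]_{ap},
\end{equation*}
so that $[\mathcal{L}(\mathbf{V})]_{ap}=\sum_{b,q}L_{ap,bq}V_{bq}$.

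The key step is to check that a single separable term realizes a single coefficient. Consider $\mathbf{E}^{(n)}_{ab}\,\mathbf{V}\,\mathbf{E}^{(c_v)}_{qp}$. Left multiplication picks out row $b$ of $\mathbf{V}$ and places it in row $a$. Right multiplication by $\mathbf{E}^{(c_v)}_{qp}$ then moves column $q$ to column $p$. The result is therefore the matrix whose only nonzero entry is $V_{bq}$, sitting in position $(a,p)$. Summing over indices gives
\begin{equation*}
    \mathcal{L}(\mathbf{V})=\sum_{a,b=1}^{n}\sum_{p,q=1}^{c_v}L_{ap,bq}\,\mathbf{E}^{(n)}_{ab}\,\mathbf{V}\,\mathbf{E}^{(c_v)}_{qp}.
\end{equation*}
This has the required form~\eqref{eq:appendix_separable_transport}, with $K=n^2c_v^2$, $\mathbf{S}^{(k)}=L_{ap,bq}\mathbf{E}^{(n)}_{ab}$ and $\mathbf{M}^{(k)}=\mathbf{E}^{(c_v)}_{qp}$. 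Equivalently, one can group terms to get $K=n^2$: take $\mathbf{S}^{(k)}=\mathbf{E}^{(n)}_{ab}$ and $\mathbf{M}^{(k)}=\sum_{p,q}L_{ap,bq}\mathbf{E}^{(c_v)}_{qp}$.

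As a cleaner alternative that also justifies~\eqref{eq:end_tensor_decomposition}, I would take a dimension-counting route. The map $\mathbf{S}\otimes\mathbf{M}\mapsto(\mathbf{V}\mapsto\mathbf{S}\mathbf{V}\mathbf{M})$ is linear and injective on elementary tensors of basis elements, since distinct pairs $(a,b,q,p)$ give distinct elementary operators. Both sides have dimension $n^2c_v^2$, so the map is an isomorphism.

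The only real obstacle is index bookkeeping, in particular the transpose in the channel factor ($\mathbf{E}_{qp}$ rather than $\mathbf{E}_{pq}$). I would guard against it by verifying the action on a basis element $\mathbf{E}_{bq}$ directly.
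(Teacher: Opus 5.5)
Your proof is correct. It reaches the result by a more elementary, explicit route than the paper.

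The paper takes the finite-dimensional identification $\operatorname{End}(\mathbb{R}^{n}\otimes\mathbb{R}^{c_v})\cong\operatorname{End}(\mathbb{R}^{n})\otimes\operatorname{End}(\mathbb{R}^{c_v})$ as a black box. This gives a finite sum of elementary tensor operators. The paper then uses $\operatorname{vec}(\mathbf{S}\mathbf{V}\mathbf{M})=(\mathbf{M}^{\top}\otimes\mathbf{S})\operatorname{vec}(\mathbf{V})$ to read each elementary tensor as a left spatial and a right channel multiplication.

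You build the decomposition directly instead. Your check that $\mathbf{E}^{(n)}_{ab}\mathbf{V}\mathbf{E}^{(c_v)}_{qp}$ has the single entry $V_{bq}$ at position $(a,p)$ is a concrete statement: the Kronecker factors $\mathbf{E}^{(c_v)}_{pq}\otimes\mathbf{E}^{(n)}_{ab}$ run over all matrix units of the $nc_v\times nc_v$ matrix representing $\mathcal{L}$ in the vec basis. Your transposed channel index $qp$ is exactly the $\mathbf{M}^{\top}$ in the paper's identity.

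Your route gives three things the paper's does not:
\begin{itemize}
\item It is self-contained: you prove the spanning half of the isomorphism that the paper assumes.
\item It gives explicit factors in terms of the entries $L_{ap,bq}$.
\item It gives an explicit bound $K\le n^2$. Grouping the other way, with $\mathbf{S}^{(k)}=\sum_{a,b}L_{ap,bq}\mathbf{E}^{(n)}_{ab}$ and $\mathbf{M}^{(k)}=\mathbf{E}^{(c_v)}_{qp}$, also gives $K\le c_v^2$.
\end{itemize}
The paper's route is shorter. It also keeps the tensor-product structure in view, and the later commutant computation for Theorem~\ref{thm:maximal_expressivity} relies on that structure.

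One small imprecision is in your optional dimension-counting remark. Distinct images of the basis tensors do not by themselves give injectivity; you need the images to be linearly independent. They are: the operator indexed by $(a,b,q,p)$ sends $\mathbf{E}_{bq}$ to $\mathbf{E}_{ap}$ and annihilates every other basis element, so these operators are the matrix units of $\operatorname{End}(\mathbb{R}^{n\times c_v})$. Alternatively, your explicit expansion already shows surjectivity, and equal dimensions then give the isomorphism. This does not affect your proof of the proposition itself.
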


\begin{proof}
Equation~\ref{eq:end_tensor_decomposition} implies that any endomorphism of
$\mathbb{R}^{n}\otimes\mathbb{R}^{c_v}$ can be expressed as a finite sum of
elementary tensor-product operators. Under the matrix identification
$\mathbb{R}^{n}\otimes\mathbb{R}^{c_v}
\simeq\mathbb{R}^{n\times c_v}$, the standard vectorization identity:
\begin{equation}
    \operatorname{vec}
    \left(
        \mathbf{S}\mathbf{V}\mathbf{M}
    \right)
    =
    \left(
        \mathbf{M}^{\top}\otimes\mathbf{S}
    \right)
    \operatorname{vec}(\mathbf{V})
\end{equation}
maps each elementary tensor-product operator to a left spatial action and a
right channel action, yielding
Equation~\ref{eq:appendix_separable_transport}.
\end{proof}

Proposition~\ref{prop:separable_transport} is purely algebraic and does not by
itself enforce equivariance. In ESNN, the spatial and channel matrices depend
on the local edge context, so equivariance must be imposed on this dependence.
For a transformed context $Q\!\cdot\!\mathcal{C}_{ij}$, the sufficient
component-wise conditions introduced in
Section~\ref{sec:canonical_form} are:
\begin{equation}
    \mathbf{S}^{(k)}_{ij}
    (Q\!\cdot\!\mathcal{C}_{ij})
    =
    Q
    \mathbf{S}^{(k)}_{ij}(\mathcal{C}_{ij})
    Q^\top,
    \qquad
    \mathbf{M}^{(k)}_{ij}
    (Q\!\cdot\!\mathcal{C}_{ij})
    =
    \mathbf{M}^{(k)}_{ij}(\mathcal{C}_{ij})
    \label{eq:appendix_transport_constraints}
\end{equation}
These are precisely the conditions used in
Proposition~\ref{prop:transport_equivariance}: the decomposition separates the spatial and channel actions, while the covariance constraints determine which
context-dependent choices preserve compatibility with the ambient $O(n)$ action.

\begin{figure}[t]\centering
  \resizebox{\textwidth}{!}{
%
%
%

\begin{tikzpicture}[
  font=\large,
  line cap=round,
  line join=round
]

%
%


\def\nRows{4}
\def\nCols{5}
\def\cs{0.56}

\pgfmathsetmacro{\Vw}{\nCols*\cs}
\pgfmathsetmacro{\Vh}{\nRows*\cs}

\coordinate (Vc) at (0,0);
\coordinate (Vsw) at ($(Vc)+(-\Vw/2,-\Vh/2)$);


%
\providecommand{\bilinearChanBars}[1]{%
  \begin{scope}
    \foreach \i/\h in {
      0/0.30,
      1/0.52,
      2/0.20,
      3/0.44,
      4/0.34
    }{%
      \fill[#1]
        (\i*0.16,0)
        rectangle
        (\i*0.16+0.11,\h);
    }
  \end{scope}%
}


\begin{scope}

  \def\TW{18.4}

  \coordinate (TBL) at (-\TW/2,\Vh/2+4.05);

  \draw[
    rounded corners=4pt,
    fill=esnnGreen,
    draw=esnnGreen,
    line width=1pt
  ]
    (TBL)
    rectangle
    ($(TBL)+(\TW,0.74)$);

  \node[
    text=white,
    font=\bfseries
  ] at ($(TBL)+(\TW/2,0.37)$)
    {Canonical ESNN Transport};

\end{scope}


\node[
  opbox,
  line width=0.9pt,
  inner sep=6pt,
  font=\large,
  align=center
] at (0,\Vh/2+2.55)
  {%
    $\displaystyle
      \mathcal T_{i\leftarrow j}(\mathbf V_j)
      =
      \sum_{k=1}^{K}
      \textcolor{coordPurple}{
        \mathbf S^{(k)}_{ij}
      }
      \mathbf V_j
      \textcolor{scalarSlate}{
        \mathbf M^{(k)}_{ij}
      }
    $
  };


\begin{scope}[shift={(Vsw)}]


  \foreach \r in {0,...,\numexpr\nRows-1\relax}{%
    \foreach \c in {0,...,\numexpr\nCols-1\relax}{%

      \pgfmathsetmacro{\tint}{13+7*mod(\r+\c,2)}

      \fill[vectorTeal!\tint]
        (\c*\cs,\r*\cs)
        rectangle
        (\c*\cs+\cs,\r*\cs+\cs);
    }
  }

  \draw[
    vectorTeal!50,
    line width=0.4pt,
    xstep=\cs,
    ystep=\cs
  ]
    (0,0) grid (\Vw,\Vh);

  \draw[
    vectorTeal,
    line width=1.2pt
  ]
    (0,0) rectangle (\Vw,\Vh);

\end{scope}

\node[
  text=vectorTeal,
  font=\bfseries
] at ($(Vc)+(0,-\Vh/2-0.5)$)
  {$\mathbf V_j$};

\node[
  text=vectorTeal,
  font=\normalsize,
  align=center
] at ($(Vc)+(0,-\Vh/2-1.0)$)
  {%
    vector stalk
    $\mathbf V_j\in\mathbb R^{n\times c_v}$
  };


\draw[
  coordPurple,
  line width=1.0pt,
  <->
]
  ($(Vsw)+(-0.24,0.05)$)
  --
  ($(Vsw)+(-0.24,\Vh)$);

\node[
  rotate=90,
  anchor=south,
  font=\scriptsize\bfseries,
  text=coordPurple
] at ($(Vsw)+(-0.28,\Vh-1.1)$)
  {ambient space $\mathbb R^n$ ($n$ rows)};

\draw[
  scalarSlate,
  line width=1.0pt,
  <->
]
  ($(Vsw)+(0.05,\Vh+0.45)$)
  --
  ($(Vsw)+(\Vw-0.05,\Vh+0.45)$);

\node[
  anchor=south,
  font=\scriptsize\bfseries,
  text=scalarSlate
] at ($(Vsw)+(\Vw/2,\Vh+0.55)$)
  {vector channels $\mathbb R^{c_v}$ ($c_v$ columns)};


\coordinate (Sc) at ($(Vc)+(-\Vw/2-4.05,0)$);

\node[
  block,
  draw=coordPurple,
  fill=coordPurple!8,
  text=coordPurple,
  minimum width=3.0cm,
  minimum height=2.3cm,
  line width=1.1pt
] (Sbox) at (Sc) {};

\node[
  text=coordPurple,
  font=\large\bfseries
] at ($(Sc)+(0,0.74)$)
  {$\mathbf S^{(k)}_{ij}
    \in\mathbb R^{n\times n}$};

\begin{scope}[shift={($(Sc)+(-0.40,-0.42)$)}]

  \draw[
    ->,
    line width=1.1pt,
    draw=coordPurple!30
  ]
    (0,0) -- (0.62,0);

  \draw[
    ->,
    line width=1.3pt,
    draw=coordPurple
  ]
    (0,0) -- (52:0.62);

  \draw[
    ->,
    line width=0.8pt,
    draw=coordPurple
  ]
    (0:0.74) arc (0:52:0.74);

  \fill[coordPurple]
    (0,0) circle (0.025);

\end{scope}

\node[
  font=\tiny,
  text=coordPurple
] at ($(Sc)+(0,-0.95)$)
  {Acts on ambient space};

\node[
  font=\scriptsize\bfseries\itshape,
  text=coordPurple,
  align=center
] at ($(Sc)+(0,1.78)$)
  {Left multiplication\\[-1pt]
   (acts on spatial rows)};

\coordinate (SarrA) at ($(Sbox.east)$);
\coordinate (SarrB) at ($(Vsw)+(-0.8,\Vh-1.1)$);

\draw[
  ->,
  coordPurple,
  line width=1.4pt
]
  (SarrA) -- (SarrB);


\coordinate (Mc) at ($(Vc)+(\Vw/2+4.05,0)$);

\node[
  block,
  draw=scalarSlate,
  fill=scalarSlate!8,
  text=scalarSlate,
  minimum width=3.0cm,
  minimum height=2.3cm,
  line width=1.1pt
] (Mbox) at (Mc) {};

\node[
  text=scalarSlate,
  font=\large\bfseries
] at ($(Mc)+(0,0.74)$)
  {$\mathbf M^{(k)}_{ij}
    \in\mathbb R^{c_v\times c_v}$};

\begin{scope}[shift={($(Mc)+(-0.86,-0.62)$)}]
  \bilinearChanBars{scalarSlate!40}
\end{scope}

\draw[
  scalarSlate,
  ->,
  line width=0.8pt
]
  ($(Mc)+(-0.10,-0.54)$)
  --
  ($(Mc)+(0.35,-0.20)$);

\draw[
  scalarSlate,
  ->,
  line width=0.8pt
]
  ($(Mc)+(-0.10,-0.30)$)
  --
  ($(Mc)+(0.35,-0.54)$);

\begin{scope}[shift={($(Mc)+(0.34,-0.62)$)}]

  \foreach \i/\h in {
    0/0.46,
    1/0.22,
    2/0.40,
    3/0.30,
    4/0.50
  }{%
    \fill[scalarSlate]
      (\i*0.16,0)
      rectangle
      (\i*0.16+0.11,\h);
  }

\end{scope}

\node[
  font=\tiny,
  text=scalarSlate
] at ($(Mc)+(0,-0.92)$)
  {Mixes vector channels};

\node[
  font=\scriptsize\bfseries\itshape,
  text=scalarSlate,
  align=center
] at ($(Mc)+(0,1.78)$)
  {Right multiplication\\[-1pt]
   (acts on channel columns)};

\coordinate (MarrA) at ($(Mbox.west)+(0,0.0)$);
\coordinate (MarrB) at ($(Vsw)+(\Vw+0.35,\Vh*0.50)$);

\draw[
  ->,
  scalarSlate,
  line width=1.4pt
]
  (MarrA) -- (MarrB);


\path[
  use as bounding box
]
  (-9.4,-2.8)
  rectangle
  (9.4,\Vh/2+4.85);

\end{tikzpicture}}
  \caption{
    \emph{Spatial--channel factorization of ESNN transport.}
    Each term acts on
    $\mathbf{V}_j\in\mathbb{R}^{n\times c_v}$ along two independent axes:
    $\mathbf{S}_{ij}^{(k)}$ acts on the ambient spatial dimension by left
    multiplication, while $\mathbf{M}_{ij}^{(k)}$ acts on the vector channels by
    right multiplication. Summing the separable terms gives the canonical transport
    $\mathcal{T}_{i\leftarrow j}$. ESNN enforces equivariance by constraining the
    spatial operators to be $O(n)$-covariant and the channel maps to be invariant.}
  \label{fig:bilinear_transport}
\end{figure}

\subsection{Completeness Under Displacement-Only Conditioning}
\label{app:proof_thm1}

This subsection establishes the completeness result stated in
Theorem~\ref{thm:maximal_expressivity}. The key assumption is that the relative
displacement is the only covariant geometric quantity available to the transport,
while any additional conditioning is $O(n)$-invariant. Under this restriction,
equivariance with respect to the stabilizer of a non-zero displacement forces the
spatial action to separate into radial and tangential components. The proof below
makes this constraint explicit and characterizes the resulting transport class.

\begin{restatedtheorem}
{thm:maximal_expressivity}
\enspace
Let $n\geq2$, and consider a linear transport on
$\mathbb{R}^{n}\otimes\mathbb{R}^{c_v}$ whose only covariant geometric
conditioning variable is a non-zero displacement
$\mathbf{r}\in\mathbb{R}^{n}$, with arbitrary additional
$O(n)$-invariant scalar conditioning. Every such $O(n)$-equivariant transport
can be written as:
\begin{equation}
    \mathcal{T}_{\mathbf{r},\boldsymbol{\xi}}(\mathbf{V})
    =
    \mathbf{P}^{\parallel}(\mathbf{r})
    \mathbf{V}\mathbf{A}_{\mathbf{r},\boldsymbol{\xi}}
    +
    \mathbf{P}^{\perp}(\mathbf{r})
    \mathbf{V}\mathbf{B}_{\mathbf{r},\boldsymbol{\xi}}
\end{equation}
for invariant channel endomorphisms
$\mathbf{A}_{\mathbf{r},\boldsymbol{\xi}},
\mathbf{B}_{\mathbf{r},\boldsymbol{\xi}}
\in\mathbb{R}^{c_v\times c_v}$.
\end{restatedtheorem}

\begin{proof}
Let:
\begin{equation}
    \widehat{\mathbf{r}}
    =
    \frac{\mathbf{r}}{\|\mathbf{r}\|},
    \qquad
    \mathbf{P}^{\parallel}
    =
    \widehat{\mathbf{r}}\widehat{\mathbf{r}}^\top,
    \qquad
    \mathbf{P}^{\perp}
    =
    \mathbf{I}_n-\mathbf{P}^{\parallel}
\end{equation}

For a fixed non-zero displacement $\mathbf{r}$, consider its stabilizer:
\begin{equation}
    H_{\mathbf{r}}
    =
    \left\{
        Q\in O(n):
        Q\mathbf{r}=\mathbf{r}
    \right\}
    \cong O(n-1)
\end{equation}
Under this subgroup, the spatial representation decomposes as:
\begin{equation}
    \mathbb{R}^{n}
    =
    \operatorname{span}\{\widehat{\mathbf{r}}\}
    \oplus
    \widehat{\mathbf{r}}^{\perp}
    \label{eq:stabilizer_decomposition}
\end{equation}
The stabilizer acts trivially on the radial component
$\operatorname{span}\{\widehat{\mathbf{r}}\}$ and through the standard
$O(n-1)$ representation on the tangential component
$\widehat{\mathbf{r}}^{\perp}$.

Let $\boldsymbol{\xi}$ denote any additional $O(n)$-invariant scalar
conditioning available to the transport, and consider:
\[
    \mathcal{T}_{\mathbf{r},\boldsymbol{\xi}}:
    \mathbb{R}^{n}\otimes\mathbb{R}^{c_v}
    \longrightarrow
    \mathbb{R}^{n}\otimes\mathbb{R}^{c_v}
\]
Since $\boldsymbol{\xi}$ is invariant, $O(n)$-equivariance requires:
\begin{equation}
    \mathcal{T}_{Q\mathbf{r},\boldsymbol{\xi}}
    (Q\mathbf{V})
    =
    Q\,
    \mathcal{T}_{\mathbf{r},\boldsymbol{\xi}}(\mathbf{V})
    \qquad
    \forall Q\in O(n)
    \label{eq:displacement_transport_equivariance}
\end{equation}

For every $Q\in H_{\mathbf{r}}$, the displacement is unchanged, so
Equation~\ref{eq:displacement_transport_equivariance} reduces to:
\begin{equation}
    \mathcal{T}_{\mathbf{r},\boldsymbol{\xi}}(Q\mathbf{V})
    =
    Q\,
    \mathcal{T}_{\mathbf{r},\boldsymbol{\xi}}(\mathbf{V})
    \label{eq:stabilizer_commutation}
\end{equation}
Thus, for fixed $(\mathbf{r},\boldsymbol{\xi})$, the transport must commute
with the action of the stabilizer $H_{\mathbf{r}}$.

The two spatial subspaces in
Equation~\ref{eq:stabilizer_decomposition} carry inequivalent representations
of $H_{\mathbf{r}}$: the radial line carries the trivial representation,
whereas $\widehat{\mathbf{r}}^{\perp}$ carries the standard representation of
$O(n-1)$. An intertwining operator therefore cannot mix the radial and
tangential components. On the tangential subspace, the spatial part of the
commutant is proportional to the identity. Since
$\mathbb{R}^{c_v}$ carries no non-trivial $O(n)$ action, arbitrary linear maps
remain available on the channel dimension. Consequently:
\begin{equation}
    \operatorname{End}_{H_{\mathbf{r}}}
    \left(
        \mathbb{R}^{n}\otimes\mathbb{R}^{c_v}
    \right)
    =
    \left(
        \mathbf{P}^{\parallel}
        \otimes
        \operatorname{End}(\mathbb{R}^{c_v})
    \right)
    \oplus
    \left(
        \mathbf{P}^{\perp}
        \otimes
        \operatorname{End}(\mathbb{R}^{c_v})
    \right)
    \label{eq:stabilizer_commutant}
\end{equation}
Hence the transport must take the form:
\begin{equation}
    \boxed{
    \mathcal{T}_{\mathbf{r},\boldsymbol{\xi}}(\mathbf{V})
    =
    \mathbf{P}^{\parallel}(\mathbf{r})
    \mathbf{V}\mathbf{A}_{\mathbf{r},\boldsymbol{\xi}}
    +
    \mathbf{P}^{\perp}(\mathbf{r})
    \mathbf{V}\mathbf{B}_{\mathbf{r},\boldsymbol{\xi}}
    }
    \label{eq:appendix_radial_tangential_complete}
\end{equation}
for channel endomorphisms
$\mathbf{A}_{\mathbf{r},\boldsymbol{\xi}},
\mathbf{B}_{\mathbf{r},\boldsymbol{\xi}}
\in\mathbb{R}^{c_v\times c_v}$.

It remains to determine how these channel maps may depend on the orientation
of $\mathbf{r}$. From
Equation~\ref{eq:displacement_transport_equivariance} and:
\[
    \mathbf{P}^{\parallel}(Q\mathbf{r})
    =
    Q\mathbf{P}^{\parallel}(\mathbf{r})Q^\top,
    \qquad
    \mathbf{P}^{\perp}(Q\mathbf{r})
    =
    Q\mathbf{P}^{\perp}(\mathbf{r})Q^\top
\]
the uniqueness of the radial--tangential block decomposition gives:
\begin{equation}
    \mathbf{A}_{Q\mathbf{r},\boldsymbol{\xi}}
    =
    \mathbf{A}_{\mathbf{r},\boldsymbol{\xi}},
    \qquad
    \mathbf{B}_{Q\mathbf{r},\boldsymbol{\xi}}
    =
    \mathbf{B}_{\mathbf{r},\boldsymbol{\xi}}
\end{equation}
The channel endomorphisms can therefore depend on the displacement only
through $O(n)$-invariant quantities such as $\|\mathbf{r}\|$, together with
the additional invariant conditioning $\boldsymbol{\xi}$. This establishes
Equation~\ref{eq:appendix_radial_tangential_complete} as the complete class of
linear $O(n)$-equivariant transports under the stated conditioning assumptions.
\end{proof}

\paragraph{Scope of the result.}
First, arbitrary additional $O(n)$-invariant scalar information does not alter
the radial--tangential form: it may change the channel maps
$\mathbf{A}_{\mathbf{r},\boldsymbol{\xi}}$ and
$\mathbf{B}_{\mathbf{r},\boldsymbol{\xi}}$, but introduces no additional
spatial operators. ESNN parameterizes these channel transformations through the
structured factorization $\mathbf{W}\mathbf{D}(\mathbf{g}_{ij})$. Second, the completeness statement relies on the full orthogonal group
$O(n)$. Under $SO(n)$ alone, additional orientation-sensitive spatial
operators may be admissible. In three dimensions, for example, consider
$[\widehat{\mathbf{r}}]_{\times}$ defined by:
\begin{equation}
    [\widehat{\mathbf{r}}]_{\times}\mathbf{v}
    =
    \widehat{\mathbf{r}}\times\mathbf{v}
\end{equation}
For $Q\in O(3)$, this operator transforms as:
\begin{equation}
    [Q\widehat{\mathbf{r}}]_{\times}
    =
    \det(Q)\,
    Q[\widehat{\mathbf{r}}]_{\times}Q^\top
\end{equation}
It therefore satisfies the required conjugation law for
$Q\in SO(3)$ but acquires an additional sign under reflections. Theorem~\ref{thm:maximal_expressivity}
should consequently be understood as a completeness result for
displacement-conditioned linear $O(n)$-equivariant transport, not for the
larger class of arbitrary $SO(n)$-equivariant kernels.

\subsection{Feature-Conditioned Spatial Operators}
\label{app:feature_conditioned_transport}

The completeness result of
Appendix~\ref{app:proof_thm1} assumes that the relative displacement is the
only covariant geometric quantity available to the transport. ESNN can also
use the learned vector features at the endpoints of an edge to construct
spatial operators. These additional covariant quantities enlarge the local
geometric context and therefore allow transport mechanisms beyond the
radial--tangential form. The resulting transport may depend nonlinearly on the
evolving feature state, while remaining linear in the transported vector
feature once the local context is fixed.

A simple example is the cross-feature operator:
\begin{equation}
    \mathbf{C}_{ij}
    =
    \mathbf{V}_j\mathbf{V}_i^\top
\end{equation}
Under a common orthogonal transformation of the vector features, it transforms
by conjugation:
\begin{equation}
    \mathbf{C}_{ij}
    \mapsto
    Q\mathbf{C}_{ij}Q^\top
\end{equation}
Its skew-symmetric component:
\begin{equation}
    \mathbf{\Omega}^{V}_{ij}
    =
    \mathbf{V}_j\mathbf{V}_i^\top
    -
    \mathbf{V}_i\mathbf{V}_j^\top
\end{equation}
inherits the same transformation law:
\begin{equation}
    \mathbf{\Omega}^{V}_{ij}
    \mapsto
    Q\mathbf{\Omega}^{V}_{ij}Q^\top
\end{equation}
Since the Frobenius norm is invariant under orthogonal conjugation, the
normalized operator:
\begin{equation}
    \widehat{\mathbf{\Omega}}^{V}_{ij}
    =
    \frac{
        \mathbf{\Omega}^{V}_{ij}
    }{
        \|\mathbf{\Omega}^{V}_{ij}\|_F+\varepsilon
    }
\end{equation}
is also a valid $O(n)$-covariant spatial operator. This is the
feature-conditioned skew operator used in the Unified Transport of
Section~\ref{sec:transport_unified}.

Orthogonal Transport uses the same principle with a slightly different
normalization. Starting from
$\mathbf{C}_{ij}=\mathbf{V}_j\mathbf{V}_i^\top$, define:
\begin{equation}
    \widetilde{\mathbf{C}}_{ij}
    =
    \frac{
        \mathbf{C}_{ij}
    }{
        \|\mathbf{C}_{ij}\|_F+\varepsilon
    },
    \qquad
    \mathbf{\Omega}_{ij}
    =
    \widetilde{\mathbf{C}}_{ij}
    -
    \widetilde{\mathbf{C}}_{ij}^{\top}
\end{equation}
The invariance of the Frobenius norm again gives:
\begin{equation}
    \mathbf{\Omega}_{ij}
    \mapsto
    Q\mathbf{\Omega}_{ij}Q^\top
\end{equation}
If $\beta_{ij}$ is an invariant scalar, the matrix exponential preserves this
conjugation law:
\begin{equation}
    \mathbf{R}_{ij}
    =
    \exp\!\left(
        \beta_{ij}\mathbf{\Omega}_{ij}
    \right)
    \mapsto
    Q\mathbf{R}_{ij}Q^\top
\end{equation}
Because $\mathbf{\Omega}_{ij}$ is skew-symmetric,
$\mathbf{R}_{ij}\in SO(n)$, recovering the spatial factor used by the
Orthogonal Transport of Section~\ref{sec:transport_II}.

These constructions illustrate why the displacement-only completeness result
does not extend directly to feature-conditioned transport. Additional
covariant vector features provide more geometric structure than the single
direction $\mathbf{r}_{ij}$, so the stabilizer argument that restricts the
spatial action to $\mathbf{P}^{\parallel}$ and $\mathbf{P}^{\perp}$ no longer
applies in general. The Unified Transport exploits this additional freedom by
combining displacement- and feature-conditioned spatial operators. We do not
claim that its particular operator set spans the complete class of
feature-conditioned $O(n)$-equivariant transports.


\section{Cellular Sheaves, Directed Transport, and Connections}
\label{app:sheaf_transport}

ESNN directly learns transport maps between neighboring vector features. This
appendix clarifies how these maps relate to the sheaf perspective introduced in
Section~\ref{sec:background}. We first show how classical cellular-sheaf
restriction maps induce effective node-to-node couplings, and then explain how
directly parameterizing these couplings leads naturally to directed transport
on a quiver. Finally, we identify the additional conditions under which ESNN
recovers a self-adjoint, connection-style specialization. Throughout, the
ambient $O(n)$-equivariance of ESNN refers to a common transformation of the
physical geometry and should be distinguished from covariance under independent
changes of local reference frame.

\subsection{Cellular Sheaves and Induced Node-to-Node Coupling}
\label{app:sheaf_operators}

The connection between cellular sheaves and ESNN is most easily seen through
the node-to-node interaction induced by the sheaf Laplacian. Let
$G=(\mathcal{V},\mathcal{E})$ be a graph. A cellular sheaf $\mathcal{F}$
assigns a vector space $\mathcal{F}(i)$ to each vertex
$i\in\mathcal{V}$ and a vector space $\mathcal{F}(e)$ to each edge
$e\in\mathcal{E}$. For every vertex--edge incidence
$i\trianglelefteq e$, these local spaces are related by a linear restriction
map:
\begin{equation}
    \rho_{i\to e}:
    \mathcal{F}(i)
    \longrightarrow
    \mathcal{F}(e)
\end{equation}
The restriction maps place the features associated with the endpoints of an
edge in a common edge space, where their compatibility can be compared. After
choosing an orientation for each edge $e=\{i,j\}$, the degree-$0$ coboundary
measures this disagreement as:
\begin{equation}
    (\delta_{\mathcal{F}}\mathbf{h})_e
    =
    \rho_{i\to e}\mathbf{h}_i
    -
    \rho_{j\to e}\mathbf{h}_j
    \label{eq:appendix_coboundary}
\end{equation}
up to the chosen orientation convention. With standard Euclidean inner
products, the corresponding cellular-sheaf Laplacian is
$\mathbf{L}_{\mathcal{F}}
    =
    \delta_{\mathcal{F}}^{*}
    \delta_{\mathcal{F}}$, where $^{*}$ denotes the adjoint. For a single edge
$e=\{i,j\}$, its contribution to the Laplacian on the two endpoint stalks is:
\begin{equation}
    \mathbf{L}_{e}
    =
    \begin{pmatrix}
        \rho_{i\to e}^{*}\rho_{i\to e}
        &
        -\rho_{i\to e}^{*}\rho_{j\to e}
        \\[2mm]
        -\rho_{j\to e}^{*}\rho_{i\to e}
        &
        \rho_{j\to e}^{*}\rho_{j\to e}
    \end{pmatrix}
    \label{eq:edge_sheaf_laplacian}
\end{equation}

The off-diagonal blocks reveal the effective interaction between neighboring
node stalks. In particular, information from node $j$ contributes to node $i$
through the composition:
\begin{equation}
    \rho_{i\to e}^{*}\rho_{j\to e}
    :
    \mathcal{F}(j)
    \longrightarrow
    \mathcal{F}(i)
    \label{eq:induced_sheaf_transport}
\end{equation}
up to the conventional minus sign in the Laplacian. The first restriction map
expresses the feature at $j$ in the common edge space, while the adjoint of the
restriction at $i$ maps the resulting representation back to the stalk at
$i$. Their composition therefore acts as an induced linear transport between
the two neighboring node spaces.

This effective node-to-node coupling is the point of departure for ESNN.
Rather than parameterizing separate restriction maps through an explicit edge
stalk and obtaining their composition indirectly, ESNN learns the neighboring
vector transport itself:
\begin{equation}
    \mathcal{T}_{i\leftarrow j}:
    \mathcal{F}_{\mathrm{vec}}(j)
    \longrightarrow
    \mathcal{F}_{\mathrm{vec}}(i)
\end{equation}
The transport is then constrained to satisfy the transformation law of the
ambient Euclidean representation, as developed in Sections~\ref{sec:canonical_form}
and~\ref{sec:transports}. This preserves the sheaf perspective of local
feature spaces connected by linear maps while allowing the two orientations
of an interaction to be parameterized directly. In general, however, the
resulting ESNN operator need not arise from a classical cellular-sheaf
Laplacian. The additional conditions under which such a realization is
recovered are developed in the following subsections.

\subsection{Directed Transport as a Quiver Representation}
\label{app:directed_transport}

Directly parameterizing node-to-node transport naturally accommodates
asymmetric interactions. A convenient language for describing this structure
is that of quiver representations. A quiver is a directed graph whose vertices
are assigned vector spaces and whose arrows are assigned linear maps between
those spaces. For ESNN, the vertices correspond to node vector stalks and the
arrows to the directed transport maps used during message passing.

For a fixed layer context, let:
\begin{equation}
    \mathcal{Q}
    =
    (\mathcal{V},\mathcal{A})
\end{equation}
denote the directed interaction graph, with an arrow $j\to i$ for every
directed message-passing interaction. Assign the vector space:
\begin{equation}
    \mathcal{F}_{\mathrm{vec}}(i)
    =
    \mathbb{R}^{n}\otimes\mathbb{R}^{c_v}
\end{equation}
to each vertex $i$, and the linear transport:
\begin{equation}
    \mathcal{T}_{i\leftarrow j}:
    \mathcal{F}_{\mathrm{vec}}(j)
    \longrightarrow
    \mathcal{F}_{\mathrm{vec}}(i)
\end{equation}
to each arrow $j\to i$. For a fixed layer context, these assignments define a
representation of the quiver $\mathcal{Q}$. The same construction can equivalently be viewed through the free path
category of $\mathcal{Q}$. Each directed path:
\begin{equation}
    i_0
    \longrightarrow
    i_1
    \longrightarrow
    \cdots
    \longrightarrow
    i_m
\end{equation}
is associated with the composition of its edge transports:
\begin{equation}
    \mathcal{T}_{i_m\leftarrow i_{m-1}}
    \circ
    \cdots
    \circ
    \mathcal{T}_{i_1\leftarrow i_0}
\end{equation}
so the vertex spaces and directed transports extend naturally to a covariant
functor from paths in the interaction graph to finite-dimensional vector
spaces. This categorical viewpoint is not required to define an ESNN layer,
but makes explicit that directed transports can be composed consistently along
paths. The ambient equivariance of the individual edge maps is preserved under this
composition. If every arrow satisfies:
\begin{equation}
    \mathcal{T}'_{i\leftarrow j}\circ Q
    =
    Q\circ\mathcal{T}_{i\leftarrow j}
\end{equation}
where $Q$ denotes its natural action on
$\mathcal{F}_{\mathrm{vec}}=\mathbb{R}^n\otimes\mathbb{R}^{c_v}$, then for any
path $i_0\to i_1\to\cdots\to i_m$:
\begin{equation}
\begin{aligned}
    \mathcal{T}'_{i_m\leftarrow i_{m-1}}
    \circ\cdots\circ
    \mathcal{T}'_{i_1\leftarrow i_0}
    \circ Q
    &=
    Q\circ
    \mathcal{T}_{i_m\leftarrow i_{m-1}}
    \circ\cdots\circ
    \mathcal{T}_{i_1\leftarrow i_0}
\end{aligned}
\end{equation}
Thus the transport associated with a path obeys the same global $O(n)$
intertwining relation as the individual edge maps. This describes the
algebraic composition of the transports; a single ESNN layer still performs
the one-hop aggregation defined in Section~\ref{sec:layer}.

Importantly, the quiver formulation does not require any compatibility
condition between opposite edge orientations. If both $i\to j$ and $j\to i$
are present, the general ESNN construction does not impose:
\begin{equation}
    \mathcal{T}_{i\leftarrow j}
    =
    \mathcal{T}_{j\leftarrow i}^{*},
    \qquad
    \mathcal{T}_{i\leftarrow j}
    =
    \mathcal{T}_{j\leftarrow i}^{-1}
\end{equation}
The two directions may therefore represent genuinely different local
interactions. This distinguishes the general directional ESNN operator from
diffusion generated by a classical self-adjoint sheaf Laplacian and is
consistent with recent directed extensions of sheaf-based learning
~\citep{ribeiro2025cooperative,fiorini2025sheaves}.

This perspective is also closely related to copresheaf neural constructions,
which organize information through directed maps between local feature spaces
and their compositions~\citep{hajij2025copresheaf}. ESNN shares this
covariant, directed view of information flow, but imposes an additional
geometric requirement: each transport must intertwine the common ambient
$O(n)$ action carried by the vector features. The resulting structure is
therefore simultaneously directional at the level of the interaction graph
and equivariant with respect to the Euclidean geometry. Finally, the linearity discussed here is conditional on the local layer
context. The coefficients defining $\mathcal{T}_{i\leftarrow j}$ may depend
non-linearly on the current node, edge, and geometric features, as described
in Section~\ref{sec:canonical_form}. Once that context is fixed, however, each
directed transport is linear in the vector feature being propagated.

\subsection{Adjoint-Consistent and Connection-Style Specializations}
\label{app:connection_specialization}

The directed formulation of Appendix~\ref{app:directed_transport} allows the
two orientations of an edge to carry independent transport maps. Classical
sheaf diffusion is more structured: the off-diagonal blocks of a cellular-sheaf
Laplacian occur in adjoint pairs, which makes the resulting operator
self-adjoint. We now identify the corresponding specialization of ESNN and
then show the stronger conditions under which its edge transport admits an
exact connection-sheaf realization.

Recall the normalized transport operator:
\begin{equation}
    (\mathcal{A}_{\mathcal{T}}\mathbf{V})_i
    =
    \nu_{ii}\mathbf{V}_i
    +
    \sum_{j\in\mathcal{N}(i)}
    \nu_{ij}
    \mathcal{T}_{i\leftarrow j}(\mathbf{V}_j)
    \label{eq:appendix_transport_operator}
\end{equation}
For the left-right transport form introduced in
Equation~\ref{eq:canonical_transport}:
\begin{equation}
    \mathcal{T}(\mathbf{V})
    =
    \sum_{k}
    \mathbf{S}^{(k)}
    \mathbf{V}
    \mathbf{M}^{(k)}
\end{equation}
the adjoint with respect to the Frobenius inner product is:
\begin{equation}
    \mathcal{T}^{*}(\mathbf{U})
    =
    \sum_{k}
    \left(\mathbf{S}^{(k)}\right)^{\top}
    \mathbf{U}
    \left(\mathbf{M}^{(k)}\right)^{\top}
    \label{eq:transport_adjoint}
\end{equation}
Indeed:
\begin{equation}
    \left\langle
        \mathbf{U},
        \mathbf{S}\mathbf{V}\mathbf{M}
    \right\rangle_F
    =
    \left\langle
        \mathbf{S}^{\top}\mathbf{U}\mathbf{M}^{\top},
        \mathbf{V}
    \right\rangle_F
\end{equation}
For Radial--Tangential Transport, the spatial projectors are symmetric, so the
adjoint acts only by transposing the corresponding channel maps:
\begin{equation}
    \mathcal{T}^{*}(\mathbf{U})
    =
    \mathbf{P}^{\parallel}\mathbf{U}
    \left(\mathbf{M}^{\parallel}\right)^{\top}
    +
    \mathbf{P}^{\perp}\mathbf{U}
    \left(\mathbf{M}^{\perp}\right)^{\top}
\end{equation}

This makes the condition for self-adjoint transport explicit. On a bidirected
graph, the scalar normalization must agree across the two orientations and the
transport in one direction must be the adjoint of the transport in the other.

\begin{proposition}[Self-Adjoint Transport Operator]
\label{prop:self_adjoint_transport}
Fix the layer context so that all transport maps are linear. Suppose the graph
is bidirected and that, for every adjacent pair $i,j$:
\begin{equation}
    \nu_{ij}=\nu_{ji},
    \qquad
    \mathcal{T}_{j\leftarrow i}
    =
    \mathcal{T}_{i\leftarrow j}^{*}
    \label{eq:adjoint_consistency}
\end{equation}
Then $\mathcal{A}_{\mathcal{T}}$ is self-adjoint with respect to the direct-sum
Frobenius inner product on the node vector features.
\end{proposition}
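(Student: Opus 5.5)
We verify $\langle \mathbf{U}, \mathcal{A}_{\mathcal{T}}\mathbf{V}\rangle = \langle \mathcal{A}_{\mathcal{T}}\mathbf{U}, \mathbf{V}\rangle$ directly for arbitrary node-wise vector fields $\mathbf{U}=(\mathbf{U}_i)_i$ and $\mathbf{V}=(\mathbf{V}_i)_i$. Here the inner product is the direct-sum Frobenius inner product $\langle \mathbf{U},\mathbf{V}\rangle=\sum_i\langle \mathbf{U}_i,\mathbf{V}_i\rangle_F$. Because the layer context is fixed, every $\mathcal{T}_{i\leftarrow j}$ is a fixed linear map. Each $\nu_{ij}$ is then a fixed real scalar, so $\mathcal{A}_{\mathcal{T}}$ in Equation~\ref{eq:appendix_transport_operator} is a linear operator on the finite-dimensional space $\bigoplus_i \mathbb{R}^{n\times c_v}$, and its adjoint is well defined.

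First, split $\langle \mathbf{U},\mathcal{A}_{\mathcal{T}}\mathbf{V}\rangle$ into a diagonal and an off-diagonal part.

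The diagonal part is $\sum_i \nu_{ii}\langle\mathbf{U}_i,\mathbf{V}_i\rangle_F$. It is manifestly symmetric in $(\mathbf{U},\mathbf{V})$, because $\nu_{ii}$ is a real scalar and the Frobenius inner product is symmetric.

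The off-diagonal part is a sum over directed arrows $j\to i$:
\begin{equation*}
\sum_i\sum_{j\in\mathcal{N}(i)}\nu_{ij}\langle \mathbf{U}_i,\mathcal{T}_{i\leftarrow j}\mathbf{V}_j\rangle_F=\sum_i\sum_{j\in\mathcal{N}(i)}\nu_{ij}\langle \mathcal{T}_{i\leftarrow j}^{*}\mathbf{U}_i,\mathbf{V}_j\rangle_F .
\end{equation*}
The equality uses only the definition of the adjoint; for the left--right form the adjoint is given explicitly by Equation~\ref{eq:transport_adjoint}.

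Next, apply the hypotheses in Equation~\ref{eq:adjoint_consistency}, namely $\nu_{ij}=\nu_{ji}$ and $\mathcal{T}_{i\leftarrow j}^{*}=\mathcal{T}_{j\leftarrow i}$. The summand becomes $\nu_{ji}\langle \mathcal{T}_{j\leftarrow i}\mathbf{U}_i,\mathbf{V}_j\rangle_F$.

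Then reindex by swapping the roles of $i$ and $j$. This is a bijection on the set of directed arrows precisely because the graph is bidirected: $j\in\mathcal{N}(i)$ if and only if $i\in\mathcal{N}(j)$. After the swap, the sum becomes
\begin{equation*}
\sum_j\sum_{i\in\mathcal{N}(j)}\nu_{ji}\langle \mathcal{T}_{j\leftarrow i}\mathbf{U}_i,\mathbf{V}_j\rangle_F ,
\end{equation*}
which is exactly the off-diagonal part of $\langle\mathcal{A}_{\mathcal{T}}\mathbf{U},\mathbf{V}\rangle$.

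Adding back the diagonal part gives the identity $\langle \mathbf{U},\mathcal{A}_{\mathcal{T}}\mathbf{V}\rangle=\langle \mathcal{A}_{\mathcal{T}}\mathbf{U},\mathbf{V}\rangle$ for all $\mathbf{U},\mathbf{V}$, so $\mathcal{A}_{\mathcal{T}}$ is self-adjoint.

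There is no real obstacle here. The only point needing care is the reindexing step. It must be stated that bidirectedness makes the arrow-swap a bijection, so that no term is left unmatched. It must also be stated that the self-loop is handled separately by the scalar $\nu_{ii}$ rather than by a transport map, since $\mathcal{T}_{i\leftarrow i}$ is never used (recall $\mathbf{r}_{ii}=\mathbf{0}$).
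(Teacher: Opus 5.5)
Your proof is correct and matches the paper's argument: you split off the self-loop terms (real multiples of the identity), move each $\mathcal{T}_{i\leftarrow j}$ across the Frobenius inner product via its adjoint, and use $\nu_{ij}=\nu_{ji}$ with bidirectedness to reindex the arrow sum. Your explicit observation that swapping $i$ and $j$ is a bijection on the arrow set is the step the paper compresses into ``summing over the bidirected edge set.''
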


\begin{proof}
For two node signals $\mathbf{U}$ and $\mathbf{V}$, the off-diagonal
contribution to
$\langle \mathbf{U},\mathcal{A}_{\mathcal{T}}\mathbf{V}\rangle$ contains:
\begin{equation}
    \nu_{ij}
    \left\langle
        \mathbf{U}_i,
        \mathcal{T}_{i\leftarrow j}\mathbf{V}_j
    \right\rangle
\end{equation}
Using the adjoint relation and symmetry of the scalar coefficient:
\begin{equation}
    \nu_{ij}
    \left\langle
        \mathbf{U}_i,
        \mathcal{T}_{i\leftarrow j}\mathbf{V}_j
    \right\rangle
    =
    \nu_{ji}
    \left\langle
        \mathcal{T}_{j\leftarrow i}\mathbf{U}_i,
        \mathbf{V}_j
    \right\rangle
\end{equation}
Summing over the bidirected edge set gives:
\begin{equation}
    \langle
        \mathbf{U},
        \mathcal{A}_{\mathcal{T}}\mathbf{V}
    \rangle
    =
    \langle
        \mathcal{A}_{\mathcal{T}}\mathbf{U},
        \mathbf{V}
    \rangle
\end{equation}
The diagonal self-loop terms are real scalar multiples of the identity and are
therefore self-adjoint.
\end{proof}

Proposition~\ref{prop:self_adjoint_transport} identifies the
adjoint-consistent specialization of the general directional operator. This
brings ESNN closer to classical sheaf diffusion, but self-adjointness alone is
not sufficient to make an arbitrary
$\mathcal{A}_{\mathcal{T}}$ a cellular-sheaf Laplacian. In particular, a
classical connection sheaf imposes additional structure: transport between
neighboring fibers is orthogonal, and reversing the edge applies its inverse,
which coincides with its adjoint. Under these stronger conditions, the ESNN
edge transport can be realized exactly through classical sheaf restriction
maps.

\begin{proposition}[Connection-Sheaf Realization]
\label{prop:connection_sheaf_realization}
Let $\mathcal{H}
    =
    \mathbb{R}^{n}\otimes\mathbb{R}^{c_v}$ be the vector fiber, and consider an undirected edge $e=\{i,j\}$. Suppose the full transport:
\begin{equation}
    \mathcal{U}_{i\leftarrow j}:
    \mathcal{H}\longrightarrow\mathcal{H}
\end{equation}
is orthogonal and that the reverse transport is its adjoint:
\begin{equation}
    \mathcal{U}_{j\leftarrow i}
    =
    \mathcal{U}_{i\leftarrow j}^{*}
    =
    \mathcal{U}_{i\leftarrow j}^{-1}
    \label{eq:orthogonal_reverse_transport}
\end{equation}
Choose vertex and edge stalks equal to $\mathcal{H}$ and restrictions:
\begin{equation}
    \rho_{i\to e}
    =
    \mathbf{I}_{\mathcal{H}},
    \qquad
    \rho_{j\to e}
    =
    \mathcal{U}_{i\leftarrow j}
    \label{eq:connection_restrictions}
\end{equation}
Then the contribution of $e$ to the cellular-sheaf Laplacian is:
\begin{equation}
    \mathbf{L}_e
    =
    \begin{pmatrix}
        \mathbf{I}_{\mathcal{H}}
        &
        -\mathcal{U}_{i\leftarrow j}
        \\[1mm]
        -\mathcal{U}_{i\leftarrow j}^{*}
        &
        \mathbf{I}_{\mathcal{H}}
    \end{pmatrix}
    \label{eq:connection_edge_laplacian}
\end{equation}
Thus the off-diagonal node couplings are exactly the orthogonal transport and
its adjoint, up to the conventional Laplacian sign.
\end{proposition}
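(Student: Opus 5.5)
The plan is to compute the coboundary and the Laplacian $\mathbf{L}_e=\delta_e^{*}\delta_e$ directly, with the restriction maps chosen in Equation~\ref{eq:connection_restrictions}, and then read off the blocks. First I will fix the orientation of $e$ so that the single-edge coboundary is $(\delta_e\mathbf{h})_e=\rho_{i\to e}\mathbf{h}_i-\rho_{j\to e}\mathbf{h}_j=\mathbf{h}_i-\mathcal{U}_{i\leftarrow j}\mathbf{h}_j$. I will write it as the block row operator $\delta_e=\begin{pmatrix}\mathbf{I}_{\mathcal{H}} & -\mathcal{U}_{i\leftarrow j}\end{pmatrix}:\mathcal{H}\oplus\mathcal{H}\to\mathcal{H}$. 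With the Frobenius inner product on $\mathcal{H}$ and the direct-sum inner product on the two endpoint stalks, its adjoint is the block column $\delta_e^{*}=\begin{pmatrix}\mathbf{I}_{\mathcal{H}}\\ -\mathcal{U}_{i\leftarrow j}^{*}\end{pmatrix}$.

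Next I will multiply the two blocks, or equivalently substitute the restrictions into the general single-edge formula Equation~\ref{eq:edge_sheaf_laplacian}. This gives diagonal blocks $\rho_{i\to e}^{*}\rho_{i\to e}=\mathbf{I}_{\mathcal{H}}$ and $\rho_{j\to e}^{*}\rho_{j\to e}=\mathcal{U}_{i\leftarrow j}^{*}\mathcal{U}_{i\leftarrow j}$. It gives off-diagonal blocks $-\rho_{i\to e}^{*}\rho_{j\to e}=-\mathcal{U}_{i\leftarrow j}$ and $-\rho_{j\to e}^{*}\rho_{i\to e}=-\mathcal{U}_{i\leftarrow j}^{*}$.

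The only nontrivial step is the lower-right block. Here I invoke orthogonality of the full fiber transport: $\mathcal{U}_{i\leftarrow j}^{*}\mathcal{U}_{i\leftarrow j}=\mathbf{I}_{\mathcal{H}}$, which is Equation~\ref{eq:orthogonal_reverse_transport} read as $\mathcal{U}^{*}=\mathcal{U}^{-1}$. This yields Equation~\ref{eq:connection_edge_laplacian}. Finally, using $\mathcal{U}_{j\leftarrow i}=\mathcal{U}_{i\leftarrow j}^{*}$, I will identify the lower-left block as $-\mathcal{U}_{j\leftarrow i}$. The two off-diagonal couplings are then exactly the directed transports $\mathcal{T}_{i\leftarrow j}$ and $\mathcal{T}_{j\leftarrow i}$ up to sign, which is the claimed realization. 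It is also consistent with the adjoint-consistent specialization of Proposition~\ref{prop:self_adjoint_transport}.

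I expect no real obstacle, only bookkeeping. I need to keep the adjoint taken with respect to the Frobenius inner product on $\mathcal{H}=\mathbb{R}^{n}\otimes\mathbb{R}^{c_v}$, consistent with Equation~\ref{eq:transport_adjoint}. I also need to note that the orthogonality assumption concerns the full fiber map, not merely the spatial factor. Otherwise the lower-right block would be $\mathcal{U}^{*}\mathcal{U}\neq\mathbf{I}$, and the clean connection-Laplacian form would fail.
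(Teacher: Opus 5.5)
Your proposal is correct and follows essentially the paper's route. You substitute the chosen restrictions into the single-edge block formula for $\mathbf{L}_e$ and read off $\rho_{i\to e}^{*}\rho_{j\to e}=\mathcal{U}_{i\leftarrow j}$ and $\rho_{j\to e}^{*}\rho_{i\to e}=\mathcal{U}_{i\leftarrow j}^{*}$. You then use orthogonality of the full fiber map to obtain $\mathcal{U}_{i\leftarrow j}^{*}\mathcal{U}_{i\leftarrow j}=\mathbf{I}_{\mathcal{H}}$ in the lower-right block, and your explicit block factorization of $\delta_e^{*}\delta_e$ simply re-derives that general formula.
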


\begin{proof}
Substituting Equation~\ref{eq:connection_restrictions} into
Equation~\ref{eq:edge_sheaf_laplacian} gives:
\begin{equation}
    \rho_{i\to e}^{*}\rho_{i\to e}
    =
    \mathbf{I}_{\mathcal{H}},
    \qquad
    \rho_{i\to e}^{*}\rho_{j\to e}
    =
    \mathcal{U}_{i\leftarrow j}
\end{equation}
Since $\mathcal{U}_{i\leftarrow j}$ is orthogonal:
\begin{equation}
    \rho_{j\to e}^{*}\rho_{j\to e}
    =
    \mathcal{U}_{i\leftarrow j}^{*}
    \mathcal{U}_{i\leftarrow j}
    =
    \mathbf{I}_{\mathcal{H}}
\end{equation}
while:
\begin{equation}
    \rho_{j\to e}^{*}\rho_{i\to e}
    =
    \mathcal{U}_{i\leftarrow j}^{*}
\end{equation}
Equation~\ref{eq:connection_edge_laplacian} follows.
\end{proof}

Proposition~\ref{prop:connection_sheaf_realization} is the graph analogue of a
discrete orthogonal connection and is closely related to
connection-Laplacian SNNs~\citep{barbero2022sheaf}. It also clarifies the
precise sense in which ESNN Orthogonal Transport is connection-style. Its
spatial factor $\mathbf{R}_{i\leftarrow j}\in SO(n)$ has the required
orthogonal structure and recovers the proposition when $c_v=1$ with trivial
channel action, or when that spatial factor is considered in isolation. The
complete ESNN transport, however, also contains channel mixing and
edge-dependent gating and is therefore not generally orthogonal on
$\mathbb{R}^{n}\otimes\mathbb{R}^{c_v}$.

There is consequently a hierarchy of increasingly restrictive cases. General
ESNN transport permits independent directed edge maps. Imposing
Equation~\ref{eq:adjoint_consistency} yields a self-adjoint transport operator,
while additionally requiring the complete fiber maps to be orthogonal gives
the classical connection-sheaf realization of
Proposition~\ref{prop:connection_sheaf_realization}. Outside this final
specialization, the term connection-style refers to the geometric role of the
transport rather than to an exact classical connection sheaf.

\subsection{Ambient Equivariance and Local Gauge Transformations}
\label{app:ambient_vs_gauge}

The connection-style interpretation above should not be confused with local
gauge equivariance. In a classical cellular sheaf, the bases of different
stalks may be changed independently. If $\mathbf{Q}_i$ and $\mathbf{Q}_e$ are
orthogonal changes of basis on a vertex stalk and an incident edge stalk, the
coordinate representation of the corresponding restriction map transforms as:
\begin{equation}
    \rho'_{i\to e}
    =
    \mathbf{Q}_e
    \rho_{i\to e}
    \mathbf{Q}_i^{\top}
    \label{eq:sheaf_basis_change}
\end{equation}
This expresses covariance under independent changes of local reference frame.

ESNN enforces a different symmetry. A single orthogonal transformation
$Q\in O(n)$ acts simultaneously on the physical coordinates and all vector
features:
\begin{equation}
    \mathbf{x}_i
    \mapsto
    Q\mathbf{x}_i+\mathbf{t},
    \qquad
    \mathbf{V}_i
    \mapsto
    Q\mathbf{V}_i
\end{equation}
while the spatial transport transforms as:
\begin{equation}
    \mathbf{S}_{ij}
    \mapsto
    Q\mathbf{S}_{ij}Q^\top
\end{equation}
This is the ambient $E(n)$-equivariance established in
Sections~\ref{sec:canonical_form}--\ref{sec:layer}. It corresponds to
transforming the entire geometric system in a common Cartesian frame rather
than independently changing the basis attached to each node. In particular, the feature-derived Orthogonal Transport is not designed to be
covariant under arbitrary node-wise transformations. For
$\mathbf{C}_{ij}=\mathbf{V}_j\mathbf{V}_i^\top$, independent changes:
\begin{equation}
    \mathbf{V}_i
    \mapsto
    \mathbf{Q}_i\mathbf{V}_i,
    \qquad
    \mathbf{V}_j
    \mapsto
    \mathbf{Q}_j\mathbf{V}_j
\end{equation}
give:
\begin{equation}
    \mathbf{C}'_{ij}
    =
    \mathbf{Q}_j
    \mathbf{C}_{ij}
    \mathbf{Q}_i^\top
\end{equation}
and the skew-symmetric construction used by Orthogonal Transport does not in
general transform as a gauge-covariant map between the two independently
chosen frames. When $\mathbf{Q}_i=\mathbf{Q}_j=Q$, however, the same
construction reduces to conjugation by the common ambient transformation,
which is precisely the covariance required by ESNN.

Thus the Orthogonal Transport should be understood as a
\emph{connection-style ambient transport}. The orthogonal,
adjoint-consistent specialization of
Proposition~\ref{prop:connection_sheaf_realization} admits an exact classical
connection-sheaf realization, whereas the general ESNN architecture does not
claim covariance under arbitrary local gauge transformations.

This distinction also suggests a possible extension of the framework. A
gauge-equivariant ESNN would allow each node to carry its own local frame and
would require a directed transport to transform according to:
\begin{equation}
    \mathcal{T}'_{i\leftarrow j}
    =
    \mathbf{Q}_i
    \mathcal{T}_{i\leftarrow j}
    \mathbf{Q}_j^\top
\end{equation}
under independent local transformations $\mathbf{Q}_i$ and $\mathbf{Q}_j$.
Achieving this property would require transport generators constructed
directly from gauge-covariant quantities rather than the ambient
feature-derived operators used here. Developing such a locally
gauge-equivariant extension is left for future work.


\section{Proofs of Equivariance and Symmetry Results}
\label{app:proofs}

This appendix provides the proofs of the equivariance and symmetry results
stated in the main text. The algebraic proof of the completeness of
displacement-conditioned linear transport is given separately in
Appendix~\ref{app:proof_thm1}, while the self-adjoint and connection-sheaf
specializations are established in Appendix~\ref{app:sheaf_transport}.

\subsection{Proof of Proposition~\ref{prop:transport_equivariance}:
\texorpdfstring{$O(n)$}{O(n)}-Equivariance of the Transport Map}
\label{app:proof_transport_equivariance}

\begin{restatedproposition}
[$O(n)$-Equivariance of the Transport Map]
{prop:transport_equivariance}
\enspace
Suppose that, for every component $k$, the spatial operator satisfies:
\begin{equation}
    \mathbf{S}_{ij}^{(k)}(Q\!\cdot\!\mathcal{C}_{ij})
    =
    Q\mathbf{S}_{ij}^{(k)}(\mathcal{C}_{ij})Q^\top
\end{equation}
and the channel operator $\mathbf{M}_{ij}^{(k)}$ is unchanged under the
$O(n)$ action. Then the transport map in
Equation~\ref{eq:canonical_transport} is $O(n)$-equivariant:
\begin{equation}
    \mathcal{T}'_{i\leftarrow j}(Q\mathbf{V}_j)
    =
    Q\,\mathcal{T}_{i\leftarrow j}(\mathbf{V}_j)
\end{equation}
where $\mathcal{T}'_{i\leftarrow j}$ denotes the transport evaluated from the
transformed geometric inputs.
\end{restatedproposition}

\begin{proof}
Let $Q\in O(n)$ and consider the transport evaluated from the transformed
local edge context $Q\!\cdot\!\mathcal{C}_{ij}$. By assumption, each spatial
operator transforms covariantly:
\begin{equation}
    \mathbf{S}_{ij}^{(k)}(Q\!\cdot\!\mathcal{C}_{ij})
    =
    Q\mathbf{S}_{ij}^{(k)}(\mathcal{C}_{ij})Q^\top
\end{equation}
while each channel operator is invariant:
\begin{equation}
    \mathbf{M}_{ij}^{(k)}(Q\!\cdot\!\mathcal{C}_{ij})
    =
    \mathbf{M}_{ij}^{(k)}(\mathcal{C}_{ij})
\end{equation}
Evaluating the transport on the transformed vector feature therefore gives:
\begin{align}
    \mathcal{T}'_{i\leftarrow j}(Q\mathbf{V}_j)
    &=
    \sum_{k=1}^{K}
    \left(
        Q\mathbf{S}_{ij}^{(k)}Q^\top
    \right)
    (Q\mathbf{V}_j)
    \mathbf{M}_{ij}^{(k)}
    \\
    &=
    Q
    \sum_{k=1}^{K}
    \mathbf{S}_{ij}^{(k)}
    \mathbf{V}_j
    \mathbf{M}_{ij}^{(k)}
    \\
    &=
    Q\,\mathcal{T}_{i\leftarrow j}(\mathbf{V}_j)
\end{align}
which is exactly the required $O(n)$-equivariance relation.
\end{proof}

\subsection{Proof of Lemma~\ref{lem:family2_equiv}:
\texorpdfstring{$O(n)$}{O(n)}-Equivariance of Orthogonal Transport}
\label{app:proof_lemma1}

\begin{restatedlemma}
[$O(n)$-Equivariance of Orthogonal Transport]
{lem:family2_equiv}
\enspace
Let $Q\in O(n)$ act on the vector features as
$\mathbf{V}_i\mapsto Q\mathbf{V}_i$. Then the Orthogonal Transport of
Equation~\ref{eq:orthogonal_transport} satisfies:
\begin{equation}
    \mathcal{T}'_{i\leftarrow j}(Q\mathbf{V}_j)
    =
    Q\,\mathcal{T}_{i\leftarrow j}(\mathbf{V}_j)
\end{equation}
where the primed transport is evaluated from the transformed local context.
\end{restatedlemma}

\begin{proof}
Recall that the Orthogonal Transport is:
\begin{equation}
    \mathcal{T}_{i\leftarrow j}(\mathbf{V}_j)
    =
    \mathbf{R}_{ij}
    \mathbf{V}_j
    \mathbf{M}_{ij},
    \qquad
    \mathbf{M}_{ij}
    =
    \mathbf{W}\mathbf{D}(\mathbf{g}_{ij})
    \label{eq:proof_orthogonal_transport}
\end{equation}
with $
    \mathbf{R}_{ij}
    =
    \exp\!\left(
        \beta_{ij}\mathbf{\Omega}_{ij}
    \right)$
and:
\begin{equation}
    \mathbf{\Omega}_{ij}
    =
    \widetilde{\mathbf{C}}_{ij}
    -
    \widetilde{\mathbf{C}}_{ij}^{\top},
    \qquad
    \widetilde{\mathbf{C}}_{ij}
    =
    \frac{
        \mathbf{C}_{ij}
    }{
        \|\mathbf{C}_{ij}\|_F+\varepsilon
    },
    \qquad
    \mathbf{C}_{ij}
    =
    \mathbf{V}_j\mathbf{V}_i^\top 
    \label{eq:proof_cross_feature}
\end{equation}

Since $\mathbf{\Omega}_{ij}^{\top}=-\mathbf{\Omega}_{ij}$,
$\beta_{ij}\mathbf{\Omega}_{ij}\in\mathfrak{so}(n)$ and therefore
$\mathbf{R}_{ij}\in SO(n)$. Under a common orthogonal transformation:
\begin{equation}
    \mathbf{V}_i'
    =
    Q\mathbf{V}_i,
    \qquad
    \mathbf{V}_j'
    =
    Q\mathbf{V}_j
\end{equation}
The cross-feature matrix therefore transforms as:
\begin{equation}
    \mathbf{C}'_{ij}
    =
    (Q\mathbf{V}_j)
    (Q\mathbf{V}_i)^\top
    =
    Q\mathbf{V}_j
    \mathbf{V}_i^\top Q^\top
    =
    Q\mathbf{C}_{ij}Q^\top
    \label{eq:proof_cross_feature_covariance}
\end{equation}

The Frobenius norm is invariant under orthogonal left and right multiplication,
so $\|\mathbf{C}'_{ij}\|_F=\|\mathbf{C}_{ij}\|_F$. Consequently:
\begin{equation}
    \widetilde{\mathbf{C}}'_{ij}
    =
    Q\widetilde{\mathbf{C}}_{ij}Q^\top
\end{equation}
and hence:
\begin{equation}
    \mathbf{\Omega}'_{ij}
    =
    \widetilde{\mathbf{C}}'_{ij}
    -
    (\widetilde{\mathbf{C}}'_{ij})^\top
    =
    Q
    \left(
        \widetilde{\mathbf{C}}_{ij}
        -
        \widetilde{\mathbf{C}}_{ij}^{\top}
    \right)
    Q^\top
    =
    Q\mathbf{\Omega}_{ij}Q^\top
    \label{eq:proof_omega_covariance}
\end{equation}

The coefficient $\beta_{ij}$ is predicted from invariant edge features, so
$\beta'_{ij}=\beta_{ij}$. For any square matrix $\mathbf{A}$ and invertible
matrix $\mathbf{Q}$:
\begin{equation}
    \exp(\mathbf{Q}\mathbf{A}\mathbf{Q}^{-1})
    =
    \mathbf{Q}\exp(\mathbf{A})\mathbf{Q}^{-1}
    \label{eq:matrix_exp_similarity}
\end{equation}
Since $Q^{-1}=Q^\top$, Equation~\ref{eq:proof_omega_covariance} gives:
\begin{equation}
    \mathbf{R}'_{ij}
    =
    \exp\!\left(
        \beta_{ij}Q\mathbf{\Omega}_{ij}Q^\top
    \right)
    =
    Q
    \exp\!\left(
        \beta_{ij}\mathbf{\Omega}_{ij}
    \right)
    Q^\top
    =
    Q\mathbf{R}_{ij}Q^\top
    \label{eq:proof_rotation_covariance}
\end{equation}

The gate $\mathbf{g}_{ij}$ is likewise constructed from invariant quantities,
and therefore $\mathbf{M}'_{ij}
    =
    \mathbf{M}_{ij}$. Evaluating the transport from the transformed context yields:
\begin{align}
    \mathcal{T}'_{i\leftarrow j}(Q\mathbf{V}_j)
    =
    \mathbf{R}'_{ij}
    (Q\mathbf{V}_j)
    \mathbf{M}_{ij}
    =
    (Q\mathbf{R}_{ij}Q^\top)
    (Q\mathbf{V}_j)
    \mathbf{M}_{ij}
    &=
    Q
    \mathbf{R}_{ij}
    \mathbf{V}_j
    \mathbf{M}_{ij}
    \\
    &=
    Q\,
    \mathcal{T}_{i\leftarrow j}(\mathbf{V}_j)
\end{align}
Thus Orthogonal Transport is $O(n)$-equivariant.
\end{proof}

\subsection{Proof of Lemma~\ref{lem:family3_equiv}:
\texorpdfstring{$O(n)$}{O(n)}-Equivariance of Radial--Tangential Transport}
\label{app:proof_lemma2}

\begin{restatedlemma}
[$O(n)$-Equivariance of Radial--Tangential Transport]
{lem:family3_equiv}
\enspace
For $\mathbf{r}_{ij}\neq\mathbf{0}$, the Radial--Tangential Transport:
\begin{equation}
    \mathcal{T}_{i\leftarrow j}(\mathbf{V}_j)
    =
    \mathbf{P}^{\parallel}_{ij}
    \mathbf{V}_j
    \mathbf{M}^{\parallel}_{ij}
    +
    \mathbf{P}^{\perp}_{ij}
    \mathbf{V}_j
    \mathbf{M}^{\perp}_{ij}
\end{equation}
is $O(n)$-equivariant.
\end{restatedlemma}

\begin{proof}
Let $Q\in O(n)$. Since $\mathbf{r}_{ij}'
    =
    Q\mathbf{r}_{ij}$, orthogonality implies $\|\mathbf{r}_{ij}'\|=\|\mathbf{r}_{ij}\|$, and hence:
\begin{equation}
    \widehat{\mathbf{r}}'_{ij}
    =
    \frac{
        Q\mathbf{r}_{ij}
    }{
        \|\mathbf{r}_{ij}\|
    }
    =
    Q\widehat{\mathbf{r}}_{ij}
\end{equation}

The radial projector consequently transforms by conjugation:
\begin{equation}
    (\mathbf{P}^{\parallel}_{ij})'
    =
    \widehat{\mathbf{r}}'_{ij}
    (\widehat{\mathbf{r}}'_{ij})^\top
    =
    Q\widehat{\mathbf{r}}_{ij}
    \widehat{\mathbf{r}}_{ij}^{\top}
    Q^\top
    =
    Q\mathbf{P}^{\parallel}_{ij}Q^\top
    \label{eq:proof_parallel_projector}
\end{equation}
Since $\mathbf{P}^{\perp}_{ij}
=\mathbf{I}_n-\mathbf{P}^{\parallel}_{ij}$:
\begin{equation}
    (\mathbf{P}^{\perp}_{ij})'
    =
    \mathbf{I}_n
    -
    Q\mathbf{P}^{\parallel}_{ij}Q^\top
    =
    Q
    \left(
        \mathbf{I}_n-\mathbf{P}^{\parallel}_{ij}
    \right)
    Q^\top
    =
    Q\mathbf{P}^{\perp}_{ij}Q^\top
    \label{eq:proof_perpendicular_projector}
\end{equation}
The channel maps
$\mathbf{M}^{\parallel}_{ij}$ and
$\mathbf{M}^{\perp}_{ij}$ are constructed from invariant edge features and are
therefore unchanged by the $O(n)$ action. Hence:
\begin{align}
    \mathcal{T}'_{i\leftarrow j}(Q\mathbf{V}_j)
    &=
    (Q\mathbf{P}^{\parallel}_{ij}Q^\top)
    (Q\mathbf{V}_j)
    \mathbf{M}^{\parallel}_{ij}
    +
    (Q\mathbf{P}^{\perp}_{ij}Q^\top)
    (Q\mathbf{V}_j)
    \mathbf{M}^{\perp}_{ij}
    \\
    &=
    Q
    \left(
        \mathbf{P}^{\parallel}_{ij}
        \mathbf{V}_j
        \mathbf{M}^{\parallel}_{ij}
        +
        \mathbf{P}^{\perp}_{ij}
        \mathbf{V}_j
        \mathbf{M}^{\perp}_{ij}
    \right)
    =
    Q\,
    \mathcal{T}_{i\leftarrow j}(\mathbf{V}_j)
\end{align}
Thus the Radial--Tangential Transport is $O(n)$-equivariant.
\end{proof}

\subsection{Completeness of Displacement-Conditioned Linear Transport}

Theorem~\ref{thm:maximal_expressivity} is proved in
Appendix~\ref{app:proof_thm1}. We do not repeat the argument here. The proof
uses the stabilizer:
\[
    H_{\mathbf{r}}
    =
    \{Q\in O(n):Q\mathbf{r}=\mathbf{r}\}
    \cong O(n-1)
\]
of a non-zero displacement and characterizes the commutant of its action on $\mathbb{R}^{n}\otimes\mathbb{R}^{c_v}$. This yields precisely the two spatial projectors
$\mathbf{P}^{\parallel}$ and $\mathbf{P}^{\perp}$, each accompanied by an
arbitrary invariant endomorphism of the vector-channel space.

\subsection{Proof of Theorem~\ref{thm:layer_equivariance}:
\texorpdfstring{$E(n)$}{E(n)}-Equivariance of the ESNN Layer}
\label{app:proof_thm2}

\begin{restatedtheorem}
[$E(n)$-Equivariance]
{thm:layer_equivariance}
\enspace
Assume that the graph topology is fixed or constructed from
$E(n)$-invariant geometric quantities, that the edge attributes are
$O(n)$-invariant, and that every non-self interaction for which
$\widehat{\mathbf{r}}_{ij}$ is used satisfies
$\mathbf{r}_{ij}\neq\mathbf{0}$.
Assume further that the transport maps satisfy the conditions of
Proposition~\ref{prop:transport_equivariance}. In the absence of explicit
symmetry-relaxing inputs, the ESNN layer defined by
Equations~\ref{eq:edge_embed}--\ref{eq:coord_update} is
$E(n)$-equivariant.
\end{restatedtheorem}

\begin{proof}
Let $(Q,\mathbf{t})\in E(n)$, with $Q\in O(n)$ and
$\mathbf{t}\in\mathbb{R}^n$. We denote quantities evaluated from the transformed
input by an overbar:
\begin{equation}
    \overline{\mathbf{x}}_i
    =
    Q\mathbf{x}_i+\mathbf{t},
    \qquad
    \overline{\mathbf{V}}_i
    =
    Q\mathbf{V}_i,
    \qquad
    \overline{\mathbf{s}}_i
    =
    \mathbf{s}_i
    \label{eq:proof_layer_action}
\end{equation}

By assumption, the graph construction is $E(n)$-invariant, so the neighborhood
sets $\mathcal{N}(i)$ are unchanged by this transformation.

\noindent\textbf{Invariant edge context.} For every non-self interaction:
\begin{equation}
    \overline{\mathbf{r}}_{ij}
    =
    \overline{\mathbf{x}}_i
    -
    \overline{\mathbf{x}}_j
    =
    Q(\mathbf{x}_i-\mathbf{x}_j)
    =
    Q\mathbf{r}_{ij}
    \label{eq:proof_relative_vector}
\end{equation}
Since $Q$ is orthogonal:
\begin{equation}
    \|\overline{\mathbf{r}}_{ij}\|
    =
    \|\mathbf{r}_{ij}\|,
    \qquad
    \widehat{\overline{\mathbf{r}}}_{ij}
    =
    Q\widehat{\mathbf{r}}_{ij}
    \label{eq:proof_relative_direction}
\end{equation}
For every vector channel:
\begin{equation}
    \|Q\mathbf{v}_{i,c}\|
    =
    \|\mathbf{v}_{i,c}\|
\end{equation}
and therefore:
\begin{equation}
    \mathbf{n}(\overline{\mathbf{V}}_i)
    =
    \mathbf{n}(\mathbf{V}_i)
\end{equation}
Likewise:
\begin{align}
    \overline{\mathbf{V}}_i^\top
    \overline{\mathbf{V}}_j
    &=
    \mathbf{V}_i^\top
    Q^\top Q
    \mathbf{V}_j
    =
    \mathbf{V}_i^\top\mathbf{V}_j,
    \\
    \overline{\mathbf{V}}_i^\top
    \widehat{\overline{\mathbf{r}}}_{ij}
    &=
    \mathbf{V}_i^\top
    Q^\top Q
    \widehat{\mathbf{r}}_{ij}
    =
    \mathbf{V}_i^\top
    \widehat{\mathbf{r}}_{ij},
    \\
    \overline{\mathbf{V}}_j^\top
    \widehat{\overline{\mathbf{r}}}_{ij}
    &=
    \mathbf{V}_j^\top
    \widehat{\mathbf{r}}_{ij}
\end{align}
The scalar node features and optional edge attributes are invariant by
assumption. Consequently every component of Equation~\ref{eq:edge_embed} is unchanged $
    \overline{\mathbf{z}}_{ij}
    =
    \mathbf{z}_{ij}
    \label{eq:proof_edge_context_invariant}
$.

\noindent\textbf{Vector and scalar messages.} By Proposition~\ref{prop:transport_equivariance}:
\begin{equation}
    \overline{\mathbf{m}}^{V}_{i\leftarrow j}
    =
    \overline{\mathcal{T}}_{i\leftarrow j}
    (\overline{\mathbf{V}}_j)
    =
    Q
    \mathcal{T}_{i\leftarrow j}(\mathbf{V}_j)
    =
    Q\mathbf{m}^{V}_{i\leftarrow j}
    \label{eq:proof_vector_message}
\end{equation}
This argument applies to any transport family satisfying the covariance
conditions of Section~\ref{sec:canonical_form}, including feature-conditioned
families. The scalar message is:
\begin{equation}
    \mathbf{m}^{s}_{i\leftarrow j}
    =
    \mathbf{s}_j
    +
    \phi_s(\mathbf{z}^{s}_{ij})
\end{equation}
where every argument of $\mathbf{z}^{s}_{ij}$ is invariant. Therefore:
\begin{equation}
    \overline{\mathbf{m}}^{s}_{i\leftarrow j}
    =
    \mathbf{m}^{s}_{i\leftarrow j}
    \label{eq:proof_scalar_message}
\end{equation}

\noindent\textbf{Normalized transport diffusion.} The optional edge weights $\omega_{ij}$ are invariant scalars. Because the graph topology is unchanged, every degree quantity constructed from these weights is also invariant. Hence:
\begin{equation}
    \overline{\nu}_{ij}
    =
    \nu_{ij},
    \qquad
    \overline{\nu}_{ii}
    =
    \nu_{ii}
    \label{eq:proof_diffusion_coefficients}
\end{equation}
Using Equation~\ref{eq:proof_vector_message}:
\begin{align}
    \overline{\mathbf{V}}^{\mathrm{diff}}_i
    =
    \nu_{ii}
    Q\mathbf{V}_i
    +
    \sum_{j\in\mathcal{N}(i)}
    \nu_{ij}
    Q\mathbf{m}^{V}_{i\leftarrow j}
    =
    Q
    \left(
        \nu_{ii}\mathbf{V}_i
        +
        \sum_{j\in\mathcal{N}(i)}
        \nu_{ij}\mathbf{m}^{V}_{i\leftarrow j}
    \right)
    =
    Q\mathbf{V}^{\mathrm{diff}}_i
    \label{eq:proof_vector_diffusion}
\end{align}
Similarly, Equation~\ref{eq:proof_scalar_message} gives:
$\overline{\mathbf{s}}^{\mathrm{diff}}_i
    =
    \mathbf{s}^{\mathrm{diff}}_i
    \label{eq:proof_scalar_diffusion}
$.

\noindent\textbf{Residual feature update.} The vector-channel mixer acts only on the multiplicity dimension:
\begin{align}
    \overline{\widetilde{\mathbf{V}}}_i
    =
    \overline{\mathbf{V}}^{\mathrm{diff}}_i
    \mathbf{W}_V
    =
    (Q\mathbf{V}^{\mathrm{diff}}_i)
    \mathbf{W}_V
    =
    Q\widetilde{\mathbf{V}}_i
\end{align}
For the radial non-linearity $\sigma_V(\mathbf{v})
    =
    a(\|\mathbf{v}\|)\mathbf{v}$, orthogonality of $Q$ gives:
\begin{align}
    \sigma_V(Q\mathbf{v})
    =
    a(\|Q\mathbf{v}\|)
    Q\mathbf{v}
    =
    a(\|\mathbf{v}\|)
    Q\mathbf{v}
    =
    Q\sigma_V(\mathbf{v})
    \label{eq:proof_radial_nonlinearity}
\end{align}
Applied independently to every vector channel, this implies:
\begin{equation}
    \sigma_V(Q\widetilde{\mathbf{V}}_i)
    =
    Q\sigma_V(\widetilde{\mathbf{V}}_i)
\end{equation}
Therefore the residual vector update satisfies:
\begin{align}
    \overline{\mathbf{V}}'_i
    =
    Q\mathbf{V}_i
    +
    \sigma_V(
        Q\widetilde{\mathbf{V}}_i
    )
    =
    Q
    \left(
        \mathbf{V}_i
        +
        \sigma_V(
            \widetilde{\mathbf{V}}_i
        )
    \right)
    =
    Q\mathbf{V}'_i
    \label{eq:proof_updated_vector}
\end{align}
The scalar update contains only linear maps and nonlinearities acting on
invariant scalar quantities. Thus:
\begin{equation}
    \overline{\mathbf{s}}'_i
    =
    \mathbf{s}'_i
    \label{eq:proof_updated_scalar}
\end{equation}

\noindent\textbf{Coordinate update.}
From Equations~\ref{eq:proof_updated_vector} and
\ref{eq:proof_updated_scalar}, we have
$\overline{\mathbf{h}}^{\mathrm{inv}}_i
=
\mathbf{h}^{\mathrm{inv}}_i$, because vector norms are invariant under $Q$.
All arguments of the coordinate network are therefore invariant, and hence:
\begin{equation}
    \overline{\gamma}_{ij}
    =
    \gamma_{ij}
    \label{eq:proof_coordinate_coefficient}
\end{equation}
Using Equation~\ref{eq:proof_relative_vector}:
\begin{align}
    \overline{\Delta\mathbf{x}}_i
    =
    \frac{1}{|\mathcal{N}(i)|}
    \sum_{j\in\mathcal{N}(i)}
    \overline{\gamma}_{ij}
    \overline{\mathbf{r}}_{ij}
    =
    \frac{1}{|\mathcal{N}(i)|}
    \sum_{j\in\mathcal{N}(i)}
    \gamma_{ij}
    Q\mathbf{r}_{ij}
    =
    Q\Delta\mathbf{x}_i
    \label{eq:proof_coordinate_displacement}
\end{align}
Finally:
\begin{align}
    \overline{\mathbf{x}}'_i
    =
    \overline{\mathbf{x}}_i
    +
    \overline{\Delta\mathbf{x}}_i
    =
    Q\mathbf{x}_i+\mathbf{t}
    +
    Q\Delta\mathbf{x}_i
    =
    Q
    \left(
        \mathbf{x}_i+\Delta\mathbf{x}_i
    \right)
    +
    \mathbf{t}
    =
    Q\mathbf{x}'_i+\mathbf{t}
    \label{eq:proof_updated_coordinate}
\end{align}
Equations~\ref{eq:proof_updated_vector}, \ref{eq:proof_updated_scalar}, and \ref{eq:proof_updated_coordinate} are exactly the transformation laws required for $E(n)$-equivariance.
\end{proof}

\begin{remark}[Invariant attention]
The same argument applies when normalized diffusion is replaced by the
attention variant described in Section~\ref{sec:layer}. Receiver-normalized
attention coefficients are computed from invariant edge features and are
therefore unchanged under $E(n)$. Replacing $\nu_{ij}$ by these invariant
scalar coefficients does not alter the covariance argument for the aggregated
vector messages.
\end{remark}

\subsection{Equivariance of the Dynamical Extension}
\label{app:proof_velocity}

\begin{corollary}[Equivariance of the Velocity Update]
\enspace
Suppose $\mathbf{u}_i\mapsto Q\mathbf{u}_i$ under $Q\in O(n)$ and let
$a_i$ be an invariant scalar. If
$\Delta\mathbf{x}_i\mapsto Q\Delta\mathbf{x}_i$, then
Equation~\ref{eq:velocity_update}:
\begin{equation}
    \mathbf{u}'_i
    =
    a_i\mathbf{u}_i+\Delta\mathbf{x}_i,
    \qquad
    \mathbf{x}'_i
    =
    \mathbf{x}_i+\mathbf{u}'_i
\end{equation}
is $E(n)$-equivariant.
\end{corollary}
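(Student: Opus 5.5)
The plan is to verify the two transformation laws directly, treating the positional and rotational parts of $(Q,\mathbf{t})\in E(n)$ separately. Throughout I use the hypothesis $\Delta\mathbf{x}_i\mapsto Q\Delta\mathbf{x}_i$ supplied by the statement (in the ESNN layer this is Equation~\ref{eq:proof_coordinate_displacement} from the proof of Theorem~\ref{thm:layer_equivariance}). I also use the fact that $a_i$ is computed from $\mathbf{h}^{\mathrm{inv}}_i$, which is invariant, so the transformed input yields the same $a_i$.

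\textbf{Velocity.} Denote transformed quantities by an overbar, with $\overline{\mathbf{u}}_i=Q\mathbf{u}_i$, $\overline{a}_i=a_i$ and $\overline{\Delta\mathbf{x}}_i=Q\Delta\mathbf{x}_i$. Linearity of $Q$ then gives
\[
\overline{\mathbf{u}}'_i=a_iQ\mathbf{u}_i+Q\Delta\mathbf{x}_i=Q(a_i\mathbf{u}_i+\Delta\mathbf{x}_i)=Q\mathbf{u}'_i .
\]
Velocities are differences of positions, so they are unaffected by $\mathbf{t}$. This shows the updated velocity transforms as a covariant vector.

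\textbf{Position.} Here $\overline{\mathbf{x}}_i=Q\mathbf{x}_i+\mathbf{t}$, so
\[
\overline{\mathbf{x}}'_i=Q\mathbf{x}_i+\mathbf{t}+Q\mathbf{u}'_i=Q(\mathbf{x}_i+\mathbf{u}'_i)+\mathbf{t}=Q\mathbf{x}'_i+\mathbf{t}.
\]
This is exactly the required $E(n)$ law for coordinates.

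The only point needing care is making sure $\mathbf{t}$ enters only through $\mathbf{x}_i$, never through $\mathbf{u}_i$ or $\Delta\mathbf{x}_i$. For $\Delta\mathbf{x}_i$ this holds because it is built from the relative displacements $\mathbf{r}_{ij}$, which are translation-free. For $\mathbf{u}_i$ it holds because velocities are treated as covariant vectors rather than points.
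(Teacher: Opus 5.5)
Your proof is correct and follows the paper's argument essentially verbatim. You first get $\overline{\mathbf{u}}'_i=Q\mathbf{u}'_i$ from linearity of $Q$ and invariance of $a_i$, then substitute into the position update to get $\overline{\mathbf{x}}'_i=Q\mathbf{x}'_i+\mathbf{t}$. Your remark that $\mathbf{t}$ must enter only through $\mathbf{x}_i$, not through $\mathbf{u}_i$ or $\Delta\mathbf{x}_i$, makes explicit something the paper leaves implicit, since its hypotheses are stated only for $Q\in O(n)$.
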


\begin{proof}
Under $(Q,\mathbf{t})\in E(n)$:
\begin{align}
    \overline{\mathbf{u}}'_i
    =
    a_i Q\mathbf{u}_i
    +
    Q\Delta\mathbf{x}_i
    =
    Q
    \left(
        a_i\mathbf{u}_i
        +
        \Delta\mathbf{x}_i
    \right)
    =
    Q\mathbf{u}'_i
\end{align}
Therefore:
\begin{align}
    \overline{\mathbf{x}}'_i
    =
    Q\mathbf{x}_i+\mathbf{t}
    +
    Q\mathbf{u}'_i
    =
    Q
    \left(
        \mathbf{x}_i+\mathbf{u}'_i
    \right)
    +
    \mathbf{t}
    =
    Q\mathbf{x}'_i+\mathbf{t}
\end{align}
\end{proof}

\subsection{Proof of Theorem~\ref{thm:cylindrical}:
Stabilizer Subequivariance}
\label{app:proof_thm4}

\begin{restatedtheorem}
[Stabilizer Subequivariance]
{thm:cylindrical}
\enspace
Let $\mathbf{g}\neq\mathbf{0}$ be fixed in the ambient coordinate frame and
define:
\begin{equation}
    O_{\mathbf{g}}(n)
    =
    \left\{
        Q\in O(n):
        Q\mathbf{g}=\mathbf{g}
    \right\}
\end{equation}
For $\lambda\neq0$, an ESNN layer conditioned on
Equation~\ref{eq:relaxed_embed} is equivariant under translations and every
$Q\in O_{\mathbf{g}}(n)$. Thus the architecture guarantees equivariance to:
\begin{equation}
    E_{\mathbf{g}}(n)
    =
    O_{\mathbf{g}}(n)\ltimes\mathbb{R}^n
\end{equation}
No equivariance under transformations outside
$O_{\mathbf{g}}(n)$ is enforced by the construction. For $\lambda=0$, full
$E(n)$-equivariance is recovered.
\end{restatedtheorem}

\begin{proof}
The relaxed edge context is defined as:
\begin{equation}
    \mathbf{z}^{\mathrm{relaxed}}_{ij}
    =
    \left[
        \mathbf{z}_{ij},
        \lambda
        \langle
            \mathbf{r}_{ij},
            \mathbf{g}
        \rangle
    \right]
    \label{eq:proof_relaxed_context}
\end{equation}
The original context $\mathbf{z}_{ij}$ is $O(n)$-invariant by the invariant-edge-context argument in Appendix~\ref{eq:proof_edge_context_invariant}. It remains to characterize the
transformation of the additional directional scalar. First consider a translation
$\mathbf{x}_i\mapsto\mathbf{x}_i+\mathbf{t}$. Relative displacements are
unchanged:
\begin{equation}
    (\mathbf{x}_i+\mathbf{t})
    -
    (\mathbf{x}_j+\mathbf{t})
    =
    \mathbf{r}_{ij}
\end{equation}
and therefore
$\langle\mathbf{r}_{ij},\mathbf{g}\rangle$ is translation invariant. Now let $Q\in O_{\mathbf{g}}(n)$. Since $Q\mathbf{g}
    =
    \mathbf{g}$, orthogonality also implies $Q^\top\mathbf{g}
    =
    \mathbf{g}$. Hence:
\begin{align}
    \left\langle
        Q\mathbf{r}_{ij},
        \mathbf{g}
    \right\rangle
    =
    \left\langle
        \mathbf{r}_{ij},
        Q^\top\mathbf{g}
    \right\rangle
    &=
    \left\langle
        \mathbf{r}_{ij},
        \mathbf{g}
    \right\rangle
    \label{eq:proof_stabilizer_projection}
\end{align}
Thus the full relaxed context is invariant:
\begin{equation}
    \overline{\mathbf{z}}^{\mathrm{relaxed}}_{ij}
    =
    \mathbf{z}^{\mathrm{relaxed}}_{ij}
    \qquad
    \forall Q\in O_{\mathbf{g}}(n)
\end{equation}

The relaxed scalar is used only to parameterize scalar coefficients such as
transport gates or aggregation weights. Since these coefficients remain
unchanged under $O_{\mathbf{g}}(n)$, while the spatial transport operators
retain the covariance laws of Section~\ref{sec:transports}, every step in the
proof of Theorem~\ref{thm:layer_equivariance} remains valid after restricting
$Q$ from $O(n)$ to $O_{\mathbf{g}}(n)$. Together with translation
equivariance, this proves equivariance to
$E_{\mathbf{g}}(n)$.

It remains to clarify what happens outside the stabilizer. Let
$Q\notin O_{\mathbf{g}}(n)$. Then:
\begin{equation}
    Q^\top\mathbf{g}
    \neq
    \mathbf{g}
\end{equation}
The two linear functionals:
\begin{equation}
    \mathbf{r}
    \longmapsto
    \langle\mathbf{r},Q^\top\mathbf{g}\rangle
    \qquad\text{and}\qquad
    \mathbf{r}
    \longmapsto
    \langle\mathbf{r},\mathbf{g}\rangle
\end{equation}
are therefore distinct. Consequently, there exists a displacement
$\mathbf{r}$ such that:
\begin{equation}
    \langle Q\mathbf{r},\mathbf{g}\rangle
    \neq
    \langle\mathbf{r},\mathbf{g}\rangle
    \label{eq:proof_outside_stabilizer}
\end{equation}

Since $\lambda\neq0$, Equation~\ref{eq:proof_outside_stabilizer} also implies:
\begin{equation}
    \lambda
    \langle Q\mathbf{r},\mathbf{g}\rangle
    \neq
    \lambda
    \langle\mathbf{r},\mathbf{g}\rangle
\end{equation}
Hence the relaxed edge context is not structurally invariant under such a
transformation, so the architecture does not enforce equivariance outside
$O_{\mathbf{g}}(n)$. Particular learned parameters may ignore the directional
feature and exhibit a larger symmetry; the theorem concerns the symmetry
guaranteed by the architecture.

Finally, when $\lambda=0$, the second component of
Equation~\ref{eq:proof_relaxed_context} vanishes identically, independently of
$\mathbf{g}$. The edge context then reduces to the original invariant context
$\mathbf{z}_{ij}$, and Theorem~\ref{thm:layer_equivariance} recovers full
$E(n)$-equivariance.
\end{proof}

\section{Experimental Details}
\label{app:experimental_details}

This section provides additional details on dataset construction, prediction
targets, and evaluation procedures for the experiments presented in
Section~\ref{sec:exp}. We also report the full \textsc{QM9}
molecular-property benchmark, providing a property-wise view of ESNN
performance.

\subsection{Charged N-Body Dynamics}
\label{app:nbody_details}

The first task is a three-dimensional charged N-body system, a standard
benchmark for equivariant dynamics prediction
~\citep{fuchs2020se,satorras2022enequivariantgraphneural}. The system contains
five particles, each associated with a position, velocity, and positive or
negative charge. Their trajectories are determined by pairwise attractive or
repulsive interactions, and the learning problem consists of predicting the
future particle coordinates from an earlier system state.

\noindent\textbf{Dataset and Physical System.}
The benchmark consists of $N=5$ particles moving in three-dimensional
Euclidean space. Particle $i$ is described by its position
$\mathbf{x}_i\in\mathbb{R}^{3}$, velocity
$\mathbf{v}_i\in\mathbb{R}^{3}$, and charge
$q_i\in\{-1,+1\}$. Charges are sampled independently with equal probability,
and particles interact through pairwise Coulomb forces. The original dataset
contains 50,000 training trajectories, 2,000 validation trajectories, and
2,000 test trajectories. Following
\citet{satorras2022enequivariantgraphneural}, we use the reduced-data setting
with 3,000 training, 2,000 validation, and 2,000 test trajectories.

\noindent\textbf{Initial Conditions and Numerical Integration.}
Initial positions are sampled independently as:
\begin{equation}
    \mathbf{x}_i^{0}\sim\mathcal{N}(\mathbf{0},I_3)
\end{equation}
while initial velocity directions are sampled from a Gaussian distribution
and subsequently normalized such that
$\|\mathbf{v}_i^{0}\|_2=0.5$.
Trajectories are generated using leapfrog integration with numerical time
step $\delta t=10^{-3}$. Each trajectory contains 5,000 integration steps,
and particle states are recorded every 100 steps, corresponding to $0.1$
simulation-time units between consecutive stored frames. For numerical
stability, every Cartesian component of the interaction force is clipped to
the interval $[-100,100]$ before the velocity update.

\noindent\textbf{Graph Construction.}
Each system state is represented as a complete directed graph without
self-loops:
\begin{equation}
    \mathcal{E}
    =
    \left\{
        (i,j)
        \;\middle|\;
        i,j\in\{1,\ldots,5\},\; i\neq j
    \right\}
\end{equation}
Every graph therefore contains five nodes and twenty directed edges.
The dataset stores the charge product associated with the interaction
$j\rightarrow i$:
\begin{equation}
    e_{ij}=q_iq_j
\end{equation}
ESNN uses the particle charges as invariant scalar node features, while the
stored charge products are retained for compatibility with the standard
N-body data format. Positions and velocities are treated as covariant
three-dimensional vectors.

\noindent\textbf{Prediction Task and Evaluation.}
For each trajectory, the model receives the particle state at stored frame
$6$ and predicts the coordinates at stored frame $8$:
\begin{equation}
    \left\{
        \mathbf{x}_i^{(6)},
        \mathbf{v}_i^{(6)},
        q_i
    \right\}_{i=1}^{5}
    \longmapsto
    \left\{
        \widehat{\mathbf{x}}_i^{(8)}
    \right\}_{i=1}^{5}
\end{equation}
The prediction horizon therefore corresponds to $200$ numerical integration
steps, or $0.2$ simulation-time units. Models are trained by minimizing the
mean squared coordinate error:
\begin{equation}
    \mathcal{L}_{\mathrm{NBody}}
    =
    \frac{1}{BN}
    \sum_{b=1}^{B}
    \sum_{i=1}^{N}
    \left\|
        \widehat{\mathbf{x}}_{b,i}^{(8)}
        -
        \mathbf{x}_{b,i}^{(8)}
    \right\|_2^2 
\end{equation}
Validation MSE is used for model selection, and the checkpoint obtaining the
lowest validation error is evaluated on the held-out test partition.

\subsection{Gravity-Augmented N-Body Dynamics}
\label{app:nbody_gravity_details}

The second N-body task extends the charged-particle benchmark of
Appendix~\ref{app:nbody_details} with a uniform gravitational field. Unless
stated otherwise, we use the same particle system, initial-condition
distribution, graph construction, numerical integration scheme, and coordinate
prediction objective as in the standard benchmark. The key difference is that
gravity introduces a preferred ambient direction, reducing the symmetry of the
dynamics while preserving translations. This setting therefore provides a
direct test of the controlled symmetry relaxation introduced in
Section~\ref{sec:soft_symmetry}.

\begin{figure*}[t]
    \centering
    \includegraphics[width=0.82\textwidth]{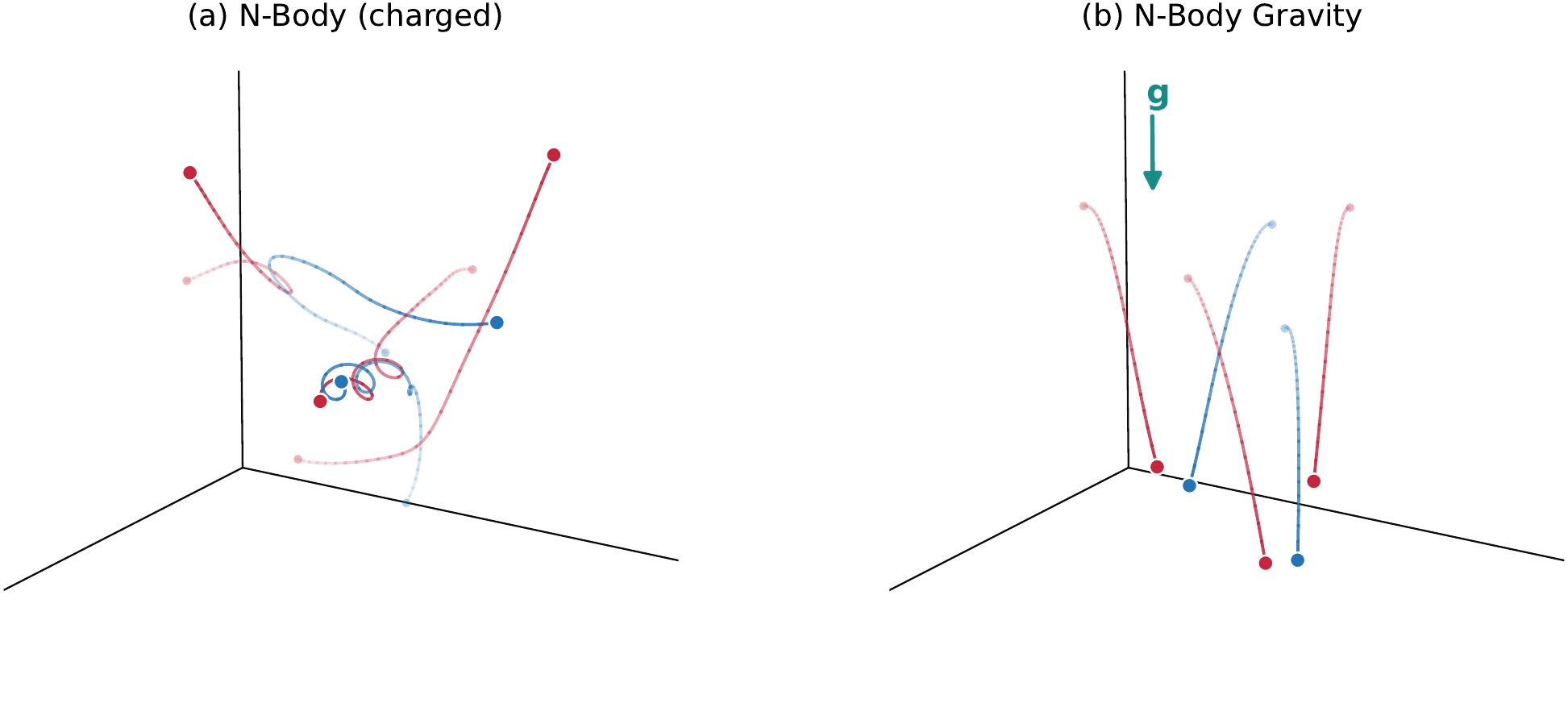}
    \caption{
    \emph{Charged N-body dynamics with and without a preferred ambient direction.}
    (a) In the standard benchmark, particle motion is governed only by pairwise
    Coulomb interactions and the dynamics retain full Euclidean symmetry.
    (b) The gravity variant additionally applies the uniform acceleration
    $\mathbf{a}_{\mathbf g}$ along the preferred direction
    $\widehat{\mathbf g}_{\mathrm{true}}$. Translations remain symmetries,
    while the orthogonal symmetry is reduced to the subgroup
    $O_{\mathbf g}(3)$ that preserves the gravity axis.
    }
    \label{fig:nbody_gravity_comparison}
\end{figure*}

\noindent\textbf{Gravity-Augmented Dynamics.}
In addition to the pairwise Coulomb interactions of
Appendix~\ref{app:nbody_details}, every particle experiences the uniform
gravitational acceleration:
\begin{equation}
    \mathbf{a}_{\mathbf{g}}
    =
    (0,0,-9.81)^\top
\end{equation}
The Coulomb contribution is computed and component-wise clipped using the same
procedure as in the standard benchmark, after which the gravitational
acceleration is added to the particle dynamics. The corresponding unit gravity
direction is:
\begin{equation}
    \widehat{\mathbf{g}}_{\mathrm{true}}
    =
    (0,0,-1)^\top
    \label{eq:true_gravity_direction}
\end{equation}
Because the gravitational field is uniform and the simulation contains no
fixed spatial boundary, translations remain symmetries of the system. The
orthogonal symmetry is instead restricted to transformations that preserve the
gravity direction:
\begin{equation}
    E_{\mathbf{g}}(3)
    =
    O_{\mathbf{g}}(3)\ltimes\mathbb{R}^3,
    \qquad
    O_{\mathbf{g}}(3)
    =
    \left\{
        Q\in O(3)
        \;\middle|\;
        Q\widehat{\mathbf{g}}_{\mathrm{true}}
        =
        \widehat{\mathbf{g}}_{\mathrm{true}}
    \right\}
\end{equation}
with $O_{\mathbf{g}}(3)\cong O(2)$. Orthogonal transformations that change
the gravity direction are therefore no longer symmetries of the dynamics.

We generate a separate gravity-augmented dataset using the same initial
position, velocity, and charge distributions as
Appendix~\ref{app:nbody_details}. The dataset contains 3,000 training, 2,000
validation, and 2,000 test trajectories generated with random seed $43$, using
the same integration and frame-sampling convention as the standard benchmark.
No observation noise is added. The graph topology is also unchanged. ESNN
receives the particle charges as invariant scalar node features together with
the covariant positions and velocities; the stored charge products $q_iq_j$
are retained for compatibility with the standard N-body baseline data format.

\noindent\textbf{Prediction Protocol.}
Unlike the standard task, the gravity benchmark uses an earlier observation
time and a longer prediction horizon. For each trajectory, the model observes
stored frame $0$ and predicts the particle coordinates at stored frame $10$:
\begin{equation}
    \left\{
        \mathbf{x}^{(0)}_i,
        \mathbf{v}^{(0)}_i,
        q_i
    \right\}_{i=1}^{5}
    \longmapsto
    \left\{
        \widehat{\mathbf{x}}^{(10)}_i
    \right\}_{i=1}^{5}
    \label{eq:nbody_gravity_prediction}
\end{equation}
Under the dataset sampling convention, stored frame $0$ is the first recorded
state after 100 integration steps, while stored frame $10$ corresponds to the
state after 1,100 steps. The prediction horizon is therefore one
simulation-time unit.

Using an early observation time is important for the intended diagnostic. As
the trajectory evolves, the accumulated free-fall velocity itself becomes a
covariant cue for the gravity axis and can reveal directional information even
to a strictly $E(3)$-equivariant architecture. Observing the system early
reduces this cue, while the longer prediction horizon gives the gravitational
field sufficient time to influence the target coordinates. Training,
checkpoint selection, and test evaluation otherwise follow the coordinate-MSE
protocol of Appendix~\ref{app:nbody_details}.

\noindent\textbf{Preferred-Direction Variants.}
We compare three matched ESNN variants that differ only in how the preferred
ambient direction is represented. In the \emph{None} setting, no preferred
direction or symmetry-relaxation coefficient is introduced, and the model
remains fully $E(3)$-equivariant. In the \emph{Fixed} setting, the true unit
gravity direction $\widehat{\mathbf{g}}_{\mathrm{true}}$ is supplied as a
non-trainable global vector. In the \emph{Learned} setting, the preferred
direction is instead represented by a single trainable global vector
$\mathbf{g}_{\theta}\in\mathbb{R}^3$, shared across all layers and initialized
as:
\begin{equation}
    \mathbf{g}_{\theta}^{(0)}
    =
    0.01\,\boldsymbol{\epsilon},
    \qquad
    \boldsymbol{\epsilon}\sim\mathcal{N}(\mathbf{0},\mathbf{I}_3)
\end{equation}
so that its orientation must be inferred from the observed dynamics.

For each directed interaction $j\rightarrow i$, an active
symmetry-relaxation pathway augments the invariant edge context with the
directional scalar:
\begin{equation}
    z_{ij}^{\mathbf{g},(\ell)}
    =
    \lambda_{\mathbf{g}}^{(\ell)}
    \left\langle
        \mathbf{r}_{ij},
        \mathbf{g}
    \right\rangle,
    \qquad
    \mathbf{r}_{ij}
    =
    \mathbf{x}_i-\mathbf{x}_j
    \label{eq:gravity_projection}
\end{equation}
where $\mathbf{g}=\widehat{\mathbf{g}}_{\mathrm{true}}$ for \emph{Fixed} and
$\mathbf{g}=\mathbf{g}_{\theta}$ for \emph{Learned}. The preferred direction
is shared globally, whereas the implementation uses a separate relaxation
coefficient $\lambda_{\mathbf{g}}^{(\ell)}$ for each active transport layer.
Every coefficient is initialized exactly at zero, so the directional pathway
is initially inactive and full $E(3)$-equivariance is recovered at
initialization. During training, the coefficients may depart from zero when
the preferred direction is useful for the prediction task. In the
\emph{Learned} variant, this also enables gradients to update
$\mathbf{g}_{\theta}$ and infer its orientation from the data. No additional
penalty on $\mathbf{g}_{\theta}$ or
$\lambda_{\mathbf{g}}^{(\ell)}$ is used in this benchmark; the initial
symmetry prior is imposed through the zero initialization of the relaxation
coefficients.

\noindent\textbf{Learned-Direction Diagnostic.}
Prediction error alone does not establish whether the \emph{Learned} variant
has recovered the physical symmetry-breaking direction. We therefore measure
the orientation of the learned global vector relative to the true gravity
axis. Because the directional pathway depends on
$\lambda_{\mathbf{g}}^{(\ell)}
\langle\mathbf{r}_{ij},\mathbf{g}_{\theta}\rangle$, simultaneously reversing
the sign of $\mathbf{g}_{\theta}$ and the relaxation coefficients leaves the
directional contribution unchanged. We consequently use the sign-independent
alignment:
\begin{equation}
    A_{\mathbf{g}}
    =
    \left|
        \left\langle
            \frac{\mathbf{g}_{\theta}}
                 {\|\mathbf{g}_{\theta}\|_2},
            \widehat{\mathbf{g}}_{\mathrm{true}}
        \right\rangle
    \right|
    \label{eq:gravity_alignment}
\end{equation}
where $A_{\mathbf{g}}=1$ corresponds to recovery of the gravity axis up to
sign and $A_{\mathbf{g}}=0$ to an orthogonal direction.

We report this alignment together with the effective directional scale
$\max_{\ell}|\lambda_{\mathbf{g}}^{(\ell)}|
\|\mathbf{g}\|_2$. The magnitude of $\mathbf{g}_{\theta}$ alone is not
meaningful because the directional signal depends jointly on
$\mathbf{g}_{\theta}$ and $\lambda_{\mathbf{g}}^{(\ell)}$. High alignment is
therefore interpreted as evidence of recovered directional structure only
when the directional pathway is also active.

\subsection{ModelNet40 Point-Cloud Classification}
\label{app:modelnet40_details}

ModelNet40~\citep{wu20153d} tests graph-level classification under controlled
changes in object orientation. The dataset contains CAD models from 40 object
categories; each object is represented as a point cloud and then as a local
geometric graph.

\begin{figure*}[t]
    \centering
    \includegraphics[width=0.96\textwidth]{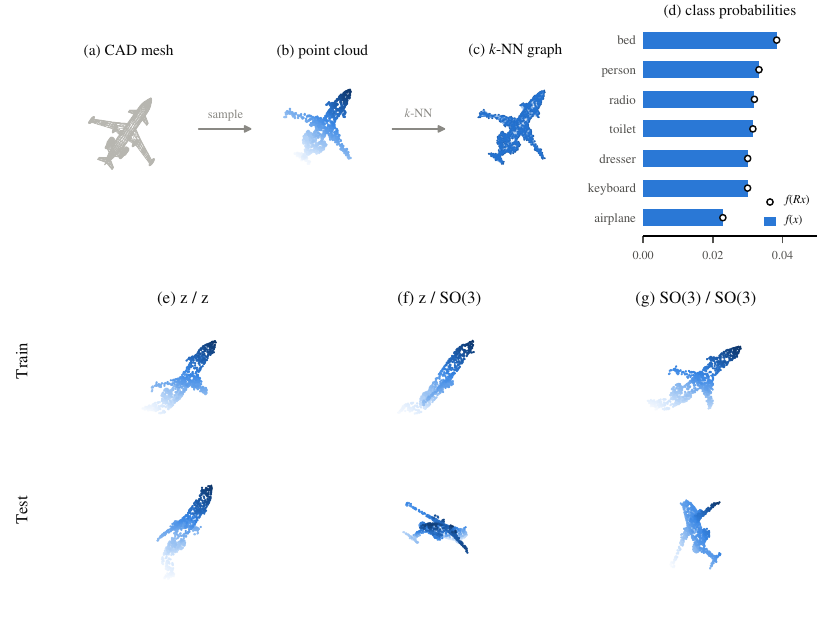}
    \caption{
    \emph{ModelNet40 graph construction and rotation-generalization protocol.}
    (a)--(c) Each CAD model is sampled as a point cloud and converted into a local
    $k$-nearest-neighbor graph. (d) Classification uses an invariant graph-level
    readout, so a global rotation of the input should leave the predicted class
    unchanged. (e)--(g) We evaluate the $z/z$, $z/\mathrm{SO}(3)$, and
    $\mathrm{SO}(3)/\mathrm{SO}(3)$ train/test protocols. The
    $z/\mathrm{SO}(3)$ setting directly tests generalization to arbitrary
    three-dimensional orientations that are not observed during training.
    }
    \label{fig:modelnet40_task}
\end{figure*}

\noindent\textbf{Point Cloud and Graph Construction.}
For an input object, let $X=
    \left\{
        \mathbf{x}_1,\ldots,\mathbf{x}_N
    \right\},
    \space \space \space
    \mathbf{x}_i\in\mathbb{R}^{3}$ denote its sampled point cloud. We construct
a $k$-nearest-neighbor graph using Euclidean distances in the input geometry.
Geometric interactions are expressed through the relative displacement:
\begin{equation}
    \mathbf{r}_{ij}
    =
    \mathbf{x}_i-\mathbf{x}_j
\end{equation}
which transforms covariantly under a global rotation while its norm remains
invariant. Node-level outputs are aggregated by an invariant graph readout for
40-way classification.

\noindent\textbf{Rotation Protocol.}
Following the standard protocol for rotation-robust point-cloud
classification, we consider three train/test settings:
\begin{align}
    z/z
    &: \quad
    R_{\mathrm{train}}\in SO(2)_z,
    \qquad
    R_{\mathrm{test}}\in SO(2)_z,
    \\
    z/\mathrm{SO}(3)
    &: \quad
    R_{\mathrm{train}}\in SO(2)_z,
    \qquad
    R_{\mathrm{test}}\in SO(3),
    \\
    \mathrm{SO}(3)/\mathrm{SO}(3)
    &: \quad
    R_{\mathrm{train}}\in SO(3),
    \qquad
    R_{\mathrm{test}}\in SO(3)
\end{align}
Here, $SO(2)_z$ denotes rotations around the vertical $z$ axis, whereas
$SO(3)$ denotes arbitrary three-dimensional rotations. The $z/z$ regime
matches the restricted train and test rotation families. The
$z/\mathrm{SO}(3)$ regime measures out-of-distribution rotation
generalization, while $\mathrm{SO}(3)/\mathrm{SO}(3)$ evaluates performance
when arbitrary rotations are observed during both training and testing.

\noindent\textbf{Prediction Objective and Evaluation.}
The graph representation is mapped to logits over the 40 object categories
and trained with categorical cross-entropy:
\begin{equation}
    \mathcal{L}_{\mathrm{cls}}
    =
    -\frac{1}{B}
    \sum_{b=1}^{B}
    \log
    p_\theta
    \left(
        y_b \mid X_b
    \right)
\end{equation}
We report test classification accuracy for each rotation protocol. Since the
target is invariant, an exactly rotation-invariant classifier should satisfy:
\begin{equation}
    f(RX)=f(X),
    \qquad
    \forall R\in SO(3)
\end{equation}
up to numerical precision. The corresponding classification results are
reported in Table~\ref{tab:modelnet40_results}.

\subsection{Mesh-Based Physical Dynamics}
\label{app:meshgraphnets_details}

We use three MeshGraphNets~\citep{pfaff2021learning} physical-simulation domains: \textsc{CylinderFlow},
\textsc{DeformingPlate}, and \textsc{Airfoil}.
All three use unstructured meshes but differ in physical regime, state
representation, and boundary geometry. They provide complementary settings
for comparing the ESNN transport families on direction-dependent fields.

\begin{figure*}[t]
    \centering
    \includegraphics[width=0.96\textwidth]{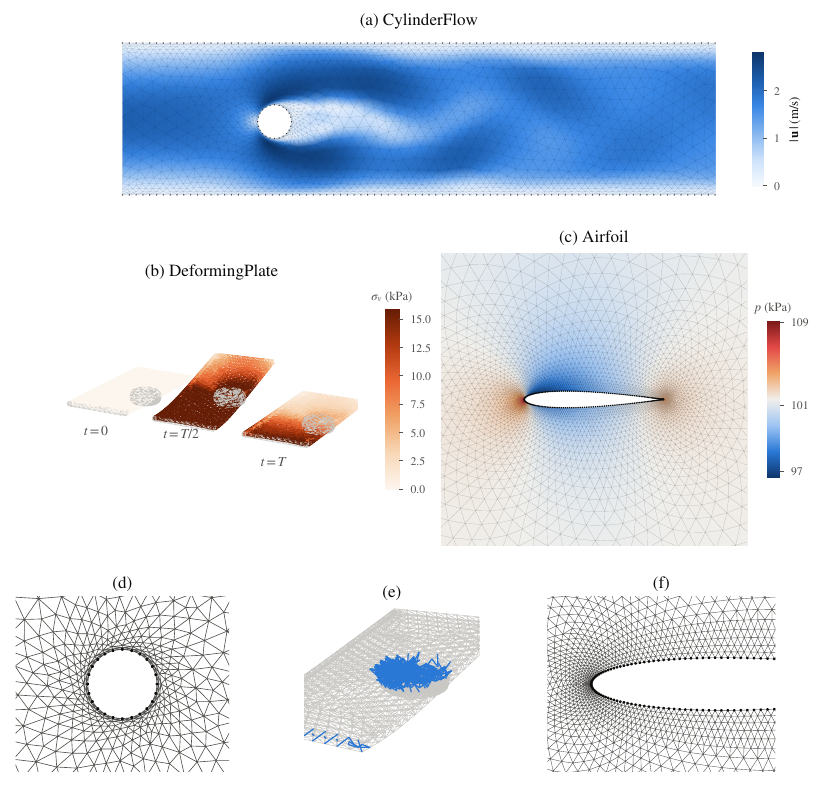}
    \caption{
    \emph{Mesh-based physical simulation benchmarks.}
    (a) \textsc{CylinderFlow}: incompressible flow around a cylindrical obstacle,
    shown through the velocity field. (b) \textsc{DeformingPlate}: deformation of
    a hyper-elastic plate over time, illustrated using von Mises stress.
    (c) \textsc{Airfoil}: compressible flow around an airfoil, shown through the
    pressure field. (d)--(f) Corresponding local views of the unstructured meshes
    and boundary geometries. Together, the three domains span fluid and structural
    dynamics on irregular discretizations with both scalar and vector physical
    states.
    }
    \label{fig:meshgraphnets_tasks}
\end{figure*}

\noindent\textbf{Common Graph and Feature Representation.}
At time $t$, a physical state is represented by a mesh:
\begin{equation}
    \mathcal{M}^{t}
    =
    \left(
        \mathcal{V},
        \mathcal{E},
        \mathbf{q}^{t}
    \right)
\end{equation}
where $\mathcal{V}$ denotes mesh vertices, $\mathcal{E}$ the mesh
connectivity, and $\mathbf{q}^{t}$ the dynamical state sampled at the
vertices. Mesh connections are represented as bidirectional graph edges.
For an edge $(i,j)$, ESNN receives the relative geometric displacement
$\mathbf{r}_{ij}$ together with invariant geometric quantities such as
$\|\mathbf{r}_{ij}\|$. Vector-valued physical quantities are stored in
covariant channels, whereas node types and scalar physical quantities are
represented by invariant channels. Thus the field type determines the ESNN
representation, while mesh displacements supply the local geometry.

\noindent\textbf{CylinderFlow: State and Target.}
\textsc{CylinderFlow} models incompressible flow around a cylindrical
obstacle on a fixed two-dimensional Eulerian mesh. Mesh nodes represent the
fluid domain and its boundaries, while node types distinguish fluid,
wall, inflow, and outflow locations. The dynamical state contains the
two-dimensional flow momentum, from which the velocity field is obtained,
together with the scalar pressure field. The model predicts the temporal
change of the momentum field and the pressure at the next state. The fixed
obstacle and inflow/outflow boundaries distinguish spatial directions in the
flow domain.

\noindent\textbf{DeformingPlate: State and Target.}
\textsc{DeformingPlate} is a Lagrangian structural-mechanics problem in which
a hyper-elastic plate is deformed by a kinematic actuator. Each mesh vertex
has a reference position and a time-dependent world-space position.
Node types distinguish the deformable plate from actuator vertices.
The model predicts the Lagrangian velocity used to advance the mesh
coordinates and the scalar von-Mises stress $\sigma_v$ at every node. The
reference and world-space geometries distinguish material configuration from
the evolving deformation.

\noindent\textbf{Airfoil: State and Target.}
\textsc{Airfoil} models compressible aerodynamic flow around a
two-dimensional airfoil cross-section. The state contains the vector
momentum field together with scalar density and pressure. The model predicts
changes in momentum and density and directly estimates the pressure field.
The airfoil boundary and incident flow establish preferred directions in the
simulation domain.

\noindent\textbf{One-Step Training Objective.}
Following the MeshGraphNets evaluation protocol, the model is first
evaluated on the prediction of the next physical state from the current
state:
\begin{equation}
    \widehat{\mathcal{M}}^{t+1}
    =
    F_\theta
    \left(
        \mathcal{M}^{t}
    \right)
\end{equation}
Training uses per-node supervision on the task-specific predicted dynamical
quantities. For a generic vector or scalar target
$\mathbf{y}_i^{t+1}$, the one-step objective takes the form:
\begin{equation}
    \mathcal{L}_{\mathrm{step}}
    =
    \frac{1}{|\mathcal{V}|}
    \sum_{i\in\mathcal{V}}
    \left\|
        \widehat{\mathbf{y}}_i^{t+1}
        -
        \mathbf{y}_i^{t+1}
    \right\|_2^2
\end{equation}

\noindent\textbf{Rollout Evaluation and Metrics.}
A one-step predictor is recursively applied at inference time to produce a
trajectory:
\begin{equation}
    \widehat{\mathcal{M}}^{t+h}
    =
    F_\theta
    \left(
        \widehat{\mathcal{M}}^{t+h-1}
    \right),
    \qquad h=1,\ldots,H
\end{equation}
Predicted states are fed back to the model, so rollout errors include
accumulation across steps. We report RMSE for one-step prediction, a 50-step
rollout, and the complete trajectory, as shown in
Table~\ref{tab:meshgraphnets_results}.

\subsection{QM9 Molecular-Property Prediction}
\label{app:qm9_details}

As a supplementary benchmark, we evaluate invariant quantum-chemical property
prediction on \textsc{QM9}~\citep{ramakrishnan2014quantum}. Unlike the dynamics
tasks, QM9 maps a static three-dimensional molecular geometry to a graph-level
invariant target.

\noindent\textbf{Dataset and Preprocessing.}
QM9 contains 133,885 equilibrium molecular geometries together with
quantum-chemical properties computed using density functional theory.
The molecules contain hydrogen and up to nine heavy atoms selected from
carbon, nitrogen, oxygen, and fluorine. Following the standard preprocessing
protocol in \citet{satorras2022enequivariantgraphneural}, molecules failing
geometric-consistency checks are removed, resulting in 130,831 examples.

\noindent\textbf{Graph and Input Features.}
Each molecule is represented as a geometric graph
$\mathcal{G}=(\mathcal{V},\mathcal{E})$. Atom $i$ has Cartesian coordinate
$\mathbf{x}_i\in\mathbb{R}^{3}$ and atomic number:
\begin{equation}
    Z_i\in\{1,6,7,8,9\}
\end{equation}
corresponding respectively to H, C, N, O, and F. Atomic numbers are embedded
as invariant scalar features. Geometric interactions use relative
displacements:
\begin{equation}
    \mathbf{r}_{ij}
    =
    \mathbf{x}_i-\mathbf{x}_j
\end{equation}
ensuring that the representation is independent of absolute molecular
position. When chemical bond information is used, its embedding is included
as an invariant edge attribute.

\noindent\textbf{Prediction Targets.}
We evaluate the twelve standard quantum-chemical properties:
\begin{equation}
    \alpha,\;
    \Delta\epsilon,\;
    \epsilon_{\mathrm{HOMO}},\;
    \epsilon_{\mathrm{LUMO}},\;
    \mu,\;
    C_v,\;
    G,\;
    H,\;
    \langle R^2\rangle,\;
    U,\;
    U_0,\;
    \mathrm{ZPVE}
\end{equation}
These correspond to isotropic polarizability, the HOMO--LUMO energy gap,
HOMO and LUMO energies, dipole moment, heat capacity, free energy, enthalpy,
electronic spatial extent, internal energies at $298.15$ K and $0$ K, and
zero-point vibrational energy. A separate model is trained for each target.
Performance is measured using mean absolute error:
\begin{equation}
    \operatorname{MAE}
    =
    \frac{1}{|\mathcal{D}_{\mathrm{test}}|}
    \sum_{m\in\mathcal{D}_{\mathrm{test}}}
    \left|
        \widehat{y}_m-y_m
    \right|
\end{equation}

\noindent\textbf{Results.}
Table~\ref{tab:qm9_results} reports property-wise MAE across the twelve QM9
targets. Relative to EGNN, the strongest ESNN variant achieves lower error on
nine of the twelve properties, including the polarizability $\alpha$, orbital
gap $\Delta\epsilon$, dipole moment $\mu$, and several orbital-energy and
thermodynamic targets. EGNN retains lower error on $C_v$, $H$, and
$\langle R^2\rangle$. Among the ESNN transport families,
Radial--Tangential Transport gives the best result on the majority of
properties, indicating that the additional geometric transport remains useful
even though the final molecular predictions are invariant scalars. Several
architectures developed specifically for molecular-property prediction still
achieve lower absolute errors on individual QM9 targets. We therefore use this
benchmark as a complementary test of the transport mechanism rather than as a
molecular state-of-the-art comparison: within the first-order geometric
setting represented by EGNN, matrix-valued equivariant transport improves
prediction across most of the evaluated properties.

\begin{table*}[t]
\centering
\scriptsize
\setlength{\tabcolsep}{2.8pt}
\renewcommand{\arraystretch}{0.95}

\caption{
\emph{QM9 molecular-property prediction.}
Mean absolute error (MAE) on twelve invariant molecular properties.
Baseline values and data-partition annotations follow the comparison reported
by \citet{aykent2025gotennet}. A $\dagger$ denotes results obtained using
different data partitions and should therefore not be interpreted as a strictly
matched comparison. Within the ESNN block, the best result for each property
is shown in \textbf{bold}. Target units follow the standard QM9 reporting
convention shown in the second header row. Lower is better.
}
\label{tab:qm9_results}

\resizebox{\textwidth}{!}{%
\begin{tabular}{@{}lcccccccccccc@{}}
\toprule

\textbf{Method}
& $\boldsymbol{\alpha}$
& $\boldsymbol{\Delta\epsilon}$
& $\boldsymbol{\epsilon_{\mathrm{HOMO}}}$
& $\boldsymbol{\epsilon_{\mathrm{LUMO}}}$
& $\boldsymbol{\mu}$
& $\boldsymbol{C_v}$
& $\boldsymbol{G}$
& $\boldsymbol{H}$
& $\boldsymbol{\langle R^2\rangle}$
& $\boldsymbol{U}$
& $\boldsymbol{U_0}$
& \textbf{ZPVE} \\

\textbf{Units}
& $\mathrm{m}a_0^3$
& meV
& meV
& meV
& mD
& $\mathrm{mcal\,mol^{-1}\,K^{-1}}$
& meV
& meV
& $\mathrm{m}a_0^2$
& meV
& meV
& meV \\

\midrule
\multicolumn{13}{l}{\emph{Invariant models}} \\
\midrule

Cormorant
& 85
& 61
& 34
& 38
& 38
& 26
& 20
& 21
& 961
& 21
& 22
& 2.03 \\

NMP
& 92
& 69
& 43
& 38
& 30
& 40
& 19
& 17
& 180
& 20
& 20
& 1.50 \\

DimeNet++$\dagger$
& 44
& 32.6
& 24.6
& 19.5
& 29.7
& 23
& 7.56
& 6.53
& 331
& 6.28
& 6.32
& 1.21 \\

ComENet$\dagger$
& 45
& 32.4
& 23.1
& 19.8
& 24.5
& 22
& 7.98
& 6.86
& 259
& 6.82
& 6.69
& 1.20 \\

SphereNet$\dagger$
& 46
& 31.1
& 22.8
& 18.9
& 24.5
& 22
& 7.78
& 6.33
& 268
& 6.36
& 6.26
& 1.12 \\

\midrule
\multicolumn{13}{l}{\emph{Scalarization-based models}} \\
\midrule

ClofNet
& 63
& 53
& 33
& 25
& 40
& 27
& 9
& 9
& 610
& 9
& 8
& 1.23 \\

EGNN
& 71
& 48
& 29
& 25
& 29
& 31
& 12
& 12
& 106
& 12
& 11
& 1.55 \\

PaiNN$\dagger$
& 45
& 45.7
& 27.6
& 20.4
& 12.0
& 24
& 7.35
& 5.98
& 66
& 5.83
& 5.85
& 1.28 \\

LEFTNet
& 48
& 40
& 24
& 18
& 12
& 23
& 7
& 6
& 109
& 7
& 6
& 1.33 \\

EQGAT
& 53
& 32
& 20
& 16
& 11
& 24
& 23
& 24
& 382
& 25
& 25
& 2.00 \\

ET
& 59
& 36.1
& 20.3
& 17.5
& 11
& 26
& 7.62
& 6.16
& 33
& 6.38
& 6.15
& 1.84 \\

Geoformer
& 40
& 33.8
& 18.4
& 15.4
& 10
& 22
& 6.13
& 4.39
& 28
& 4.41
& 4.43
& 1.28 \\

SaVeNet-B$\dagger$
& 39
& 24.8
& 18.4
& 16.3
& 9.3
& 23
& 6.64
& 5.43
& 58
& 5.48
& 5.43
& 1.18 \\

\midrule
\multicolumn{13}{l}{\emph{Equivariant Sheaf Neural Networks}} \\
\midrule

\textbf{ESNN-Id}
& 63.08
& 65.35
& 43.13
& 28.52
& 58.81
& 37.83
& 15.76
& 18.71
& 749.44
& --
& 10.54
& 1.54 \\

\textbf{ESNN-Diag}
& 62.16
& 43.69
& 27.44
& 21.43
& 17.15
& \textbf{31.57}
& \textbf{11.48}
& \textbf{12.71}
& 134.99
& 11.77
& 13.57
& 1.62 \\

\textbf{ESNN-Ortho}
& 79.42
& 43.35
& \textbf{25.10}
& 23.17
& 21.77
& 36.50
& 17.76
& 26.07
& 323.52
& 18.59
& 25.25
& 1.94 \\

\textbf{ESNN-RadTan}
& \textbf{60.32}
& \textbf{39.01}
& 25.16
& \textbf{19.17}
& \textbf{16.08}
& \textbf{31.57}
& 14.77
& 13.86
& \textbf{119.11}
& \textbf{10.63}
& \textbf{10.34}
& \textbf{1.45} \\

\bottomrule
\end{tabular}%
}
\end{table*}

\newpage
\end{document}